\documentclass{article}

\usepackage{fullpage}
\usepackage[utf8]{inputenc}
\usepackage[T1]{fontenc}
\usepackage{textcomp}
\usepackage{bm}
\usepackage{dsfont}
\usepackage{newtxtext}

\usepackage{amsmath}
\usepackage{amssymb}
\usepackage{amsthm}
\usepackage{amsfonts}       

\usepackage{mathtools}
\DeclarePairedDelimiter{\prn}{(}{)}
\DeclarePairedDelimiter{\set}{\{}{\}}
\DeclarePairedDelimiterX{\Set}[2]{\{}{\}}{\,{#1}\,:\,{#2}\,}
\DeclarePairedDelimiter{\abs}{|}{|}
\DeclarePairedDelimiter{\norm}{\|}{\|}
\DeclarePairedDelimiter{\inpr}{\langle}{\rangle}

\DeclarePairedDelimiter{\brc}{[}{]}
\DeclarePairedDelimiterX{\Brc}[2]{[}{]}{\,{#1}\,\middle|\,{#2}\,}

\DeclareFontFamily{U}{mathx}{}
\DeclareFontShape{U}{mathx}{m}{n}{<-> mathx10}{}
\DeclareSymbolFont{mathx}{U}{mathx}{m}{n}
\DeclareMathAccent{\widecheck}{0}{mathx}{"71}

\usepackage{graphicx}
\usepackage[svgnames]{xcolor}

\usepackage{booktabs}
\usepackage{multirow}
\usepackage[framemethod=default]{mdframed}

\usepackage{etoolbox}
\let\etbforlistloop\forlistloop
\makeatletter
\let\blx@noerroretextools\@empty
\makeatother
\usepackage[date=year,eprint=false,doi=false,isbn=false,backend=biber,giveninits=true,maxcitenames=2,maxbibnames=99,natbib=true,url=false,sorting=ynt,style=apa,citestyle=authoryear-comp,backref=true,uniquelist=false]{biblatex}

\DeclareSourcemap{
  \maps[datatype=bibtex]{
    \map{
      \step[fieldset=editor, null]
      \step[fieldset=series, null]
      \step[fieldset=language, null]
      \step[fieldset=address, null]
      \step[fieldset=location, null]
      \step[fieldset=month, null]
      \step[fieldset=annote, null]
    }
  }
}
\DefineBibliographyStrings{english}{%
  backrefpage  = {cited on page},
  backrefpages = {cited on pages},
}
\DefineBibliographyStrings{american}{%
  backrefpage  = {cited on page},
  backrefpages = {cited on pages},
}
\DefineBibliographyStrings{american-apa}{%
  backrefpage  = {cited on page},
  backrefpages = {cited on pages},
}
\renewbibmacro*{pageref}{%
  \iflistundef{pageref}
    {}
    {\printtext[parens]{%
       \midsentence%
       \ifnumgreater{\value{pageref}}{1}
         {\bibstring{backrefpages}\ppspace}
         {\bibstring{backrefpage}\ppspace}%
       \printlist[pageref][-\value{listtotal}]{pageref}}}}

\usepackage{algorithm}
\usepackage{ifthen}

\usepackage[colorlinks=true,citecolor=Navy,linkcolor=Maroon,urlcolor=Orchid,bookmarksnumbered,hypertexnames=false,pdfdisplaydoctitle,pdfusetitle,unicode]{hyperref}

\usepackage[noend]{algpseudocode}

\algnewcommand{\algorithmicinput}{\textbf{Input:}}
\algnewcommand{\Input}{\item[\algorithmicinput]}
\algnewcommand{\algorithmicoutput}{\textbf{Input:}}
\algnewcommand{\Output}{\item[\algorithmicoutput]}
\algnewcommand{\Break}{\textbf{break}}
\makeatletter
\renewcommand{\ALG@step}{%
  \refstepcounter{ALG@line}%
  \addtocounter{ALG@rem}{1}%
  \ifthenelse{\equal{\arabic{ALG@rem}}{\ALG@numberfreq}}%
    {\setcounter{ALG@rem}{0}\alglinenumber{\arabic{ALG@line}}}%
    {}%
}
\makeatother

\usepackage[algo2e,ruled,vlined,noend]{algorithm2e}
\SetKwInput{Input}{Input}
\SetKwInput{Output}{Output}
\SetKw{Break}{break}

\usepackage{url}
\usepackage{upref}
\usepackage[capitalize,noabbrev]{cleveref}

\crefname{step}{Step}{Steps}
\Crefname{step}{Step}{Steps}
\crefname{appendix}{Appendix}{Appendices}
\Crefname{appendix}{Appendix}{Appendices}
\crefname{assumption}{Assumption}{Assumptions}
\Crefname{assumption}{Assumption}{Assumptions}
\crefname{algorithm}{Algorithm}{Algorithms}
\Crefname{algorithm}{Algorithm}{Algorithms}

\usepackage{autonum}
\let\forlistloop\etbforlistloop

\makeatletter
\pretocmd{\appendix}{%
  \crefalias{section}{appendix}%
}{}{}
\makeatother

\usepackage{nicefrac}       
\usepackage{microtype}      
\usepackage{xspace}
\usepackage{subcaption}
\usepackage{wrapfig}
\usepackage{siunitx}
\usepackage{thmtools}
\usepackage{thm-restate}
\allowdisplaybreaks
\usepackage{enumitem}

\usepackage[disable,color={red!100!green!33},colorinlistoftodos,prependcaption,textsize=small]{todonotes}

\newcommand{\rr}[1]{}

\usepackage{pgfplots}
\usepackage{tikz, tikz-3dplot}

\usetikzlibrary{positioning}
\usetikzlibrary{arrows}

\newtheorem{theorem}{Theorem}[section]
\newtheorem{lemma}[theorem]{Lemma}
\newtheorem{proposition}[theorem]{Proposition}
\newtheorem{corollary}[theorem]{Corollary}
\newtheorem{assumption}[theorem]{Assumption}
\theoremstyle{definition}
\newtheorem{definition}[theorem]{Definition}

\newtheorem{remark}[theorem]{Remark}
\newcommand{\savednormallabel}{}
\AtBeginDocument{\global\let\savednormallabel\label}
\newcommand{\restorenormallabel}{\global\let\label\savednormallabel}
\makeatletter
\newcommand{\wraprestatablewithlabelrestore}[1]{%
  \csletcs{restatable@orig@#1}{#1}%
  \csdef{#1}{%
    \@ifstar{%
      \csuse{restatable@orig@#1}*%
      \restorenormallabel
    }{%
      \csuse{restatable@orig@#1}%
    }%
  }%
}
\newcommand{\mainonlyhypertarget}[2]{%
  \ifcsname ifthmt@thisistheone\endcsname
    \ifthmt@thisistheone
      \hypertarget{#1}{#2}%
    \else
      #2%
    \fi
  \else
    \hypertarget{#1}{#2}%
  \fi
}
\newcommand{\decomptarget}[2]{\text{\normalfont\scriptsize\mainonlyhypertarget{#1}{\textcolor{DarkGreen}{#2}}}}
\newcommand{\decomplink}[2]{\hyperlink{#1}{\textcolor{DarkGreen}{#2}}}
\makeatother

\newcommand{\Z}{\mathbb{Z}}
\newcommand{\R}{\mathbb{R}}

\newcommand{\ones}{\mathbf{1}}

\newcommand{\E}{\mathop{\mathbb{E}}}

\newcommand{\OPT}{\mathrm{OPT}}
\newcommand{\Reg}{\mathcal{R}}

\title{From Average Sensitivity to Small-Loss Regret Bounds\\ under Random-Order Model}

\author{%
Shinsaku Sakaue%
\thanks{CyberAgent, Tokyo, Japan. National Institute of Informatics, Tokyo, Japan. Center for Advanced Intelligence Project, RIKEN, Tokyo, Japan. Email: shinsaku.sakaue@gmail.com.}%
\and
Yuichi Yoshida%
\thanks{National Institute of Informatics, Tokyo, Japan. Email: yyoshida@nii.ac.jp.}%
}

\date{}

\begin{document}
\maketitle
\begin{abstract}
We study online learning in the random-order model, where the multiset of loss functions is chosen adversarially but revealed in a uniformly random order. By extending the \emph{batch-to-online} transformation of Dong and Yoshida (2023), we show that if an offline algorithm enjoys a $(1+\varepsilon)$-approximation guarantee, an \emph{average sensitivity} bound controlled by a function $\varphi(\varepsilon)$, and stability with respect to $\varepsilon$, then we can obtain a \emph{small-loss} regret bound typically of order $\tilde O(\varphi^{\star}(\mathrm{OPT}_T))$, where $\varphi^{\star}$ is the concave conjugate of $\varphi$, $\mathrm{OPT}_T$ is the offline optimum over $T$ rounds, and $\tilde O$ hides polylogarithmic factors in $T$.
Our result refines their original $(1+\varepsilon)$-approximate regret guarantee and applies to a broad class of problems, including online $k$-means clustering and online low-rank approximation.
We further apply our approach to online submodular function minimization using $(1\pm\varepsilon)$-cut sparsifiers of submodular hypergraphs, obtaining a small-loss regret bound of $\smash{\tilde O(n^3 + n^{3/4}\mathrm{OPT}_T^{3/4})}$, where $n$ is the ground-set size; we also demonstrate its applicability to online $\ell_1$ regression.
Our work sheds light on the power of sparsification and related algorithmic techniques in achieving small-loss regret bounds in the random-order model, without requiring structural assumptions on loss functions, such as linearity or smoothness.\looseness=-1
\end{abstract}

\section{Introduction}
The online learning literature has been shaped by two canonical regimes: the stochastic and adversarial regimes. However, the i.i.d.\ assumption of the stochastic regime is often restrictive, while guarantees in the adversarial regime can be overly conservative. The \emph{random-order model} provides a compelling bridge between these extremes by allowing an adversary to pick the multiset of inputs while requiring that their order be drawn uniformly at random. This recovers the stochastic regime when the inputs themselves are generated i.i.d., while still allowing us to leverage random ordering to sidestep the pessimism of fully adversarial sequences, yielding better guarantees in many settings. The random-order model has a long history in competitive analysis of online algorithms 
\citep{Dynkin1963-sj,Karp1990-tw,Babaioff2018-hb,Gupta2021-zv} and has recently received increasing attention in regret analysis \citep{Garber2020-oh,Sherman2021-wb,Bernasconi2025-qb}.

In the stochastic setting, Follow-the-Leader achieves logarithmic regret bounds, but it fails in the adversarial setting (see, e.g., \citet[Sections~3.2 and~4.3]{Cesa-Bianchi2006-oa}).
Competing under adversarially selected inputs requires stability, as reflected in the design of Follow-the-Regularized-Leader \citep{Agarwal2005-qp,Shalev-Shwartz2007-qm} and in the connections to differential privacy \citep{Abernethy2019-cn,Gonen2019-ec,Alon2022-os}.
For the random-order model,
\citet{Dong2023-yh} identified the \emph{average sensitivity}---the change in an algorithm's output under uniformly random deletions of its input---as the key stability measure.
This quantity captures the robustness of algorithms to random perturbations and has been gaining attention across diverse contexts, including graph algorithms \citep{Varma2021-fl,Varma2023-fy}, dynamic programming \citep{Kumabe2022-hu}, and clustering \citep{Yoshida2022-pb}. \citet{Dong2023-yh} showed that if a $(1+\varepsilon)$-approximate offline algorithm with a bounded average sensitivity is available, then simply feeding its output obtained on the data observed so far into the next prediction yields a $(1+\varepsilon)$-approximate regret bound controlled by the average sensitivity; they termed this procedure the \emph{batch-to-online} transformation. Informally, their $(1+\varepsilon)$-approximate regret bound takes the form ``$\text{Learner's cumulative loss} - (1+\varepsilon)\OPT_T =  O(\varphi(\varepsilon)\log T),$'' where $\OPT_T$ is the offline optimal cumulative loss over $T$ rounds.
The function $\varphi$ represents how the average sensitivity of the offline algorithm depends on the approximation accuracy $\varepsilon$, and it grows as $\varepsilon$ approaches zero, e.g., $\varphi(\varepsilon) \asymp 1/\varepsilon^3$.\looseness=-1

While their result offers a bridge from average sensitivity to online learning under the random-order model, it only applies to the $(1+\varepsilon)$-approximate regret. The approximate regret has traditionally served as a reasonable performance measure when even the offline optimum cannot be attained efficiently, as in the case of online submodular function maximization \citep{Streeter2008-qu,Niazadeh2020-id}. By contrast, \citet{Dong2023-yh} use the approximate regret to balance stability against accuracy, giving the learner an extra advantage of $\varepsilon \OPT_T$ that is introduced only to trade off approximation and stability.
This observation raises a natural question: What form should a \emph{non-approximate} (standard) regret bound take in the random-order model, and how does the average sensitivity of the offline algorithm shape that bound?\looseness=-1

\subsection{Our contribution}
We extend the fixed-$\varepsilon$, $(1+\varepsilon)$-approximate-regret framework of \citet{Dong2023-yh} to obtain a regret bound typically governed by $\tilde O(\varphi^{\star}(\OPT_T))$, where $\varphi^{\star}$ is the \emph{concave conjugate} of $\varphi$ and $\tilde O$ hides polylogarithmic factors in $T$ (and problem-dependent parameters).
For example, when $\varphi(\varepsilon) \asymp 1/\varepsilon^q$ for some $q>0$, this yields a bound of $\tilde O(\OPT_T^{q/(q+1)})$.
Regret bounds that depend on $\OPT_T$ (up to polylogarithmic factors) are called \emph{small-loss} regret bounds (also known as first-order or $L^{\star}$ bounds), which are highly desirable as they automatically exploit benign input sequences with small $\OPT_T$.\looseness=-1

Our core idea is to set the approximation parameter $\varepsilon_t$ separately for each round $t$ to optimize the accuracy--stability trade-off exposed by our regret decomposition (see \cref{sec:bto}).
Two quantities to be balanced are the approximation term $\varepsilon_t\OPT_t/t$ and the average-sensitivity term $\varphi(\varepsilon_t)/t$.
In \cref{sec:adaptive-control}, we show that per-round optimization of $\varepsilon_t$ is sufficient for obtaining a regret bound of $\tilde O(\varphi^{\star}(\OPT_T))$, even though $\OPT_T$ is unknown in advance.\footnote{A standard way to adapt to such an unknown scale is a doubling trick.  In the random-order model, however, extending the fixed-$\varepsilon$ framework of \citet{Dong2023-yh} via doubling is not straightforward. We discuss this obstruction in \cref{sec:doubling-trick}.}
This adaptive choice of $\varepsilon_t$ introduces one additional requirement on the offline algorithm: in addition to the fixed-$\varepsilon$ average-sensitivity bound used by \citet{Dong2023-yh}, the output must be stable when the approximation parameter changes according to our rule for choosing $\varepsilon_t$ (\cref{assump:adaptive-pc-stability}).
Still, this requirement is naturally satisfied by the offline algorithms used in our applications and those of \citet{Dong2023-yh}.\looseness=-1

Importantly, our framework does not rely on linearity or smoothness assumptions on the loss functions, unlike most prior small-loss regret bounds (see \cref{sec:related-work}).
It therefore yields small-loss regret bounds for a broad class of problems, including specific examples studied by \citet{Dong2023-yh}, such as online $k$-means clustering and online low-rank matrix approximation.
Moreover, in \cref{sec:submodular}, we demonstrate its applicability to online submodular function minimization \citep{Hazan2012-hc,Ito2022-oy}, which was not covered by \citet{Dong2023-yh}.
We obtain an $\tilde O\prn[\big]{n^3+n^{3/4}\OPT_T^{3/4}}$ regret bound, where $n$ is the ground-set size.
This showcases the advantage of our framework, as the continuous extension of submodular functions, known as the \emph{Lovász extension}, is not smooth (nor self-bounding).
At a technical level, we leverage $(1\pm\varepsilon)$-cut sparsifiers for \emph{submodular hypergraphs}~\citep{Kenneth2024-pf} to design an offline algorithm with low average sensitivity.
As another example with non-smooth losses, \cref{sec:online-regression} shows an instantiation for online $\ell_1$ regression based on \emph{Lewis-weight sampling}~\citep{Parulekar2021-bm}.
More broadly, \cref{subsec:guiding-idea} outlines a general recipe for deriving small-loss regret bounds from offline approximation algorithms based on widely studied algorithmic techniques, such as coresets, sketching, sampling, or sparsifiers, provided that these algorithms have bounded average sensitivity and parameter-change stability.\looseness=-1

\subsection{Related work}\label{sec:related-work}
\textbf{Small-loss regret bounds.\;}
In prediction with $N$ experts, an $O(\sqrt{\OPT_T\log N}+\log N)$ regret bound is achievable \citep[Section 2.4]{Cesa-Bianchi2006-oa}, where $\OPT_T$ here is the cumulative loss of the best expert.
Small-loss regret bounds for limited-feedback settings (e.g., bandit or partial information) with explicitly enumerated action sets or linear loss functions have been widely studied \citep{Neu2015-tu,Agarwal2017-ve,Allen-Zhu2018-jq,Lykouris2018-bn,Ito2020-fx,Lee2020-zw,Lee2020-vr,Foster2021-lp,Olkhovskaya2023-vy,Wang2023-kz}.
In online convex optimization, small-loss regret bounds have been obtained under smoothness---more precisely, under the \emph{self-bounding} property \citep[Definition~4.25]{orabona2023modern}---of loss functions \citep{Srebro2010-lp,Abernethy2019-cn,Tsai2023-jr,Yang2024-oq,Zhao2024-pi}.\linebreak
By contrast, we impose minimal assumptions on the decision space and loss functions, and focus on a general connection between average sensitivity of offline approximation algorithms and small-loss regret bounds under the random-order model.
Among prior works, the closest to ours is \citet{Abernethy2019-cn}, who derived small-loss regret bounds in the adversarial environment from a stability notion closely connected to differential privacy. Related subsequent works \citep{Wang2022-gw,Block2025-mo} develop oracle-efficient approaches that obtain small-loss regret bounds by leveraging privacy-style stability notions. Our approach can be viewed as a random-order analogue that replaces such privacy-style stability requirements with average sensitivity. In particular, differential privacy is a worst-case stability notion, whereas average sensitivity is an average-case requirement and hence generally less restrictive \citep[Section~1.5]{Varma2023-fy}.
Another relevant work is \citet{Lykouris2018-bn}, who obtained small-loss regret bounds in graph bandits via an approximate-regret analysis using a doubling trick.
This shares a high-level analogy with our idea of extending the approximate-regret analysis of \citet{Dong2023-yh}.\looseness=-1


\textbf{Random-order model.\;}
In competitive analysis, the random-order model has a long history, from the classical secretary problem to subsequent studies across various settings \citep{Dynkin1963-sj,Karp1990-tw,Babaioff2018-hb}.
We refer the reader to \citet{Gupta2021-zv} for an overview.
Recently, there has been growing interest in the random-order model in regret analysis.
\citet{Garber2020-oh} established polylogarithmic regret bounds under cumulative strong convexity of loss functions; later, \citet{Sherman2021-wb} achieved optimal logarithmic regret bounds.
\citet{Bernasconi2025-qb} introduced a general framework that lifts algorithms designed for the stochastic setting to the random-order model, at the cost of an additive $\tilde O(\sqrt{T})$ regret overhead.\looseness=-1


\section{Preliminaries}
Let $[n] = \set*{1,\ldots,n}$ for any $n \in \Z_{>0}$.
We consider an online learning problem over $T$ rounds.
Let~$\mathcal{X}$ be the input space, $\Theta$ the decision space, and $\ell\colon \Theta \times \mathcal{X} \to [0,1]$ the loss function.
At each round $t \in [T]$, the learner selects $\theta_t \in \Theta$, observes $x_t \in \mathcal{X}$, and incurs $\ell\prn*{\theta_t,x_t}$.
We assume the random-order model:
the multiset of $T$ data points $\set*{x_1,\ldots,x_T}\in\mathcal{X}^T$ is chosen adversarially in advance,\footnote{
  We use the abuse of treating multiset occurrences as labeled whenever a multiset is indexed, sampled from, or modified.
}
but it is ordered uniformly at random.
Let $\ell(\theta,X) = \sum_{x\in X} \ell(\theta,x)$ for any multiset $X$ of data points in~$\mathcal{X}$ and any $\theta \in \Theta$.
For each $t \in [T]$, define the offline optimum as
$
\OPT_t = \min_{\theta\in\Theta} \sum_{i=1}^t \ell\prn*{\theta,x_i}
$.
Note that $\OPT_t$ for $t < T$ are random variables depending on the random order of $\set*{x_t}_{t=1}^T$, while $\OPT_T$ is deterministic once the multiset $\set*{x_t}_{t=1}^T$ is fixed.
The following lemma, which also appears in \citet{Dong2023-yh}, follows from the random-order assumption; see \cref{sec:rom-lem-proof} for a proof.
\begin{restatable}[Bound on sum of average optimal values]{lemma}{romlemma}\label[lemma]{lem:rom-lem}
It holds that $\E\brc[\big]{\sum_{t=1}^T {\OPT_t}/{t}} \le \OPT_T$.
\end{restatable}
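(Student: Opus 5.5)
The plan is to compare $\OPT_t$ with the cost, on the first $t$ points, of a single fixed minimizer over the whole sequence. Fix $\theta^\star \in \arg\min_{\theta\in\Theta}\ell(\theta,\set{x_1,\ldots,x_T})$, so that $\ell(\theta^\star,\set{x_1,\ldots,x_T}) = \OPT_T$. Since $\theta^\star$ is determined by the multiset alone, it does not depend on the random order. If the minimum is not attained, I would take an $\eta$-approximate minimizer and let $\eta\to 0$ at the end.

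First, for every $t$, the definition of $\OPT_t$ as a minimum gives the deterministic bound
\[
\OPT_t \le \sum_{i=1}^t \ell(\theta^\star,x_i).
\]
Second, I would take expectations over the uniformly random order. The key fact is that each $x_i$ is marginally a uniformly random element of the multiset, with occurrences treated as labeled. Therefore
\[
\E\brc[\Big]{\sum_{i=1}^t \ell(\theta^\star,x_i)} = \frac{t}{T}\sum_{j=1}^T \ell(\theta^\star,x_j) = \frac{t}{T}\OPT_T .
\]
Combining the two steps gives $\E\brc{\OPT_t}/t \le \OPT_T/T$ for each $t\in[T]$. Summing over $t=1,\dots,T$ and using linearity of expectation yields
\[
\E\brc[\Big]{\sum_{t=1}^T \OPT_t/t} \le T\cdot \frac{\OPT_T}{T} = \OPT_T .
\]

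I do not expect a real obstacle here. The one point that needs care is that $\theta^\star$ must be fixed before the order is drawn. This is exactly what allows the exchangeability argument, and it is why the bound holds only in expectation. I would also note that nonnegativity and boundedness of $\ell$ ensure all the expectations are finite.
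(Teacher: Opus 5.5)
Your proposal is correct and follows the same argument as the paper. Both fix a minimizer $\theta^\star$ for the full multiset, bound $\OPT_t$ by the prefix loss of $\theta^\star$, and use that each occurrence falls among the first $t$ positions with probability $t/T$ to get $\E[\OPT_t]\le (t/T)\OPT_T$; dividing by $t$ and summing then gives the claim (your $\eta$-approximate-minimizer remark is a harmless extra, since the paper assumes the minimum is attained).
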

\wraprestatablewithlabelrestore{romlemma}
Let $\mathsf{Alg}$ be the learner's algorithm for computing $\theta_1,\dots,\theta_T \in \Theta$, which may be randomized.
Denoting by $\mathop{\E}_{\mathsf{Alg},\set*{x_t}}$ the expectation over the randomness of $\mathsf{Alg}$ and the random order of $\set{x_t}_{t=1}^T$, the learner's performance is measured by the regret defined as
$
\Reg_T \coloneqq \mathop{\E}_{\mathsf{Alg},\set*{x_t}}\brc{\sum_{t=1}^T \ell\prn*{\theta_t,x_t}} - \OPT_T
$.\looseness=-1

In this paper, polynomial-time statements are with respect to the problem size, including the horizon~$T$, under an exact-real oracle model in which arithmetic operations, comparisons, calls to the relevant value oracles, and the specified random sampling primitives have unit cost.

\textbf{Average sensitivity.\;}
To define average sensitivity, we first introduce the total variation distance.
\begin{definition}[Total variation]\label[definition]{def:tv}
For probability measures $P,Q$ defined on a measurable space $(\Omega,\mathcal{F})$, we define the total variation distance between $P$ and $Q$ as
$
\mathrm{TV}(P,Q) \coloneqq \mathop{\sup}_{{A \in \mathcal{F}}}\abs*{P(A) - Q(A)}
$.
\end{definition}
The average sensitivity of a randomized algorithm is measured by the mean total-variation distance of output distributions on two datasets that differ by a single point deleted uniformly at random.
Formally, we use the following definition adopted in \citet{Yoshida2022-pb} and \citet{Dong2023-yh}.\looseness=-1
\begin{definition}[Average sensitivity]\label[definition]{def:avg-sens}
Let $t \in\Z_{>0}$.
Consider a randomized algorithm $\mathcal{A}$ that takes $X\in \mathcal{X}^t$ as input and returns $\mathcal{A}(X) \in \Theta$.
By abusing notation, we identify $\mathcal{A}(X)$ with the distribution over $\Theta$ that the algorithm's output follows.
The average sensitivity of $\mathcal{A}$ on $X$ is defined as
\[
\frac{1}{t}\sum_{x\in X} \mathrm{TV}\prn*{\mathcal{A}(X), \mathcal{A}(X\setminus x)},
\]
where $X\setminus x$ denotes the multiset obtained by removing one occurrence of $x$ from $X$.
For a function $\beta\colon \Z_{>0} \to \R_{\ge0}\cup\set*{+\infty}$, we say that the average sensitivity of $\mathcal{A}$ is upper bounded by $\beta$ if the above quantity is at most $\beta(t)$ for every $t \in \Z_{>0}$ and $X\in\mathcal{X}^t$.
\end{definition}

\textbf{Approximation accuracy and average sensitivity.}\;
Keeping an algorithm's average sensitivity low typically comes at the expense of approximation accuracy.
In this study, we focus on algorithms that exhibit the following trade-off between approximation accuracy and average sensitivity.
\begin{assumption}[Offline average-stable approximation]\label[assumption]{assump:approx-avesen}
  Let $t \in [T]$ and $X \in \mathcal{X}^t$.
  For any fixed $\varepsilon\in(0,1]$, the offline randomized algorithm, denoted by $\mathcal A_\varepsilon$, returns a $\prn*{1+\varepsilon}$-approximate solution in expectation, up to an additive $O(t/T)$ term (where the hidden constant may depend on problem-dependent parameters but not on $t$, $T$, $\varepsilon$, or $X$), and its average sensitivity is at most $\varphi(\varepsilon)/(2t)$, i.e.,\looseness=-1
  \[
    \E_{\mathcal A_\varepsilon}\brc*{\ell(\mathcal A_\varepsilon(X),X)} \le (1+\varepsilon) \min_{\theta\in\Theta} \ell(\theta,X) + O\prn*{\frac{t}{T}}
    \;\,
    \text{and}
    \;\,
    \frac{1}{t}\sum_{x\in X} \mathrm{TV}\prn*{\mathcal{A}_\varepsilon(X), \mathcal{A}_\varepsilon(X\setminus x)}\le
    \frac{\varphi\prn*{\varepsilon}}{2t},
  \]
  where $\varphi\colon \R_{\ge0} \to \R_{\ge0}\cup\set*{+\infty}$ is a proper lower-semicontinuous function with effective domain $\mathrm{dom}(\varphi) \coloneqq \Set*{\varepsilon \in \R_{\ge0}}{\varphi(\varepsilon)<+\infty}$ contained in $(0,1]$, and such that $\lim_{\varepsilon\downarrow0}\varphi(\varepsilon) = +\infty$.
\end{assumption}
Typically, $\varphi(\varepsilon)$ is a decreasing function on $(0,1]$, e.g., $\varphi(\varepsilon) \asymp 1/\varepsilon^3$, and thus a better approximation (smaller~$\varepsilon$) increases the average sensitivity bound $\varphi(\varepsilon)/(2t)$, reflecting the trade-off between approximation accuracy and average sensitivity.
The $O(t/T)$ term in the approximation guarantee is non-essential and is included only for consistency with high-probability guarantees of offline algorithms.
The role of \cref{assump:approx-avesen} is to isolate how~$\varepsilon$ and $t$ affect the average sensitivity bound, which is indeed satisfied in all examples discussed later.\footnote{
  Even when an average sensitivity bound $\beta(t)$ is given as $\tilde\varphi(t,\varepsilon)/(2t)$ for some $\tilde\varphi$ increasing in $t$, setting $\varphi(\varepsilon) = \tilde\varphi(T,\varepsilon)$ yields a valid upper bound. Typically, such $\tilde\varphi(t,\varepsilon)$ is logarithmic in $t$, and this modification affects only logarithmically.
}
In \cref{subsec:parameter-change-control}, we also introduce an additional condition (\cref{assump:adaptive-pc-stability}) on offline algorithms to handle time-varying approximation accuracy.\looseness=-1

\section{Batch-to-online transformation with time-varying approximation accuracy}\label{sec:bto}
\begin{algorithm2e}[tb]
{\SetAlCapHSkip{0pt}\caption{Batch-to-online transformation with time-varying approximation accuracy}\label[algorithm]{alg:bto}}
\Input{Offline algorithm family $\{\mathcal{A}_\varepsilon\}_{\varepsilon\in(0,1]}$ and parameter $\lambda>0$\;}
Initialize $\theta_1\in\Theta$ arbitrarily\;
\For{$t = 1,\dots,T$}{
  Play $\theta_t$, receive $x_t$, and incur $\ell\prn*{\theta_t, x_t}$\;
  Compute $\varepsilon_t\in(0,1]$ from $X_t=\set{x_1,\dots,x_t}$ observed so far and $\lambda$ by the rule in \eqref{eq:rule}\;
  Obtain $\theta_{t+1} \in \Theta$ by running $\mathcal{A}_{\varepsilon_t}$ on $X_t$\;
}
\end{algorithm2e}

We introduce the batch-to-online transformation.
As described in \cref{alg:bto}, at each round $t$, the offline algorithm~$\mathcal A_{\varepsilon_t}$ is applied to the dataset~$X_t$ observed so far to obtain $\theta_{t+1}$.
A crucial difference from \citet{Dong2023-yh} is that we allow the approximation parameter $\varepsilon_t$ to vary over rounds.
This time-varying view yields a new regret decomposition, exposing the per-round accuracy--stability trade-off and the cost of changing $\varepsilon_t$ over time.
\begin{restatable}[Regret decomposition]{proposition}{regretdecomprop}
\label[proposition]{prop:regret-decomp}
Suppose that the parameters $\varepsilon_t$ used by \cref{alg:bto} are specified by a measurable rule $Y\mapsto\varepsilon_Y$ for any finite multiset $Y$, so that $\varepsilon_t=\varepsilon_{X_t}$ holds for every~$t$; let $\varepsilon_\emptyset \in (0,1]$ be an arbitrary fixed value.
For $\theta_1,\dots,\theta_T\in\Theta$ obtained by \cref{alg:bto}, the regret~$\Reg_T$ is upper bounded by the following two sums up to an additive constant:\footnote{For convenience, we extend each $\mathcal A_\varepsilon$ to the empty multiset by an arbitrary fixed output distribution. All terms involving the empty input affect the regret by at most an additive $O(1)$.}
\[
\E\brc[\Bigg]{\sum_{t=1}^T
  \underbrace{
    \prn*{
    \frac{\varepsilon_t}{t}\OPT_t + \frac{\varphi\prn*{\varepsilon_t}}{t}
  }
  }_{\decomptarget{summand:approx-sens}{accuracy--stability trade-off}}
}
+
\E\brc[\Bigg]{\sum_{t=1}^T
  \underbrace{
  \frac{2}{t}\sum_{x\in X_t}
  \mathrm{TV}\prn*{
    \mathcal A_{\varepsilon_{X_t}}\prn*{X_t\setminus x},
    \mathcal A_{\varepsilon_{X_t\setminus x}}\prn*{X_t\setminus x}
  }}_{\decomptarget{summand:param-change}{parameter-change cost}}
}
+O(1).
\]
\end{restatable}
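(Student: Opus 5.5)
The plan is to follow the Dong--Yoshida batch-to-online argument, carried out round by round, with one extra triangle-inequality step that absorbs the change of approximation parameter.

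\textbf{Step 1 (random-order swap).} Fix $t\ge 2$ and condition on the multiset $X_t$. Under the random-order model, $x_t$ is a uniformly random element of $X_t$, and then $X_{t-1}=X_t\setminus x_t$. Also $\theta_t\sim\mathcal A_{\varepsilon_{X_{t-1}}}(X_{t-1})$, with fresh internal randomness. Hence
\[
\E\brc*{\ell(\theta_t,x_t)} = \E\brc*{\frac1t\sum_{x\in X_t}\E_{\theta\sim\mathcal A_{\varepsilon_{X_t\setminus x}}(X_t\setminus x)}\brc*{\ell(\theta,x)}}.
\]

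\textbf{Step 2 (replace the distribution).} For each $x\in X_t$ we compare $\mathcal A_{\varepsilon_{X_t\setminus x}}(X_t\setminus x)$ with $\mathcal A_{\varepsilon_{X_t}}(X_t)$. Since $\ell\in[0,1]$, the expected loss changes by at most the total variation distance, and the triangle inequality splits this through the intermediate distribution $\mathcal A_{\varepsilon_{X_t}}(X_t\setminus x)$:
\[
\mathrm{TV}\prn*{\mathcal A_{\varepsilon_{X_t}}(X_t),\mathcal A_{\varepsilon_{X_t}}(X_t\setminus x)}+\mathrm{TV}\prn*{\mathcal A_{\varepsilon_{X_t}}(X_t\setminus x),\mathcal A_{\varepsilon_{X_t\setminus x}}(X_t\setminus x)}.
\]
Averaged over $x$ with weight $1/t$:
\begin{itemize}
\item The first term is at most $\varphi(\varepsilon_t)/(2t)$ by the average-sensitivity part of \cref{assump:approx-avesen}, applied with the fixed $\varepsilon=\varepsilon_t$, which is legitimate because we condition on $X_t$.
\item The second term gives the parameter-change cost. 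The paper's factor $2$ is a safe upper bound, and it would arise naturally if one uses $|\E_P f-\E_Q f|\le 2\,\mathrm{TV}$ for the coupling.
\item Because of that factor, I would bound the first term by $2\cdot\varphi(\varepsilon_t)/(2t)=\varphi(\varepsilon_t)/t$, matching the statement.
\end{itemize}

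\textbf{Step 3 (approximation).} What remains is
\[
\frac1t\sum_{x\in X_t}\E_{\theta\sim\mathcal A_{\varepsilon_t}(X_t)}\brc*{\ell(\theta,x)} = \frac1t\,\E\brc*{\ell(\mathcal A_{\varepsilon_t}(X_t),X_t)}.
\]
By \cref{assump:approx-avesen} this is at most
\[
\frac{(1+\varepsilon_t)\OPT_t}{t}+O\prn*{\frac1T}.
\]
Summing over $t$ gives
\[
\E\brc*{\sum_t \OPT_t/t}+\E\brc*{\sum_t \varepsilon_t\OPT_t/t}+O(1).
\]
By \cref{lem:rom-lem}, $\E\brc*{\sum_t\OPT_t/t}\le\OPT_T$, which cancels against the $-\OPT_T$ in $\Reg_T$. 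The round $t=1$, which involves the empty multiset, costs at most $1$, and the $O(t/T)\cdot(1/t)$ terms total $O(1)$.

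\textbf{Main obstacle.} The delicate point is Step 1 combined with the conditioning in Step 2. The parameter $\varepsilon_t$ is itself data-dependent, so the fixed-$\varepsilon$ average-sensitivity bound may only be invoked after conditioning on $X_t$. At that point $\varepsilon_{X_t}$ is deterministic, but $\varepsilon_{X_t\setminus x}$ varies with $x$; exactly this mismatch is what the parameter-change term isolates. Measurability of the rule ensures the conditional expectations are well defined.
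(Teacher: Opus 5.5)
Your proof is correct, but it takes a different and more direct route than the paper's.

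\textbf{The paper's route.} It bounds the loss at round $t+1$. It starts from $\E[\ell(\theta_{t+1},x_{t+1})-\frac1t\ell(\theta_{t+1},X_t)]$ and inserts the leave-one-out output $\mathcal A_{\varepsilon_{X_t\setminus x}}(X_t\setminus x)$. It then uses exchangeability of a uniformly deleted point $x_I$ and the fresh point $x_{t+1}$ given $X_t\setminus x_I$. This leaves two TV comparisons, one evaluated at $x_{t+1}$ and one at the deleted point, and that is exactly where the factor $2$ comes from.

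\textbf{Your route.} You use that $x_t$ is a uniformly random occurrence of $X_t$. Given the ordered history $\mathcal H_{t-1}=(x_1,\dots,x_{t-1})$, $\theta_t$ is independent of $x_t$ and its law depends only on the multiset $X_t\setminus x_t$. Hence the round-$t$ loss is exactly the averaged leave-one-out loss on $X_t$. A single TV comparison, split by the triangle inequality, then takes you to the in-sample average $\frac1t\ell(\mathcal A_{\varepsilon_t}(X_t),X_t)$. This gives a sharper decomposition: $\varphi(\varepsilon_t)/(2t)$ for the sensitivity term and coefficient $1/t$ instead of $2/t$ on the parameter-change sum. You then relax it to match the statement.

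\textbf{The factor $2$.} Your explanation of it is off. With the sup-over-events total variation of \cref{def:tv} and losses in $[0,1]$, the bound $|\E_P f-\E_Q f|\le\mathrm{TV}(P,Q)$ holds with constant $1$. The paper's $2$ is purely an artifact of its two-term decomposition.

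\textbf{What each buys.} The paper's route stays faithful to the fixed-$\varepsilon$ proof of Dong and Yoshida, and the same fresh-versus-deleted exchangeability step is what \cref{sec:doubling-trick} examines. Yours is shorter and halves both stability constants.
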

\wraprestatablewithlabelrestore{regretdecomprop}
We defer the proof to \cref{asec:proofs-bto}, but the intuition is as follows.
The output $\theta_{t+1}$ returned by the $(1+\varepsilon_t)$-approximation algorithm satisfies $\ell(\theta_{t+1}, X_t)/t \lesssim (1+\varepsilon_t)\OPT_t/t$, and evaluating it on the fresh loss $\ell(\cdot, x_{t+1})$ introduces an additive error governed by the average sensitivity plus the cost of using $\varepsilon_t = \varepsilon_{X_t}$ on $X_t\setminus x$ instead of $\varepsilon_{X_t\setminus x}$, hence $\E[\ell(\theta_{t+1}, x_{t+1})] \lesssim \E[(1+\varepsilon_t)\OPT_t/t + \varphi(\varepsilon_t)/t] + \text{\decomplink{summand:param-change}{parameter-change cost}}$.
Summing over $t$ and applying \cref{lem:rom-lem} yield the above regret decomposition.\looseness=-1

\textbf{Observation from the time-varying view.\;}
Allowing the approximation parameter $\varepsilon_t$ to vary over rounds reveals a more informative structure than the fixed-$\varepsilon$ presentation of \citet{Dong2023-yh}: the \decomplink{summand:approx-sens}{accuracy--stability trade-off} term takes the form of a \emph{per-round} trade-off between the two terms,
$\frac{\OPT_t}{t}\varepsilon_t$ and $\frac{\varphi(\varepsilon_t)}{t}$.
Importantly, $\OPT_t$ can be computed from $X_t$ and hence available when choosing~$\varepsilon_t$ for obtaining $\theta_{t+1}$, although the target quantity $\OPT_T$ is not.
In addition, as we will see later, this trade-off term typically dominates the \decomplink{summand:param-change}{parameter-change cost}.
Therefore, a natural strategy is: choose~$\varepsilon_t$ by optimizing the per-round trade-off, which is the perspective underlying the next section.\looseness=-1

\section{Small-loss regret bound via time-varying approximation accuracy}\label{sec:adaptive-control}
We present how to obtain a small-loss regret bound by bounding the two sums in the regret decomposition in \cref{prop:regret-decomp}.
The following concave conjugate of $\varphi$ plays a central role in our analysis.\looseness=-1
\begin{definition}[Concave conjugate]\label[definition]{def:concave-conjugate}
  Let $\varphi\colon \R_{\ge0} \to \R_{\ge0}\cup\set*{+\infty}$ be a proper lower-semicontinuous function with $\mathrm{dom}(\varphi) \subseteq (0,1]$ and $\lim_{\varepsilon\downarrow0}\varphi(\varepsilon)=+\infty$.
  The concave conjugate $\varphi^{\star}\colon \R_{>0} \to \R_{\ge0}$ of~$\varphi$ is defined as
  $\varphi^{\star}(u) = \inf_{\varepsilon\ge0}\set*{u \varepsilon+\varphi\prn*{\varepsilon}}$.
  For $u > 0$, the infimum is attained;
  define $\varepsilon_{\min}(u)$ as one minimizer chosen by a fixed tie-breaking rule, i.e.,
  $\varepsilon_{\min}(u) \in \mathop{\arg\min}_{\varepsilon\ge0}\set{ u \varepsilon + \varphi\prn*{\varepsilon}}$.\footnote{Since the divergence at zero excludes a sufficiently small interval $(0,\delta)$ from the relevant sublevel set of $u\varepsilon+\varphi(\varepsilon)$, the minimization reduces to the compact interval $[\delta,1]$; therefore, lower-semicontinuity gives a minimizer. We also require $\varepsilon_{\min}$ to be Borel-measurable; in the applications, this is immediate as the minimizer is unique except possibly at the endpoint.}
\end{definition}
For example, when $\varphi(\varepsilon) \asymp 1/\varepsilon^3$, we have $\varepsilon_{\min}(u) \asymp u^{-1/4}$ and $\varphi^{\star}(u)\asymp u^{3/4}$.
We will use the following basic properties of $\varphi^{\star}$ and $\varepsilon_{\min}$; see \cref{asec:proof-concave-conjugate-prop} for the proof.
\begin{restatable}{lemma}{concaveconjugateproplemma}
\label[lemma]{lem:concave-conjugate-prop}
The function
$\varphi^{\star}(u)$ is non-decreasing and concave;
$\varepsilon_{\min}(u)$ is contained in $(0,1]$, non-increasing, and a supergradient of $\varphi^\star$ at $u > 0$, i.e., $\varepsilon_{\min}(u)(u - v) \le \varphi^\star(u) - \varphi^\star(v)$ for all $v > 0$.\looseness=-1
\end{restatable}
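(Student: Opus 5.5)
The proof rests on one observation: $\varphi^\star$ is a pointwise infimum of the affine functions $u\mapsto u\varepsilon+\varphi(\varepsilon)$, one for each $\varepsilon\in\mathrm{dom}(\varphi)$. Since $\mathrm{dom}(\varphi)\subseteq(0,1]$ and $\varphi$ is proper, we may restrict the infimum to $\varepsilon\in\mathrm{dom}(\varphi)$. There every affine piece has slope $\varepsilon>0$ and intercept $\varphi(\varepsilon)\ge0$.

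\textbf{Monotonicity and concavity.} Each affine piece is non-decreasing in $u$, because $\varepsilon\ge 0$. Concavity holds because an infimum of affine functions is concave. Finiteness of $\varphi^\star$ on $u>0$ follows from properness, and nonnegativity from $\varphi\ge0$ and $u\varepsilon\ge0$. Hence $\varphi^\star$ is non-decreasing and concave.

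\textbf{Location of the minimizer and the supergradient inequality.} The minimizer $\varepsilon_{\min}(u)$ attains the finite value $\varphi^\star(u)$, so $\varphi(\varepsilon_{\min}(u))<\infty$ and therefore $\varepsilon_{\min}(u)\in\mathrm{dom}(\varphi)\subseteq(0,1]$. For the supergradient property, fix $u,v>0$ and write $\varepsilon_u=\varepsilon_{\min}(u)$. Plugging the feasible point $\varepsilon_u$ into the infimum defining $\varphi^\star(v)$ gives
$\varphi^\star(v)\le v\varepsilon_u+\varphi(\varepsilon_u)=\varphi^\star(u)-\varepsilon_u(u-v)$.
Rearranging yields $\varepsilon_u(u-v)\le\varphi^\star(u)-\varphi^\star(v)$, which is the claim.

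\textbf{Monotonicity of $\varepsilon_{\min}$.} This is the only step needing care, and it uses the standard exchange argument. Take $0<u<v$ with minimizers $\varepsilon_u,\varepsilon_v$. Applying the supergradient inequality at both points gives
$\varepsilon_u(u-v)\le\varphi^\star(u)-\varphi^\star(v)$ and $\varepsilon_v(v-u)\le\varphi^\star(v)-\varphi^\star(u)$.
Adding the two inequalities yields $(\varepsilon_u-\varepsilon_v)(u-v)\le 0$. Since $u<v$, this forces $\varepsilon_u\ge\varepsilon_v$.

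This argument holds for any selection of minimizers, so the fixed tie-breaking rule does not matter. The mild subtlety is that monotonicity is guaranteed for every selection, because any two minimizers at distinct points $u<v$ are ordered in this way. Existence of the minimizer was already supplied by the footnote to \cref{def:concave-conjugate} and is not re-proved here.
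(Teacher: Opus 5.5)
Your proof is correct and follows essentially the same route as the paper: concavity because $\varphi^\star$ is an infimum of affine functions, membership $\varepsilon_{\min}(u)\in(0,1]$ from $\mathrm{dom}(\varphi)\subseteq(0,1]$, and the supergradient inequality by plugging $\varepsilon_{\min}(u)$ into the infimum defining $\varphi^\star(v)$. The differences are cosmetic. You get monotonicity of $\varphi^\star$ from each affine piece being non-decreasing, whereas the paper plugs in $\varepsilon_{\min}(u_2)$. You get monotonicity of $\varepsilon_{\min}$ by adding the supergradient inequalities at $u$ and $v$, which is the same exchange argument as the paper's sum of the two optimality inequalities.
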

\wraprestatablewithlabelrestore{concaveconjugateproplemma}

\subsection{Choosing approximation accuracy to optimize per-round trade-off}\label{sec:adaptive-eps}
We show how to choose $\varepsilon_t$.
The core idea is to set $\varepsilon_t$ to optimize the \decomplink{summand:approx-sens}{accuracy--stability trade-off}, or $\OPT_t \varepsilon_t + \varphi(\varepsilon_t)$.
We derive a useful bound on the optimized trade-off terms by leveraging the properties of $\varphi^{\star}$.
Note that, while the following choice of~$\varepsilon_t$ involves $\OPT_t$, which varies with the random order, the bound holds for every realization of the randomness.
Also, the realized value of $\OPT_t$ is available when choosing~$\varepsilon_t$, since we can use the observed data points $X_t=\set*{x_s}_{s=1}^t$ to compute $\OPT_t$.\footnote{
This use of $\OPT_t$ is information-theoretic, and our main bound (\cref{thm:general-separable}) does not claim efficient implementability.
In the applications below, $\OPT_t$ is polynomial-time computable for submodular minimization, $\ell_1$ regression, and low-rank approximation; for clustering, we assume access to an exact solver or a sufficiently accurate offline approximation scheme~\citep{Cohen-Addad2019-nc}.\looseness=-1
}\looseness=-1
\begin{lemma}[Per-round trade-off optimization]
\label[lemma]{lem:local-opt-eps}
Let $\lambda > 0$ and consider the following choice of $\varepsilon_t$:
\begin{equation}
  \label{eq:rule}
  \varepsilon_{t} = \varepsilon_{\min}\prn*{\OPT_t + \lambda} \in \mathop{\arg\min} \Set*{ (\OPT_t + \lambda) \varepsilon + \varphi\prn*{\varepsilon} }{{\varepsilon\ge0}}
  \qquad\text{for } t = 1,\dots,T.
\end{equation}
Writing $H_T = \sum_{s=1}^{T}\frac{1}{s}$, the \decomplink{summand:approx-sens}{accuracy--stability trade-off} part in \cref{prop:regret-decomp} is bounded as follows:\looseness=-1
\[
  \sum_{t=1}^{T}\prn*{ \frac{\OPT_{t}}{t} \varepsilon_{t}+\frac{\varphi\prn*{\varepsilon_{t}}}{t} }
  \le
  H_T \varphi^{\star}\prn*{\frac{\sum_{t=1}^{T} \OPT_t/t}{H_T} + \lambda}.
\]
\end{lemma}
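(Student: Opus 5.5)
Fix a realization of the random order; every step below is deterministic. Write $u_t \coloneqq \OPT_t + \lambda > 0$. The plan has three steps: replace each summand by $\varphi^{\star}(u_t)/t$, then pass the weighted sum inside $\varphi^{\star}$ by concavity (Jensen), then use monotonicity of $\varphi^{\star}$.

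\textbf{Step 1 (per-round optimality).} By the rule \eqref{eq:rule}, $\varepsilon_t = \varepsilon_{\min}(u_t)$ attains the infimum in \cref{def:concave-conjugate}, so $u_t\varepsilon_t + \varphi(\varepsilon_t) = \varphi^{\star}(u_t)$. Since $\varepsilon_t \ge 0$ and $\lambda > 0$, we also have $\OPT_t \varepsilon_t \le u_t \varepsilon_t$. Together these give
\[
  \frac{\OPT_t}{t}\varepsilon_t + \frac{\varphi(\varepsilon_t)}{t} \le \frac{\varphi^{\star}(u_t)}{t}.
\]
Summing over $t$, the left-hand side of the lemma is at most $\sum_{t=1}^T \varphi^{\star}(u_t)/t$.

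\textbf{Step 2 (Jensen with harmonic weights).} Define weights $w_t = (1/t)/H_T$; they are nonnegative and sum to one. By \cref{lem:concave-conjugate-prop}, $\varphi^{\star}$ is concave on $\R_{>0}$, and each $u_t$ lies in $\R_{>0}$. Jensen's inequality then gives
\[
  \sum_t w_t \varphi^{\star}(u_t) \le \varphi^{\star}\Bigl(\sum_t w_t u_t\Bigr).
\]
The argument on the right simplifies, because $\sum_t w_t \lambda = \lambda$:
\[
  \sum_t w_t u_t = \frac{\sum_{t=1}^T \OPT_t/t}{H_T} + \lambda.
\]
Multiplying both sides by $H_T$ yields the claimed bound.

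If one prefers to avoid invoking Jensen for an extended-domain concave function, the same inequality follows from the supergradient property in \cref{lem:concave-conjugate-prop}. Set $\bar u = \sum_t w_t u_t > 0$. Applying that property at the point $\bar u$ with $v = u_t$ gives
\[
  \varphi^{\star}(u_t) \le \varphi^{\star}(\bar u) + \varepsilon_{\min}(\bar u)(u_t - \bar u).
\]
Averaging this with the weights $w_t$ makes the linear term vanish, since $\sum_t w_t (u_t - \bar u) = 0$.

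\textbf{Main obstacle.} There is essentially none; the only care needed is checking that $\varphi^{\star}$ is finite at every point where it is evaluated. This holds because all the points $u_t$ and $\bar u$ are strictly positive (this is exactly why $\lambda > 0$ is included) and $\varphi^{\star}$ is real-valued on $\R_{>0}$ by \cref{def:concave-conjugate}. Monotonicity of $\varphi^{\star}$ is not even needed for this statement; it will only be used later, when taking expectations and applying \cref{lem:rom-lem}.
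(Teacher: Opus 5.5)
Your proposal is correct and follows the paper's proof essentially step for step: per-round optimality of $\varepsilon_{\min}(\OPT_t+\lambda)$ gives $\varphi^{\star}(\OPT_t+\lambda)$ after dropping $\lambda\varepsilon_t\ge0$, and Jensen's inequality with weights $t^{-1}/H_T$ then yields the bound. Your supergradient variant is just an unrolled form of the same Jensen step. As you note at the end, monotonicity of $\varphi^{\star}$ (the third step in your overview) is not actually used for this lemma.
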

\begin{proof}
By the definition of $\varepsilon_t = \varepsilon_{\min}(\OPT_t + \lambda)$, we have $\prn*{\OPT_t + \lambda}\varepsilon_t + \varphi\prn*{\varepsilon_t}
= \varphi^{\star}\prn*{\OPT_t + \lambda}$.
Rearranging the terms and using $\lambda > 0$ and $\varepsilon_t \ge 0$ yield
$
\OPT_t\varepsilon_t + \varphi\prn*{\varepsilon_t}
= \varphi^{\star}\prn*{\OPT_t + \lambda} - \lambda\varepsilon_t
\le \varphi^{\star}\prn*{\OPT_t + \lambda}
$.
Dividing by $t$ and summing over $t=1,\dots,T$ give
\[
\sum_{t=1}^{T}\prn*{\frac{\OPT_t}{t}\varepsilon_t + \frac{\varphi\prn*{\varepsilon_t}}{t}}
\le
\sum_{t=1}^{T} \frac{\varphi^{\star}\prn*{\OPT_t + \lambda}}{t}
=
H_T\sum_{t=1}^{T} \frac{t^{-1}}{H_T}\varphi^{\star}\prn*{\OPT_t + \lambda}.
\]
Since $\varphi^{\star}$ is concave, Jensen's inequality applied to the probability weights $\prn{t^{-1}/H_T}_{t=1}^T$ gives
\[
H_T\sum_{t=1}^{T} \frac{t^{-1}}{H_T}\varphi^{\star}\prn*{\OPT_t + \lambda}
\le
H_T\varphi^{\star}\prn*{\sum_{t=1}^{T} \frac{t^{-1}}{H_T}\prn*{\OPT_t + \lambda}}
= H_T\varphi^{\star}\prn*{\frac{\sum_{t=1}^{T} \OPT_t/t}{H_T} + \lambda}.
\]
Combining the above inequalities completes the proof.
\end{proof}
We introduce $\lambda>0$ as a small shift to ensure, together with $\mathrm{dom}(\varphi) \subseteq (0,1]$ and $\lim_{\varepsilon\downarrow0}\varphi(\varepsilon)=+\infty$, that~$\varepsilon_t$ is well defined in $(0,1]$ even when $\OPT_t = 0$; below we take $\lambda\le H_T^{-(q+1)/q}$.

\subsection{Bounding parameter-change cost}\label{subsec:parameter-change-control}
To control the \decomplink{summand:param-change}{parameter-change cost} part in \cref{prop:regret-decomp}, we need an additional condition tailored to the parameter selection rule \eqref{eq:rule}.
This is indeed satisfied in all applications considered in this paper.\looseness=-1
\begin{assumption}[Parameter-change stability]\label[assumption]{assump:adaptive-pc-stability}
  Fix $\lambda>0$ and consider the selection rule of $\varepsilon_{\min}$ in~\eqref{eq:rule}.
  For any multiset $Y$, we write $\OPT(Y)\coloneqq\min_{\theta\in\Theta}\ell(\theta,Y)$ and $\varepsilon_Y\coloneqq\varepsilon_{\min}(\OPT(Y)+\lambda)$, where $\OPT(\emptyset)=0$ and $\varepsilon_\emptyset=\varepsilon_{\min}(\lambda)$.
  Under this setting, we assume that the offline algorithm family $\{\mathcal A_\varepsilon\}_{\varepsilon\in(0,1]}$ satisfies the following condition with constant $C_{\rm pc}\ge0$ for any multiset $X$ with $|X|\ge2$:\looseness=-1
  \[
  \frac1{|X|}\sum_{x\in X}
  \mathrm{TV}\prn*{
  \mathcal A_{\varepsilon_X}(X\setminus x),
  \mathcal A_{\varepsilon_{X\setminus x}}(X\setminus x)
  }
  \le
  \frac{C_{\rm pc}\varepsilon_X}{2|X|}
  \sum_{x\in X}\prn*{\OPT(X)-\OPT(X\setminus x)}.
  \]
\end{assumption}
\Cref{assump:adaptive-pc-stability} is for controlling the extra total variation caused by changing only the accuracy parameter from $\varepsilon_X$ to~$\varepsilon_{X\setminus x}$ on the same input $X\setminus x$.
In the applications below, we can verify this condition through a common two-step argument. First, for a fixed input $Y$, the output distributions satisfy
$
\mathrm{TV}(\mathcal A_{\varepsilon}(Y),\mathcal A_{\eta}(Y))
\le
O(\varphi(\varepsilon)\frac{\eta-\varepsilon}{\varepsilon})
$
for $0<\varepsilon\le\eta\le1$.
Second, the rule $\varepsilon_Y=\varepsilon_{\min}(\OPT(Y)+\lambda)$, together with a regularity condition of $\varphi$ in \cref{sec:calculus-for-corollary}, converts the relative parameter change into $\varepsilon_X(\OPT(X)-\OPT(X\setminus x))$, and averaging over $x\in X$ ensures the requirement in \cref{assump:adaptive-pc-stability}.
The following lemma is a direct consequence of \cref{assump:adaptive-pc-stability}; see \cref{asec:proof-pc-bound} for the proof.
\begin{restatable}[Controlling parameter-change cost]{lemma}{pcboundlemma}\label[lemma]{lem:pc-bound}
  Let $\OPT_0=0$ and assume \cref{assump:adaptive-pc-stability}. Then, the \decomplink{summand:param-change}{parameter-change cost} part in \cref{prop:regret-decomp} is bounded as follows:
  \[
  \E\brc*{
  \sum_{t=1}^{T}\frac{2}{t}\sum_{x\in X_t}
  \mathrm{TV}\prn*{
    \mathcal A_{\varepsilon_{X_t}}\prn*{X_t\setminus x},
    \mathcal A_{\varepsilon_{X_t\setminus x}}\prn*{X_t\setminus x}
  }}
  \le
  C_{\rm pc}\E\brc*{\sum_{t=1}^{T}\varepsilon_t(\OPT_t\!-\!\OPT_{t-1})}
  \!+O(1).
  \]
\end{restatable}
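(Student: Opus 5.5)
The plan is to apply \cref{assump:adaptive-pc-stability} pointwise at each round with $X=X_t$, and then use the random-order structure to turn the resulting average over deleted points into the one-step increment $\OPT_t-\OPT_{t-1}$.

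The round $t=1$ involves the empty multiset. By the footnote convention it contributes at most $2$, which we absorb into the $O(1)$. Its main-side term $\varepsilon_1(\OPT_1-\OPT_0)$ is nonnegative, so dropping it is harmless.

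For $t\ge2$, the assumption with $X=X_t$ bounds the $t$-th summand by
\[
\frac{2}{t}\sum_{x\in X_t}\mathrm{TV}\prn*{\mathcal A_{\varepsilon_{X_t}}(X_t\setminus x),\mathcal A_{\varepsilon_{X_t\setminus x}}(X_t\setminus x)}
\le \frac{C_{\rm pc}\varepsilon_{X_t}}{t}\sum_{x\in X_t}\prn*{\OPT(X_t)-\OPT(X_t\setminus x)}.
\]
Here the factor $2$ in the decomposition cancels the $1/2$ in the assumption. Since $\varepsilon_{X_t}=\varepsilon_t$ and $\OPT(X_t)=\OPT_t$, the right-hand side equals
\[
C_{\rm pc}\,\varepsilon_t\prn*{\OPT_t-\frac1t\sum_{x\in X_t}\OPT(X_t\setminus x)}.
\]

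Next we take expectations over the random order. The key observation is this: conditionally on the multiset $X_t$ (equivalently, on the unordered first $t$ elements), the last element $x_t$ is a uniformly random element of $X_t$. This holds because the order is a uniform permutation, so within the first $t$ positions every arrangement is equally likely. Hence
\[
\E\brc*{\OPT_{t-1}\mid X_t}=\frac1t\sum_{x\in X_t}\OPT(X_t\setminus x).
\]
Moreover $\varepsilon_t$ and $\OPT_t$ are functions of $X_t$ alone, so they are measurable with respect to this conditioning. Therefore
\[
\E\brc*{\varepsilon_t\prn*{\OPT_t-\tfrac1t\textstyle\sum_{x}\OPT(X_t\setminus x)}}
=\E\brc*{\varepsilon_t\prn*{\OPT_t-\E[\OPT_{t-1}\mid X_t]}}
=\E\brc*{\varepsilon_t(\OPT_t-\OPT_{t-1})}
\]
by the tower property.

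Summing over $t\ge2$ and adding back the $t=1$ boundary terms gives the claim. Note that $\OPT_t\ge\OPT_{t-1}$ because losses are nonnegative, so every term is nonnegative and no sign issue arises when the $t=1$ term is included.

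The main subtle step is the exchangeability argument, namely justifying that conditioning on the multiset $X_t$ makes the removed point uniform while $\varepsilon_t$ stays fixed. Everything else is bookkeeping. I would state this step carefully using labeled occurrences, consistent with the paper's multiset convention, so that repeated values cause no ambiguity.
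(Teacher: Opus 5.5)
Your proposal is correct and follows essentially the same route as the paper's proof: it bounds the $t=1$ summand by $2$ and absorbs it into $O(1)$, and applies \cref{assump:adaptive-pc-stability} to $X_t$ for $t\ge2$. It then uses that, conditioned on $X_t$, the prefix $X_{t-1}$ is $X_t\setminus x$ for a uniformly chosen occurrence $x$, so the tower property turns the averaged deletion gap into $\E[\varepsilon_t(\OPT_t-\OPT_{t-1})]$.
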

\wraprestatablewithlabelrestore{pcboundlemma}

\subsection{Small-loss regret bound}\label{subsec:small-loss-bound}
\Cref{lem:local-opt-eps,lem:pc-bound} combined with \cref{prop:regret-decomp} yield the following small-loss regret bound.
\begin{theorem}[Small-loss regret bound]
\label{thm:general-separable}
Under the random-order model and \cref{assump:approx-avesen}, fix $\lambda>0$ and suppose that \cref{assump:adaptive-pc-stability} holds for this $\lambda$.
Then, \Cref{alg:bto} with $\varepsilon_t$ given by \eqref{eq:rule} achieves
\[
\Reg_T
\le{}
H_T \varphi^{\star}\prn*{ \frac{\OPT_T}{H_T} + \lambda}
+
C_{\rm pc}\prn*{\varphi^\star(\OPT_T+\lambda)-\varphi^\star(\lambda)}
+ O(1).
\]
\end{theorem}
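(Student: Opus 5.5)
The plan is to start from the regret decomposition in \cref{prop:regret-decomp} and bound its two sums separately, then combine them. The rule \eqref{eq:rule} is a measurable function of the observed multiset, since $\varepsilon_Y=\varepsilon_{\min}(\OPT(Y)+\lambda)$ and $\varepsilon_{\min}$ is Borel. So \cref{prop:regret-decomp} applies with $\varepsilon_\emptyset=\varepsilon_{\min}(\lambda)$, and it gives $\Reg_T$ as the sum of the expected accuracy--stability term, the expected parameter-change cost, and $O(1)$.

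\textbf{Accuracy--stability term.} By \cref{lem:local-opt-eps}, for every realization of the random order, the inner sum is at most $H_T\varphi^\star\prn*{\sum_t(\OPT_t/t)/H_T+\lambda}$. I then take expectations. The map $s\mapsto H_T\varphi^\star(s/H_T+\lambda)$ is concave by \cref{lem:concave-conjugate-prop}, so Jensen's inequality moves the expectation inside. It is also non-decreasing, so \cref{lem:rom-lem}, which gives $\E[\sum_t\OPT_t/t]\le\OPT_T$, yields the first term $H_T\varphi^\star(\OPT_T/H_T+\lambda)$.

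\textbf{Parameter-change cost.} By \cref{lem:pc-bound}, this term is at most $C_{\rm pc}\E[\sum_t\varepsilon_t(\OPT_t-\OPT_{t-1})]+O(1)$. Here I use the supergradient property from \cref{lem:concave-conjugate-prop} with $u=\OPT_t+\lambda$ and $v=\OPT_{t-1}+\lambda$, both positive:
\[
\varepsilon_t(\OPT_t-\OPT_{t-1})=\varepsilon_{\min}(\OPT_t+\lambda)\bigl((\OPT_t+\lambda)-(\OPT_{t-1}+\lambda)\bigr)\le\varphi^\star(\OPT_t+\lambda)-\varphi^\star(\OPT_{t-1}+\lambda).
\]
Summing over $t$ telescopes to $\varphi^\star(\OPT_T+\lambda)-\varphi^\star(\OPT_0+\lambda)$, with $\OPT_0=0$, pathwise. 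Since $\OPT_T$ is deterministic once the multiset is fixed, the expectation does nothing, and this gives the second term $C_{\rm pc}(\varphi^\star(\OPT_T+\lambda)-\varphi^\star(\lambda))$.

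Adding the two bounds and collecting the $O(1)$ terms proves the theorem. The only step that needs care is the Jensen step for the random argument. It requires concavity of the composed map, and it requires $\varphi^\star$ to be finite on the range, which holds because $\mathrm{dom}(\varphi)$ is nonempty. For the telescoping step, note that the increments $\OPT_t-\OPT_{t-1}$ are nonnegative, although the supergradient inequality holds regardless of sign.
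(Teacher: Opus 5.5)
Your proposal is correct and follows the paper's proof essentially step for step. You bound the accuracy--stability sum from \cref{prop:regret-decomp} via \cref{lem:local-opt-eps}, Jensen's inequality, \cref{lem:rom-lem}, and monotonicity of $\varphi^\star$. You bound the parameter-change sum via \cref{lem:pc-bound}, the supergradient inequality of \cref{lem:concave-conjugate-prop} at $a_t=\OPT_t+\lambda$ versus $a_{t-1}=\OPT_{t-1}+\lambda$, and telescoping; your remarks on measurability and finiteness of $\varphi^\star$ only make explicit what the paper leaves implicit.
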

\begin{proof}
  First, we bound the \decomplink{summand:approx-sens}{accuracy--stability trade-off} part in \cref{prop:regret-decomp} via \cref{lem:local-opt-eps}.
  Writing $u_T = \prn{\sum_{t=1}^{T} \OPT_t/t}/H_T + \lambda$, Jensen's inequality for concave $\varphi^{\star}$ gives $\E[\varphi^{\star}(u_T)]\le \varphi^{\star}(\E[u_T])$.
  Since $\E\brc{u_T} \le \OPT_T / H_T + \lambda$ follows from \cref{lem:rom-lem} and $\varphi^{\star}$ is non-decreasing, \cref{lem:local-opt-eps} implies that the \decomplink{summand:approx-sens}{accuracy--stability trade-off} part is at most $H_T\varphi^{\star}(\OPT_T/H_T+\lambda)$.
  We then upper bound the \decomplink{summand:param-change}{parameter-change cost} part via \cref{lem:pc-bound}.
  Let $a_t = \OPT_t + \lambda$ for $t\in[T]$ and $a_0 = \lambda$.
  It holds that $\varepsilon_t(\OPT_t-\OPT_{t-1}) = \varepsilon_{\min}(a_t)(a_t - a_{t-1}) \le \varphi^{\star}(a_t) - \varphi^{\star}(a_{t-1})$ thanks to the supergradient property of $\varepsilon_{\min}$ in \cref{lem:concave-conjugate-prop}.
  Telescoping the sum over $t$, taking expectation, and using \cref{lem:pc-bound} yield a bound of $C_{\rm pc}(\varphi^{\star}(\OPT_T+\lambda) - \varphi^{\star}(\lambda))$ on the \decomplink{summand:param-change}{parameter-change cost} part.
  Finally, summing the bounds on the two parts completes the proof.
\end{proof}
Below, we instantiate this theorem for a commonly used form of $\varphi$; see \cref{asec:proof-cor-simple-form} for the derivation.\looseness=-1
\begin{restatable}{corollary}{simpleformcorollary}\label[corollary]{cor:simple-form}
  Let $q>0$ be a constant.
  Assume $\varphi(\varepsilon)=C_1\varepsilon^{-q}\log(\mathrm{e} + C_2/\varepsilon)$ for $\varepsilon\in(0,1]$ and $\varphi(\varepsilon)=+\infty$ for $\varepsilon\notin(0,1]$ for some $C_1 \ge 1$ and $C_2 \ge 0$.
  Then, for any $\lambda \in (0, H_T^{-(q+1)/q}]$ such that \cref{assump:adaptive-pc-stability} holds, \cref{thm:general-separable} yields
    $
    \Reg_T
    =
    \tilde O
    \prn{
    (1+C_{\rm pc})\prn{C_1 + C_1^{{1}/{(q+1)}}\OPT_T^{{q}/{(q+1)}}}
    }
    $.
\end{restatable}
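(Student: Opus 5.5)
The plan is to plug the given $\varphi$ into \cref{thm:general-separable} and bound $\varphi^\star$ by choosing a specific $\varepsilon$ in its defining infimum. For any $u>0$ and any $\varepsilon\in(0,1]$ we have $\varphi^\star(u)\le u\varepsilon+\varphi(\varepsilon)$. I will take $\varepsilon=\min\{1,(C_1/u)^{1/(q+1)}\}$, which balances $u\varepsilon$ against $C_1\varepsilon^{-q}$, and treat the two cases separately.

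\emph{Case $u\le C_1$.} Here $\varepsilon=1$, so $\varphi^\star(u)\le u+C_1\log(\mathrm{e}+C_2)\le C_1(1+\log(\mathrm{e}+C_2))$.

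\emph{Case $u>C_1$.} Here $\varepsilon=(C_1/u)^{1/(q+1)}$, and
\[
u\varepsilon=C_1^{1/(q+1)}u^{q/(q+1)}, \qquad C_1\varepsilon^{-q}=C_1^{1/(q+1)}u^{q/(q+1)}.
\]
The logarithmic factor satisfies $\log(\mathrm{e}+C_2/\varepsilon)\le\log(\mathrm{e}+C_2(u/C_1)^{1/(q+1)})$. This is polylogarithmic in $u$ and $C_2$, and so it is absorbed into $\tilde O$ provided $u\le\mathrm{poly}(T)$, which holds since $\OPT_T\le T$.

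Combining the cases gives
\[
\varphi^\star(u)\le\bigl(2C_1+2C_1^{1/(q+1)}u^{q/(q+1)}\bigr)\log\bigl(\mathrm{e}+C_2(1+u)\bigr).
\]

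Next I apply this to the first term of \cref{thm:general-separable} with $u=\OPT_T/H_T+\lambda$. Subadditivity of $u\mapsto u^{q/(q+1)}$ gives $u^{q/(q+1)}\le\OPT_T^{q/(q+1)}H_T^{-q/(q+1)}+\lambda^{q/(q+1)}$. The condition $\lambda\le H_T^{-(q+1)/q}$ gives $\lambda^{q/(q+1)}\le H_T^{-1}$, so after multiplying by $H_T$ the $\lambda$ contribution is $O(C_1^{1/(q+1)})\le O(C_1)$ since $C_1\ge1$. The term $H_T\cdot 2C_1$ is $\tilde O(C_1)$ because $H_T=O(\log T)$. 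So the first term is $\tilde O(C_1+C_1^{1/(q+1)}\OPT_T^{q/(q+1)})$.

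For the parameter-change term, $\varphi^\star(\lambda)\ge0$ can be dropped. Applying the same bound with $u=\OPT_T+\lambda$ and using $\lambda\le1$ gives $\tilde O(C_1+C_1^{1/(q+1)}\OPT_T^{q/(q+1)})$, multiplied by $C_{\rm pc}$. Adding the two terms and the $O(1)$ yields the claim.

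The main obstacle is bookkeeping rather than substance: keeping the $\log(\mathrm{e}+C_2/\varepsilon)$ factor within polylog. I would state that $\tilde O$ hides factors $\log(\mathrm{e}+C_2T)$, since $\varepsilon\ge(C_1/(T+1))^{1/(q+1)}\ge (T+1)^{-1/(q+1)}$. I also need to make sure the choice of $\varepsilon$ lies in $(0,1]$, which the $\min$ guarantees.
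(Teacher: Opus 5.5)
Your proposal is correct and follows essentially the same route as the paper: bound $\varphi^\star(u)$ by plugging a feasible $\varepsilon=\min\{1,(cC_1/u)^{1/(q+1)}\}$ into its definition, apply the bound at $u=\OPT_T/H_T+\lambda$ using subadditivity and $\lambda^{q/(q+1)}\le H_T^{-1}$, then drop $\varphi^\star(\lambda)\ge0$ and reuse the estimate at $u=\OPT_T+\lambda$ with $\lambda\le1$. The only difference is cosmetic: the paper takes $c=q$ (the exact minimizer of $u\varepsilon+C_1\varepsilon^{-q}$) and gets a single bound without a case split, whereas your $c=1$ and two cases change only constants.
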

\wraprestatablewithlabelrestore{simpleformcorollary}
In the above bound, the dependence on $C_2$ is logarithmic and thus hidden by $\tilde O$; see \cref{asec:proof-cor-simple-form} for the explicit bound.
Note that $C_1$, $C_2$, and $C_{\rm pc}$ may depend on problem-specific parameters and $T$.

\textbf{Recovering the $(1+\varepsilon)$-approximate regret bound.\;}
Under $C_{\rm pc}=O(1)$, \cref{thm:general-separable} simplifies to $\Reg_T\le C_0H_T\varphi^{\star}(\OPT_T/H_T + \lambda)$ for some constant $C_0 \ge 1$, and the definition of $\varphi^{\star}$ implies $\varphi^{\star}(\OPT_T/H_T + \lambda) \le \varepsilon C_0^{-1}(\OPT_T/H_T + \lambda) + \varphi(\varepsilon/C_0)$ for $\varepsilon\in(0,1]$.
Combining them with $\lambda = O(C_0/(\varepsilon H_T))$ yields
$
  \E\brc{\sum_{t=1}^T \ell\prn*{\theta_t,x_t}} - (1+\varepsilon)\OPT_T
  \le
  \varphi(\varepsilon/C_0)\log T + O(1)
$,
recovering the $(1+\varepsilon)$-approximate regret bound of \citet{Dong2023-yh} up to $O(1)$ and~$C_0$ in the argument of $\varphi$.
Thus, our bound refines their fixed-$\varepsilon$ result by tuning $\varepsilon$ to optimize the regret bound without knowing $\OPT_T$;
note that a naive doubling does not apply in this random-order analysis (see \cref{sec:doubling-trick}).\looseness=-1

\textbf{Inconsistency bound.\;}
In addition to regret, \citet{Dong2023-yh} considered \emph{inconsistency} $\mathrm{Inc}_T$, the expected number of changes in the maintained outputs between consecutive rounds.
This notion is relevant when changing a deployed decision incurs switching, communication, or recomputation costs.
Using independent reruns of \cref{alg:bto} is not sufficient for controlling this quantity.
However, as shown in \cref{asec:small-loss-inconsistency}, if consecutive output distributions are implemented through probability transformation \citep[Definition~4.1 and Lemma~4.5]{Yoshida2022-pb}, we can prove
$
\mathrm{Inc}_T \lesssim
H_T\varphi^\star\prn*{\OPT_T/H_T+\lambda}
+
C_{\rm pc}\prn*{\varphi^\star(\OPT_T+\lambda)-\varphi^\star(\lambda)}
$ (see \cref{thm:small-loss-inconsistency}), while the marginal law at each $t$ remains $\mathcal A_{\varepsilon_t}(X_t)$ and hence the regret guarantee is unchanged.
For the function $\varphi$ in \cref{cor:simple-form}, this recovers the fixed-$\varepsilon$ inconsistency bound of $O(\varphi(\varepsilon)\log T)$ obtained by \citet{Dong2023-yh} (see \cref{cor:inconsistency-simple-form}).\looseness=-1

\section{Application to online submodular function minimization}\label{sec:submodular}
This section shows how our framework applies to online submodular function minimization \citep{Hazan2012-hc}.
Let $V$ be a finite ground set of size $n \in \Z_{>0}$ and $\Theta = 2^V$ the decision set.
We write $\ell_t(\cdot)$ for $\ell(\cdot, x_t)$ for convenience, and we assume that each $\ell_t\colon 2^V \to [0,1]$ is a submodular function, i.e., $\ell_t(S_1) + \ell_t(S_2) \ge \ell_t(S_1 \cup S_2) + \ell_t(S_1 \cap S_2)$ for any $S_1, S_2 \subseteq V$.
In the random-order model, the multiset of functions $\ell_t$ is chosen by an adversary but they arrive in a uniformly random order.
We sequentially select $S_t \in \Theta$ to minimize the regret against the optimal value $\OPT_T = \min_{S \in \Theta} \sum_{t=1}^T \ell_t(S)$.\looseness=-1

\subsection{Offline algorithm}
Our offline algorithm is based on a sparsification method of \citet{Kenneth2024-pf} for \emph{submodular hypergraphs};
see \cref{sec:submod-avg-sens-proof} for details.
Their algorithm constructs a $(1\pm\varepsilon)$-cut sparsifier in polynomial time for the following general setting, and we can show that it has low average sensitivity.\looseness=-1

Let $H=(U,E,\set{g_e}_{e\in E})$ be a submodular hypergraph, where $U$ is a finite vertex set, $E$ is a multiset of hyperedges on $U$, and each $e \in E$ has a non-negative submodular splitting function $g_e\colon 2^e\to\R_{\ge0}$ with $g_e(\emptyset)=0$.
We call $H'=(U,E',\set{g'_e}_{e\in E'})$ a \emph{reweighted subgraph} of $H$  if $E' \subseteq E$ and for each $e \in E'$, there exists $s_e\!>\!0$ such that $g'_e\!\equiv\!s_e g_e$.
We define the cut value as $\mathrm{cut}_H(A)\!=\!\sum_{e\in E} g_e(A\cap e)$ for $A\subseteq U$.
\Citet[Theorem~1.4]{Kenneth2024-pf} show that for any $\varepsilon > 0$, we can construct a reweighted subgraph $H'$ of $H$ with $|E'| = \tilde O(\varepsilon^{-2} |U|^3)$ hyperedges such that, with high probability,\looseness=-1
\begin{equation}\label{eq:cut-approx}
  (1 - \varepsilon)\cdot\mathrm{cut}_H(A)
  \le
  \mathrm{cut}_{H'}(A)
  \le (1 + \varepsilon)\cdot\mathrm{cut}_H(A)
  \qquad
  \text{for all \; $A \subseteq U$.}
\end{equation}
To apply this guarantee to online submodular function minimization, where $\ell_t(\emptyset)$ need not be zero, we use a one-vertex anchored lift: let $r\notin V$ be a dummy vertex, set $U=V^+\coloneqq V\cup\set{r}$, and, for each loss $\ell_t\colon 2^V\to[0,1]$, create a hyperedge $e_t=U$ with splitting function $g_{e_t}\colon 2^U\to\R_{\ge0}$ defined by $g_{e_t}(\emptyset)=0$ and $g_{e_t}(A)=\ell_t(A\cap V)$ for $A\neq\emptyset$.
This gives valid non-negative submodular splitting functions (see \cref{subsec:anchored-lift}).
With $E_t=\set*{e_1,\dots,e_t}$ and $H_t^+=(U,E_t,\set{g_e}_{e\in E_t})$, each original action $S\subseteq V$ is represented by the anchored cut $S\cup\set{r}$, for which $\mathrm{cut}_{H_t^+}(S\cup\set{r})=\sum_{s=1}^t \ell_s(S)$.
The offline algorithm constructs a $(1\pm\varepsilon')$-cut sparsifier $\tilde H_t^+$ of $H_t^+$ for appropriate $\varepsilon'=\Theta(\varepsilon)$ and returns a minimizer of $S\mapsto\mathrm{cut}_{\tilde H_t^+}(S\cup\set{r})$ over $S\subseteq V$, using a fixed deterministic tie-breaking rule.\looseness=-1

The main remaining task is to upper bound the average sensitivity of this offline algorithm.
For a submodular hypergraph $H = (U, E, \set{g_e}_{e \in E})$, let $H - e$ denote the hypergraph obtained by removing $e \in E$ and its splitting function $g_e$ from $H$.
Consider applying the offline algorithm to $H$ and $H - e$.
Thanks to the data processing inequality, the deterministic minimization step does not increase the total variation distance between the two output distributions.
Therefore, it suffices to analyze the average sensitivity of the sparsifier construction step.
In \cref{sec:submod-avg-sens-proof}, we prove the following bound.\looseness=-1
\begin{proposition}\label[proposition]{prop:submod-avg-sens}
  Let $\varepsilon \in (0,1]$.
  For a submodular hypergraph $H = (U, E, \set{g_e}_{e \in E})$, let $\mathcal{L}_\varepsilon(H)$ be the law of the $(1\pm\varepsilon)$-cut sparsifier constructed by the method of \citet{Kenneth2024-pf} (with weight perturbation in \cref{subsec:weight-perturbation}), which satisfies \eqref{eq:cut-approx} with probability $1-1/T$.
  The average sensitivity of~$\mathcal{L}_\varepsilon$ is bounded as
  $
    \sum_{e \in E} \mathrm{TV}\prn*{\mathcal{L}_\varepsilon(H), \mathcal{L}_\varepsilon(H - e)} / |E|
    =
    O\prn*{ \varepsilon^{-3}(|U|^{3} + |U|^2 \log T) / |E|}
  $.
\end{proposition}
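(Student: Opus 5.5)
The approach is to treat the Kenneth--Krauthgamer sparsifier as importance sampling: each hyperedge $e$ is kept independently with probability $p_e=\min\{1,\kappa\,\sigma_e(H)\}$ and reweighted by $1/p_e$. Here $\sigma_e(H)$ is an \emph{importance score}, for instance an energy- or sensitivity-type quantity $\max_{A}g_e(A\cap e)/\mathrm{cut}_H(A)$, or its version relative to a fixed hypergraph ordering. The oversampling factor is $\kappa=\Theta(\varepsilon^{-2}(|U|+\log T))$, chosen so that \eqref{eq:cut-approx} holds with probability $1-1/T$. Their analysis bounds the total importance by $\sum_{e}\sigma_e(H)=O(|U|^{2})$ up to polylog factors; this is what gives $|E'|=\tilde O(\varepsilon^{-2}|U|^3)$.

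The plan has three steps.

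\emph{Step 1: coupling.} Run the sampling on $H$ and on $H-e$ with shared uniform variables $\xi_f$, one per $f\in E\setminus\{e\}$, keeping $f$ iff $\xi_f\le p_f$. Subadditivity of TV over independent coordinates then gives
\[
\mathrm{TV}\bigl(\mathcal L_\varepsilon(H),\mathcal L_\varepsilon(H-e)\bigr)\le p_e+\sum_{f\neq e}\mathrm{TV}\bigl(\mathrm{Ber}(p_f)\otimes w_f,\ \mathrm{Ber}(p'_f)\otimes w'_f\bigr),
\]
where $p'_f,w'_f$ are the probability and weight computed in $H-e$.

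\emph{Step 2: deletion term.} Averaging the term $p_e$ over $e$ gives $\kappa\sum_e\sigma_e/|E|=O(\varepsilon^{-2}(|U|+\log T)|U|^2/|E|)$. This is already within the target bound.

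\emph{Step 3: perturbation term (the main obstacle).} Removing $e$ changes every other hyperedge's probability and its weight $1/p_f$, and the output weights are continuous. Two quantities must be controlled.

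For the probabilities, I would show $|p_f-p'_f|\le\kappa\,\sigma_f(H)\,\sigma_e(H)$ up to constants. The reason is that deleting $e$ lowers each cut value by at most a $\sigma_e$-fraction, so each score changes multiplicatively by $1/(1-\sigma_e)$, at least when $\sigma_e\le1/2$; the case $\sigma_e>1/2$ is absorbed into the trivial bound $\mathrm{TV}\le1$, which is paid at most $O(|U|^2)$ times. Summing over $f$ and averaging over $e$ then gives $\kappa(\sum\sigma)^2/|E|$.

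For the weights, this is where the weight perturbation of \cref{subsec:weight-perturbation} enters. I would assume it multiplies each kept weight by a random factor drawn from a smooth density of relative width $\Theta(\varepsilon)$. Such a factor only costs a $(1\pm O(\varepsilon))$ loss in \eqref{eq:cut-approx}. It also makes the TV between the weight laws $1/p_f$ and $1/p'_f$ at most $O(|p_f-p'_f|/(\varepsilon p_f))$. The kept edge contributes only with probability $p_f$, so the weight part is bounded by $\varepsilon^{-1}\kappa\sigma_f\sigma_e$.

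Summing both parts gives
\[
\frac1{|E|}\sum_e\sum_f(\cdots)\lesssim\frac{\varepsilon^{-1}\kappa\,(\sum\sigma)^2}{|E|}.
\]
If the score bound is sharp, this is $\varepsilon^{-3}(|U|+\log T)|U|^{4}/|E|$, which exceeds the target by a factor $|U|^2$. So the hardest part is to tighten this step. Two routes are available. The first is to show that $\sum_f|\sigma_f(H)-\sigma_f(H-e)|=O(\sigma_e)$, i.e.\ that the total score mass moves only by the deleted edge's own mass; this is plausible since scores are normalized by cut values. The second is to use their group-wise (ordering-based) sampling, where deleting $e$ affects only edges in the same group. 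Either route turns the double sum into $\varepsilon^{-1}\kappa\sum_e\sigma_e=O(\varepsilon^{-3}(|U|^3+|U|^2\log T))$ after averaging, matching the claim.

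Finally, the data processing inequality passes the bound through the deterministic minimization step.
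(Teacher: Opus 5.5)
\textbf{There is a genuine gap in Step 3.} Your skeleton matches the paper's proof. It uses a product decomposition over hyperedges. The uniform weight perturbation $\widetilde p_f\sim\mathrm{Unif}([p_f,(1+\varepsilon/2)p_f])$ gives a per-coordinate bound $\mathrm{TV}(P^{(f)},Q^{(f)})\le\frac{4}{\varepsilon}|p_f(H)-p_f(H-e)|$. Clipping $\min\{1,Mx\}$ (your $\kappa$ is the paper's $M$) is $M$-Lipschitz, and you finish by averaging against the total score.

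The step you yourself flag as the crux is exactly where the proof lives, and you leave it open. The bound you actually derive, $|p_f-p_f'|\lesssim\kappa\sigma_f\sigma_e$, misses the target by $|U|^2$. The two ``routes'' are only asserted.

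Route~1 is the right target. However, it does not follow from scores being ``normalized by cut values,'' and it is false for the sensitivity-type score $\max_A g_e(A\cap e)/\mathrm{cut}_H(A)$ you wrote down. Take $U=\{c,v_1,\ldots,v_k\}$, one hyperedge $e=U$ with the standard hypergraph cut function, and $m$ parallel graph edges $\{v_i,c\}$ for each $i$. Then:
\begin{itemize}
\item every graph edge $f$ has $\sigma_f(H)=1/(m+1)$ and $\sigma_f(H-e)=1/m$;
\item $\sigma_e(H)=1/(m+1)$;
\item so the total change is $km\bigl(\frac1m-\frac1{m+1}\bigr)=k\,\sigma_e(H)$, which is $|U|-1$ times too large.
\end{itemize}
Route~2, group-wise sampling, is not the construction used here, and ``deleting $e$ only affects its group'' is not justified.

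\textbf{The missing idea is the specific form of the Kenneth--Krauthgamer probabilities.} They are $p_e=\min\{1,Ms_e(H)\}$ with $s_e=\rho_e+\rho'_e$, where $\rho_e(H)=\sum_{(a,b)}g^e_{a\to b}/\sum_{f}g^f_{a\to b}$ and $\rho'_e(H)=g_e(e)/\sum_f g_f(f)$. Thus $s$ is a sum of $|U|^2+1$ probability vectors indexed by $E$. This gives $\sum_e s_e\le|U|^2+1$ exactly, with no polylog factor, which your ``$O(|U|^2)$ up to polylog'' would not give.

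Deleting $e$ only drops entry $e$ from each of these vectors and renormalizes the rest. Fix a pair $(a,b)$ with $D=\sum_f g^f_{a\to b}$ and $\alpha=g^e_{a\to b}$. The other entries then change in total by $\sum_{f\ne e}g^f_{a\to b}\bigl(\frac{1}{D-\alpha}-\frac1D\bigr)=\frac{\alpha}{D}$, which is precisely $e$'s own entry. Summing over all $|U|^2+1$ families gives
\[
\sum_{f\ne e}|s_f(H)-s_f(H-e)|\le s_e(H),
\]
hence $\sum_f|p_f(H)-p_f(H-e)|\le 2Ms_e(H)$. Averaging over $e$ then yields $\frac{8M}{\varepsilon|E|}(|U|^2+1)$ with $M=\Theta(\varepsilon^{-2}(|U|+\log T))$, which is the claimed bound.

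Without this exact conservation identity, or an equivalent one proved for whatever score you adopt, your argument does not reach the statement.
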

Since the lifted hypergraph has $|U| = n+1$ vertices, this implies $\varphi(\varepsilon)=C_{\rm sub}\varepsilon^{-3}(n^3+n^2\log T)$ for some constant $C_{\rm sub} \ge 1$.
\Cref{prop:submod-same-hypergraph-pc} also shows that \cref{assump:adaptive-pc-stability} is satisfied with $C_{\rm pc} = O(1)$.\looseness=-1

\subsection{Regret analysis}
Given the above offline approximation algorithm equipped with the average sensitivity bound and the parameter-change stability, we can apply \cref{cor:simple-form} to obtain the following result for online submodular function minimization under the random-order model.\looseness=-1
\begin{theorem}\label{thm:submod-regret}
  For online submodular function minimization under the random-order model, \cref{alg:bto} with $\lambda=H_T^{-4/3}$ runs in polynomial time and achieves
    $\Reg_T
    =
    \tilde O\prn[\big]{n^3+n^{3/4}\OPT_T^{3/4}}$.
\end{theorem}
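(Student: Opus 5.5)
The plan is to instantiate \cref{cor:simple-form} with $q=3$, after checking \cref{assump:approx-avesen} and \cref{assump:adaptive-pc-stability} for the sparsifier-based offline algorithm and then arguing polynomial running time.

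\textbf{Step 1: \cref{assump:approx-avesen}.} Use the anchored lift $H_t^+$ and a $(1\pm\varepsilon')$-cut sparsifier $\tilde H_t^+$ with $\varepsilon'=\varepsilon/3$. On the event that \eqref{eq:cut-approx} holds, which has probability at least $1-1/T$, let $\hat S$ be the returned minimizer and $S^\star$ an optimal set. Then
\[
\ell(\hat S,X_t)\le \frac{\mathrm{cut}_{\tilde H_t^+}(\hat S\cup\{r\})}{1-\varepsilon'}\le \frac{\mathrm{cut}_{\tilde H_t^+}(S^\star\cup\{r\})}{1-\varepsilon'}\le\frac{1+\varepsilon'}{1-\varepsilon'}\OPT_t\le(1+\varepsilon)\OPT_t .
\]
On the failure event the loss is at most $t$, because every $\ell_s\in[0,1]$. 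This contributes at most $t/T$ in expectation, which is exactly the allowed additive $O(t/T)$ term.

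For the sensitivity, \cref{prop:submod-avg-sens} bounds the average sensitivity of the sparsifier law, and the data processing inequality carries this bound through the deterministic minimization step. With $|U|=n+1$ this gives
\[
\varphi(\varepsilon)=C_{\rm sub}\varepsilon^{-3}(n^3+n^2\log T),
\]
with the constant absorbing $\varepsilon'=\Theta(\varepsilon)$, and with $\varphi=+\infty$ outside $(0,1]$. This $\varphi$ has the form required in \cref{cor:simple-form}: take $q=3$, $C_1=\Theta(n^3+n^2\log T)$, and $C_2=0$.

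\textbf{Step 2: \cref{assump:adaptive-pc-stability}.} We take $C_{\rm pc}=O(1)$ from \cref{prop:submod-same-hypergraph-pc}. Since $\lambda=H_T^{-4/3}=H_T^{-(q+1)/q}$ for $q=3$, the admissibility condition on $\lambda$ in \cref{cor:simple-form} holds.

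\textbf{Step 3: the regret bound.} \cref{cor:simple-form} gives
\[
\Reg_T=\tilde O\bigl(C_1+C_1^{1/4}\OPT_T^{3/4}\bigr).
\]
Substituting $C_1=\tilde O(n^3)$ turns the first term into $\tilde O(n^3)$. For the second term, $C_1^{1/4}=O\bigl(n^{3/4}+n^{1/2}\log^{1/4}T\bigr)=\tilde O(n^{3/4})$. Together these give $\tilde O\bigl(n^3+n^{3/4}\OPT_T^{3/4}\bigr)$.

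\textbf{Step 4: polynomial running time.} I expect this to be the main obstacle, since \cref{thm:general-separable} explicitly makes no efficiency claim. Each round needs three things.
\begin{itemize}
\item The value $\OPT_t$. The function $S\mapsto\sum_{s\le t}\ell_s(S)$ is submodular, so $\OPT_t$ can be computed exactly with a strongly polynomial submodular minimization algorithm using value-oracle calls.
\item The parameter $\varepsilon_t=\varepsilon_{\min}(\OPT_t+\lambda)$. This is a one-dimensional minimization of $u\varepsilon+C\varepsilon^{-3}$ over $(0,1]$. It has the closed form $\min\{1,(3C/u)^{1/4}\}$, which is computable under the exact-real oracle model.
\item The sparsifier and the final minimization. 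The construction of \citet{Kenneth2024-pf}, including the weight perturbation, runs in polynomial time in $n$, $t$ and $1/\varepsilon_t$. Then $\mathrm{cut}_{\tilde H_t^+}(\cdot\cup\{r\})$ is a nonnegative combination of submodular functions, so its minimizer is again found by submodular minimization.
\end{itemize}
The remaining check is that $1/\varepsilon_t$ is polynomially bounded. We have $\varepsilon_t\gtrsim(C_1/(t+\lambda))^{1/4}$ whenever it is below $1$, and $\OPT_t\le t$, so $1/\varepsilon_t=O\bigl(T^{1/4}\bigr)$, which is polynomial. Hence every round runs in polynomial time, which completes the proof.
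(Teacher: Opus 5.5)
Your proposal is correct and follows the paper's proof. The paper likewise instantiates \cref{cor:simple-form} with $q=3$, $C_1=C_{\rm sub}(n^3+n^2\log T)$, $C_2=0$, and $C_{\rm pc}=O(1)$, obtained from \cref{prop:submod-avg-sens,prop:submod-same-hypergraph-pc}; your extra checks of the $(1+\varepsilon)$-approximation and of polynomial running time spell out what the paper leaves to \cref{rem:accuracy-translation} and its appendix discussion of computational complexity (your bound on $1/\varepsilon_t$ is harmless but unnecessary, since the importance-sampling step visits each of the $t$ hyperedges once whatever $\varepsilon_t$ is).
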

\begin{proof}
  Recall that the multiset of $t$ submodular functions $\ell_1,\dots,\ell_t$ corresponds to a submodular hypergraph $H_t^+$ on $n+1$ vertices with $|E_t| = t$ hyperedges.
  Then, \Cref{prop:submod-avg-sens} guarantees that the $(1 + \varepsilon)$-approximate offline algorithm satisfies an average sensitivity bound with $\varphi(\varepsilon) = C_{\rm sub}\varepsilon^{-3}(n^3 + n^2\log T)$.
  Moreover, \cref{prop:submod-same-hypergraph-pc} establishes \cref{assump:adaptive-pc-stability} with $C_{\rm pc}=O(1)$.
  Thus, \Cref{cor:simple-form} with $q=3$, $C_1 = C_{\rm sub}(n^3 + n^2\log T)$, $C_2 = 0$, and $C_{\rm pc}=O(1)$ proves the claim.\looseness=-1
\end{proof}
For online submodular function minimization in the adversarial regime, an $O(\sqrt{n \OPT_T} + n)$ regret bound is achievable by treating $2^n$ subsets as experts and applying the multiplicative weights update (see, e.g., \citet[Section~2.4]{Cesa-Bianchi2006-oa}).\footnote{An idealized continuous-domain exponential-weights argument over the Lovász extension also suggests a possible route to small-loss bounds. However, the efficient random-walk tracking result of \citet{Narayanan2010-vb} is proved for linear Lipschitz losses and does not by itself yield a polynomial-time small-loss guarantee for the non-smooth Lovász-extension losses.\looseness=-1}
However, this is prohibitively expensive, and therefore researchers have explored polynomial-time algorithms.
\Citet{Hazan2012-hc} obtained a regret bound of~$O(\sqrt{nT})$ in the adversarial regime with the online subgradient descent.
\Citet{Ito2022-oy} obtained a best-of-both-worlds bound---$O(n/\Delta)$ in the stochastic regime with the suboptimality gap $\Delta > 0$ and $O(\sqrt{nT})$ in the adversarial regime---and extended it to the stochastic setting with adversarial corruptions.
Unlike these prior results, we focus on the random-order model and obtain a small-loss regret bound.
Notably, when $\OPT_T = O(1)$, our result achieves a regret bound that is polylogarithmic in $T$ for fixed $n$ even under the random-order model, without any assumption on the suboptimality gap $\Delta$.

\section{\texorpdfstring{Application to online $\ell_1$ regression}{Application to online l1 regression}}\label{sec:online-regression}
We next instantiate the framework for online $\ell_1$ regression. At each round $t$, the learner incurs the absolute loss $\ell\prn*{\theta_t, x_t} = \abs*{\inpr{a_t, \theta_t} - b_t}$, where $x_t = (a_t, b_t) \in \R^d \times \R$ is a pair of a feature vector and a target value, and computes $\theta_{t+1} \in \Theta \subseteq \R^d$ based on $\set*{x_s}_{s=1}^t$ for the next round.
We assume that $\Theta$ is a bounded convex set over which offline $\ell_1$ regression is polynomial-time solvable and that $\ell(\theta,x_t)\in[0,1]$ holds for all $\theta\in\Theta$ and all adversarially chosen data points $x_t$, possibly after a known problem-dependent scaling.
In the random-order model, the data points arrive in a uniformly random order.\looseness=-1

\textbf{Offline algorithm.\;}
We use a $(1+\varepsilon)$-approximate polynomial-time offline $\ell_1$-regression algorithm based on \emph{Lewis-weight sampling}~\citep{Parulekar2021-bm}; see \cref{sec:offline-regression} for details.
Given data points $X_t$, it performs sampling based on Lewis weights of $\set*{(a_s,-b_s)}_{s\in [t]}$ to yield a weighted multiset of sampled data points, so that the weighted empirical loss multiplicatively approximates $\ell(\theta,X_t)$ for all $\theta\in\Theta$ with high probability; then, it returns a minimizer of the weighted empirical loss.
We can bound the average sensitivity of this algorithm by ${\varphi(\varepsilon)}/({2t})$ with
$
  \varphi(\varepsilon)
  =
  C_{\ell_1}
  {d}{\varepsilon^{-3}}\log (\mathrm e+{dT}/{\varepsilon})
$
for some constant $C_{\ell_1} \ge 1$ and verify \cref{assump:adaptive-pc-stability} with $C_{\rm pc}=O(1)$; see \cref{thm:avg-sensitivity-l1,prop:l1-same-input-pc}, respectively.\looseness=-1

\textbf{Regret analysis.\;}
Given the offline algorithm with $\varphi(\varepsilon) = C_{\ell_1} {d}{\varepsilon^{-3}}\log (\mathrm e+{dT}/{\varepsilon})$ and $C_{\rm pc}=O(1)$, we can apply \cref{cor:simple-form} to obtain the following small-loss regret bound.
\begin{theorem}\label{thm:regression-regret}
  For online $\ell_1$ regression under the random-order model, \cref{alg:bto} with $\lambda=H_T^{-4/3}$ runs in polynomial time and achieves
  $
    \Reg_T
    =
    \tilde O\prn[\big]{
      d+
      d^{1/4}\OPT_T^{{3}/{4}}
    }
  $.
\end{theorem}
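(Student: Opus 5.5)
The proof mirrors that of \cref{thm:submod-regret}: combine the offline guarantees for the Lewis-weight-sampling algorithm with \cref{cor:simple-form}, then check the polynomial-time claim.

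\textbf{Assumptions.} First we verify that the offline algorithm of \cref{sec:offline-regression} satisfies \cref{assump:approx-avesen}. On the high-probability event (failure probability $1/T$), the weighted empirical loss is within a factor $(1\pm\varepsilon')$ of $\ell(\theta,X_t)$ uniformly over $\theta\in\Theta$, with $\varepsilon'=\Theta(\varepsilon)$. Its minimizer $\hat\theta$ therefore satisfies $\ell(\hat\theta,X_t)\le\frac{1+\varepsilon'}{1-\varepsilon'}\OPT_t\le(1+\varepsilon)\OPT_t$ for suitable constants. On the failure event, the loss is at most $t$ because $\ell\in[0,1]$. This gives the additive $O(t/T)$ term in expectation. \Cref{thm:avg-sensitivity-l1} supplies the average-sensitivity bound $\varphi(\varepsilon)/(2t)$ with $\varphi(\varepsilon)=C_{\ell_1}d\varepsilon^{-3}\log(\mathrm e+dT/\varepsilon)$. \Cref{prop:l1-same-input-pc} gives \cref{assump:adaptive-pc-stability} with $C_{\rm pc}=O(1)$ for our choice of $\lambda$.

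\textbf{Applying \cref{cor:simple-form}.} The function $\varphi$ has exactly the form required there, with $q=3$, $C_1=C_{\ell_1}d\ge1$, and $C_2=dT$; outside $(0,1]$ we set $\varphi=+\infty$. The choice $\lambda=H_T^{-4/3}=H_T^{-(q+1)/q}$ is admissible. The corollary then yields
$\Reg_T=\tilde O\bigl((1+C_{\rm pc})(C_{\ell_1}d+(C_{\ell_1}d)^{1/4}\OPT_T^{3/4})\bigr)=\tilde O(d+d^{1/4}\OPT_T^{3/4})$.
The $C_2$-dependence is only $\log(\mathrm e+dT)$, which is polylogarithmic and absorbed into $\tilde O$.

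\textbf{Polynomial time.} Each round performs three steps:
\begin{itemize}
\item It computes $\OPT_t$ by solving an offline $\ell_1$ regression over $\Theta$, which is polynomial time by assumption.
\item It computes $\varepsilon_t=\varepsilon_{\min}(\OPT_t+\lambda)$. This is a one-dimensional minimization of $u\varepsilon+\varphi(\varepsilon)$ over $(0,1]$. The objective is explicit and has a unique interior stationary point or an endpoint minimizer, so this is constant cost in the exact-real model, or doable by bisection.
\item It computes the Lewis weights (polynomial time, per \citet{Parulekar2021-bm}), samples, and solves the reweighted $\ell_1$ problem.
\end{itemize}
Over $T$ rounds the total is polynomial.

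\textbf{Main obstacle.} The substantive work lies in the two cited ingredients, \cref{thm:avg-sensitivity-l1,prop:l1-same-input-pc}. The average-sensitivity bound requires showing that Lewis weights change little on average under the deletion of a random row; their sum is $d$, which drives the $d\varepsilon^{-3}$ factor. The parameter-change stability requires bounding the TV distance between the sampling laws at $\varepsilon$ and $\eta$ by $O(\varphi(\varepsilon)(\eta-\varepsilon)/\varepsilon)$, and then converting this via the regularity of $\varphi$ into $\varepsilon_X(\OPT(X)-\OPT(X\setminus x))$. Given these results, the theorem itself is a direct instantiation.
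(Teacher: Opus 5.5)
Your proposal is correct and follows the paper's proof, which is exactly the instantiation of \cref{cor:simple-form} with $q=3$, $C_1=C_{\ell_1}d$, $C_2=dT$, $C_{\rm pc}=O(1)$ and $\lambda=H_T^{-(q+1)/q}$. Your extra checks of \cref{assump:approx-avesen} and of polynomial time are consistent with \cref{sec:offline-regression}. One clarification to your closing remark: the per-deletion change of the normalized Lewis-weight sampling law averages to at most $2/n$ independently of $d$ (\cref{lem:avg-L1-lewis}), so $d$ enters $\varphi$ only through the Poisson mean $\mu_\varepsilon\propto d\varepsilon^{-2}\log(\mathrm{e}+dT/\varepsilon)$, and the weight perturbation contributes the remaining factor $\varepsilon^{-1}$.
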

\begin{proof}
  \Cref{cor:simple-form} with
  $q=3$, $C_1 = C_{\ell_1}d$, $C_2 = dT$, and $C_{\rm pc}=O(1)$ implies the claim.
\end{proof}
While online $\ell_1$ regression is a classical research topic \citep{Bernstein1992-eu,Bylander1997-ut}, we are not aware of prior work that provides a comparable small-loss regret bound.
One likely reason is that the absolute loss is not self-bounding, preventing a standard route to deriving small-loss regret bounds in online convex optimization (e.g., \citet[Section~4.2]{orabona2023modern}).
Our approach circumvents this issue under the random-order model by leveraging the average-stable offline algorithm based on Lewis-weight sampling.\looseness=-1

\section{Further applications and a general idea for achieving small-loss bounds}\label{subsec:guiding-idea}
\citet{Dong2023-yh} also applied the batch-to-online transformation to online $(k,z)$-clustering and online low-rank matrix approximation, thereby obtaining $(1+\varepsilon)$-approximate regret bounds.
Our \cref{cor:simple-form} applies to them, yielding small-loss regret bounds that refine their $(1+\varepsilon)$-approximate regret bounds, as discussed in \cref{subsec:small-loss-bound}.
We present the details in \cref{sec:dong-yoshida-applications}.

Having observed a variety of applications, we now discuss a general idea for achieving small-loss regret bounds.
Our framework hinges on average-stable offline algorithms with \emph{multiplicative} $(1+\varepsilon)$-approximation guarantees and stability under changes of the accuracy parameter on the same input.
The approximation guarantee and the average-sensitivity bound control the \decomplink{summand:approx-sens}{accuracy--stability trade-off} through the choice of $\varepsilon_t$ (\cref{lem:local-opt-eps}), while parameter stability controls the \decomplink{summand:param-change}{parameter-change cost} (\cref{lem:pc-bound}).
Once these two parts are controlled, \cref{thm:general-separable} yields a small-loss regret bound in terms of $\varphi^{\star}(\OPT_T)$.
Algorithmic compression techniques---sparsification, sampling, coresets, and sketching---are natural ingredients for this approach: the resulting offline algorithms provide multiplicative approximation guarantees, and changing the input or the accuracy parameter affects intermediate quantities such as sampling probabilities, sample counts, or reweighting factors, whose total-variation changes can be bounded directly.
Concretely, the $(1\pm\varepsilon)$-cut sparsifier for submodular hypergraphs and Lewis-weight sampling instantiate this mechanism in online submodular function minimization and online $\ell_1$ regression, respectively; coreset construction \citep{Huang2020-cb} and matrix sketching~\citep{Cohen2017-th} play analogous roles in online $(k,z)$-clustering and online low-rank matrix approximation, as detailed in \cref{sec:dong-yoshida-applications}.
More broadly, sparsifiers for sums of functions such as $p$-norms and generalized linear models \citep{Jambulapati2023-hr,Jambulapati2024-by} provide promising candidates for further applications once the corresponding stability bounds are verified.
This concrete route from offline compression algorithms to small-loss regret guarantees under the random-order model constitutes one conceptual contribution of our work.\looseness=-1

\section{Conclusion and limitations}\label{sec:conclusion}
Starting from the fixed-$\varepsilon$, $(1+\varepsilon)$-approximate-regret framework of \citet{Dong2023-yh}, we have established a small-loss regret bound typically of order $\tilde{O}(\varphi^{\star}(\mathrm{OPT}_T))$ under the random-order model.
Our core idea is to allow approximation accuracy $\varepsilon_t$ to change over time, which leads to a novel decomposition of the regret into the accuracy--stability trade-off and the parameter-change cost parts.
We bound both parts by setting $\varepsilon_t$ to optimize the former trade-off part and leveraging stability of offline algorithms with respect to parameter changes, together with the properties of the concave conjugate~$\varphi^{\star}$.
Crucially, our framework does not require structural assumptions on loss functions, such as linearity or smoothness, and applies broadly to any problems admitting offline approximation algorithms with bounded average sensitivity and parameter-change stability.
We have instantiated our framework for various problems, including online submodular function minimization and online $\ell_1$ regression.
Beyond these specific applications, we have discussed how our framework can serve as a general recipe that connects widely used algorithmic compression techniques---such as sparsifiers, sampling, coresets, and sketching---with small-loss regret bounds in online learning under the random-order model.\looseness=-1

We close with a few remarks on limitations and open questions.
First, our results are limited to the random-order model.
As a generalization, the \emph{stochastically extended adversarial model} has attracted growing attention \citep{Sachs2022-an,Chen2024-pi,Wang2024-ej}, and understanding how our average sensitivity-based approach extends to this more general model is an intriguing direction for future work.
Second, our method runs an offline algorithm on the data observed so far at each round, and the resulting computational cost can be high.
Integrating dynamic or incremental implementations of the required sparsifiers, sketches, or sampling distributions \citep{Soma2024-pt,Woodruff2025-na} is an important direction for improving practicality.
Third, the tightness of our small-loss regret bounds remains open.
Taking online submodular function minimization as an example, we can show a lower bound of $\Omega(\sqrt{n\OPT_T})$ similar to the standard online convex optimization (see \cref{asec:proof-sqrt-lower-bound} for details).
This applies when $\OPT_T\asymp T$ and hence matches the $O(\sqrt{nT})$ upper bound \citep{Hazan2012-hc}; if we do not care about efficiency, the multiplicative weights update with~$2^n$ experts always achieves $O(\sqrt{n\OPT_T}+n)$.
However, focusing on polynomial-time algorithms in the regime of $\OPT_T=o(T)$, exploring improvements over our bound of $\tilde O\prn[\big]{n^3+n^{3/4}\OPT_T^{3/4}}$ is an interesting question for future work.
Finally, although we have verified the parameter-change stability requirement in \cref{assump:adaptive-pc-stability} for all relevant applications, it remains desirable to identify more intuitive and easily checkable sufficient conditions for this assumption. Such an investigation may further clarify what properties of offline algorithms enable small-loss regret bounds through our framework.\looseness=-1

\section*{Acknowledgements}
Shinsaku Sakaue was supported by JST BOOST Program Grant Number JPMJBY24D1.
Yuichi Yoshida was supported by JSPS KAKENHI Grant Number 24K02903.
{\newrefcontext[sorting=nyt]
\printbibliography[heading=bibintoc]}
\todo{to be checked}
\newrefcontext[sorting=ynt]

\clearpage
\appendix
\numberwithin{theorem}{section}

\section{Proof of Lemma~\ref{lem:rom-lem}}\label{sec:rom-lem-proof}
\romlemma*
\begin{proof}
Let $\theta^\star \in \Theta$ be a minimizer that attains $\OPT_T$ on $X_T = \set*{x_t}_{t=1}^T$.
For any $t \in [T]$ and permutation~$\pi$ of $\set*{x_t}_{t=1}^T$, we have
\[
  \OPT_t = \min_{\theta\in\Theta} \sum_{i=1}^t \ell\prn*{\theta, x_{\pi(i)}}
\le \sum_{i=1}^{t} \ell\prn*{\theta^\star, x_{\pi(i)}}.
\]
Consider taking expectation over the uniformly random permutation $\pi$.
As each $x \in X_T$ appears in the first $t$ positions with probability $\frac{t}{T}$, the right-hand side equals $\frac{t}{T}\sum_{x \in X_T} \ell(\theta^\star,x)=\frac{t}{T}\OPT_T$, hence
\[
\E\brc*{\OPT_t} \le \frac{t}{T}\OPT_T.
\]
Dividing by $t$ and summing for $t=1,\dots,T$ yield the claim.
\end{proof}

\section{Proof of Proposition~\ref{prop:regret-decomp}}\label[appendix]{asec:proofs-bto}
The following lemma is central to the derivation of the regret decomposition in \cref{prop:regret-decomp}.
\begin{lemma}[Single-step bound]
\label[lemma]{lem:onestep}
Let $t \in \set{1,\dots,T-1}$ and $\theta_{t+1}\in\Theta$ denote the output of $\mathcal{A}_{\varepsilon_t}$ in \cref{alg:bto}.
Suppose $\varepsilon_t=\varepsilon_{X_t}$ is specified by a deterministic measurable rule $Y\mapsto\varepsilon_Y$ that depends only on the multiset $Y$.
Then, it holds that
\begin{align}
  &\E\brc*{\ell\prn*{\theta_{t+1},x_{t+1}}}
  \\
  \le{}&
\E\brc*{\frac{1+\varepsilon_t}{t}\,\OPT_t + \frac{\varphi(\varepsilon_t)}{t} +
\frac{2}{t}\sum_{x\in X_t}
\mathrm{TV}\prn*{
  \mathcal A_{\varepsilon_{X_t}}\prn*{X_t\setminus x},
  \mathcal A_{\varepsilon_{X_t\setminus x}}\prn*{X_t\setminus x}
}} + O(1/T).
\end{align}
\end{lemma}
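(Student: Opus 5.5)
The plan is to compare the fresh-point loss of $\theta_{t+1}$ with its in-sample average loss on $X_t$, paying total-variation costs measured entirely on $t$-point multisets. The first step handles the in-sample part. Conditional on $X_t$, \cref{assump:approx-avesen} with $\varepsilon=\varepsilon_t=\varepsilon_{X_t}$ gives
\[
\frac1t\,\E\brc*{\ell(\mathcal A_{\varepsilon_{X_t}}(X_t),X_t)}\le\frac{(1+\varepsilon_t)\OPT_t}{t}+O(1/T).
\]
It then remains to show
\[
\E\brc*{\ell(\mathcal A_{\varepsilon_{X_t}}(X_t),x_{t+1})}-\E\brc*{\frac1t\sum_{x\in X_t}\ell(\mathcal A_{\varepsilon_{X_t}}(X_t),x)}\le\E\brc*{\frac{\varphi(\varepsilon_t)}{t}+\frac2t\sum_{x\in X_t}\mathrm{TV}\prn*{\mathcal A_{\varepsilon_{X_t}}(X_t\setminus x),\mathcal A_{\varepsilon_{X_t\setminus x}}(X_t\setminus x)}}.
\]

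The key step is a swap argument based on exchangeability of the random order. Let $x$ be uniform in $X_t$, drawn independently, and set $X'=(X_t\setminus x)\cup\set{x_{t+1}}$. Since the first $t+1$ positions form a uniformly random ordering, the pair $(X_t,x_{t+1})$ has the same law as $(X',x)$. Hence
\[
\E\brc*{\ell(\mathcal A_{\varepsilon_{X_t}}(X_t),x_{t+1})}=\E\brc*{\frac1t\sum_{x\in X_t}\ell(\mathcal A_{\varepsilon_{X'}}(X'),x)}.
\]
Because $\ell\in[0,1]$, each summand differs from $\ell(\mathcal A_{\varepsilon_{X_t}}(X_t),x)$ by at most $\mathrm{TV}(\mathcal A_{\varepsilon_{X_t}}(X_t),\mathcal A_{\varepsilon_{X'}}(X'))$; here I use that the algorithm's internal randomness is independent of $x$ and of the order. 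I then apply the triangle inequality along the chain
\[
\mathcal A_{\varepsilon_{X_t}}(X_t)\to\mathcal A_{\varepsilon_{X_t}}(X_t\setminus x)\to\mathcal A_{\varepsilon_{X_t\setminus x}}(X_t\setminus x)=\mathcal A_{\varepsilon_{X'\setminus x_{t+1}}}(X'\setminus x_{t+1})\to\mathcal A_{\varepsilon_{X'}}(X'\setminus x_{t+1})\to\mathcal A_{\varepsilon_{X'}}(X').
\]
The first link is a deletion-sensitivity term. Averaged over $x$, it is at most $\varphi(\varepsilon_{X_t})/(2t)$ by \cref{assump:approx-avesen}. The second link is exactly the parameter-change term for $(X_t,x)$.

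The last two links are the same two kinds of terms, but for the pair $(X',x_{t+1})$. The point I expect to need the most care is showing that, in expectation, they equal the first two. For this I invoke exchangeability once more: $(X',x_{t+1})$ has the same law as $(X_t,y)$ with $y$ uniform in $X_t$. So the expectation of the third and fourth links equals
\[
\E\brc*{\frac1t\sum_{y\in X_t}\mathrm{TV}\prn*{\mathcal A_{\varepsilon_{X_t}}(X_t\setminus y),\mathcal A_{\varepsilon_{X_t\setminus y}}(X_t\setminus y)}+\frac1t\sum_{y\in X_t}\mathrm{TV}\prn*{\mathcal A_{\varepsilon_{X_t}}(X_t),\mathcal A_{\varepsilon_{X_t}}(X_t\setminus y)}}.
\]
The second sum here is again at most $\E[\varphi(\varepsilon_t)/(2t)]$. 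This step uses that $\varepsilon_Y$ depends only on the multiset $Y$, which is why the lemma assumes a deterministic measurable rule.

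Summing the four contributions gives $\varphi(\varepsilon_t)/t$ plus twice the averaged parameter-change term, which is exactly the stated bound. The $O(1/T)$ slack comes solely from the additive term in \cref{assump:approx-avesen}. I will also check that for $t=1$ the empty multiset $X_t\setminus x$ is covered by the convention on $\mathcal A_\varepsilon(\emptyset)$ and $\varepsilon_\emptyset$.
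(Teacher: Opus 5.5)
Your proof is correct and gives exactly the stated bound with the same constants, but it is organized differently from the paper's.

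The paper adds and subtracts $\ell(\mathcal A_{\varepsilon_{X_t\setminus x}}(X_t\setminus x),x)$ and uses exchangeability once, conditionally on the leave-one-out multiset $U=X_t\setminus x_I$. Because the law of $\mathcal A_{\varepsilon_U}(U)$ depends only on $U$, and $x_I$ and $x_{t+1}$ are exchangeable given $U$, its loss on the deleted point can be replaced by its loss on $x_{t+1}$. Both error terms are then $\mathrm{TV}(\mathcal A_{\varepsilon_{X_t}}(X_t),\mathcal A_{\varepsilon_{X_t\setminus x}}(X_t\setminus x))$ at the same pair $(X_t,x)$, one evaluated at $x_{t+1}$ and one at $x$. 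One triangle inequality splits each into the deletion and parameter-change parts, which gives the factor $2$.

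You instead use a replace-one coupling: $(X_t,x_{t+1})$ has the same law as $(X',x)$. So you pay a single $\mathrm{TV}(\mathcal A_{\varepsilon_{X_t}}(X_t),\mathcal A_{\varepsilon_{X'}}(X'))$. Routing it through $X_t\setminus x=X'\setminus x_{t+1}$ produces four links. You then need a second exchangeability step, namely that $(X',x_{t+1})$ has the same law as $(X_t,y)$ with $y$ uniform in $X_t$, to identify the two links attached to $(X',x_{t+1})$ with the two attached to $(X_t,x)$. Both of your distributional identities are valid under the random-order model with labeled occurrences, and both rely on $\varepsilon_Y$ depending only on the multiset, as the paper's argument does.

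The paper's leave-one-out route is slightly more economical: one symmetry step and no extra links to symmetrize. Yours is the classical replace-one stability argument. It would plug in directly if a replace-one sensitivity bound were available, but since the assumptions here are leave-one-out, you have to unpack it through $X_t\setminus x$ anyway.
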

\begin{proof}
Fix $t\in\{1,\ldots,T-1\}$ and let $X_t=\set*{x_1,\ldots,x_t}$.
Taking expectation over the randomness of the offline algorithms and the random order of $\set*{x_t}_{t=1}^T$, we start from
\begin{equation}\label{eq:start-from}
  \E\brc*{\ell(\theta_{t+1},x_{t+1})-\frac{1}{t} \ell(\theta_{t+1},X_t)}
=
\E\brc*{\frac{1}{t}\sum_{x\in X_t}\prn*{\ell(\mathcal A_{\varepsilon_{X_t}}(X_t),x_{t+1})-\ell(\mathcal A_{\varepsilon_{X_t}}(X_t),x)}},
\end{equation}
which measures how far the fresh loss $\ell(\theta_{t+1},x_{t+1})$ can drift from the empirical average over $X_t$.
For $x \in X_t$, add and subtract the corresponding loss term obtained by running the algorithm on $X_t\setminus x$ with the parameter recomputed from $X_t\setminus x$, namely $\ell(\mathcal A_{\varepsilon_{X_t\setminus x}}(X_t\setminus x),x)$, inside each summand on the right-hand side to obtain
\begin{align}
  \eqref{eq:start-from}
  =
  &\E\brc*{\frac{1}{t}\sum_{x\in X_t}\prn*{\ell(\mathcal A_{\varepsilon_{X_t}}(X_t),x_{t+1})-\ell(\mathcal A_{\varepsilon_{X_t\setminus x}}(X_t\setminus x),x)}}
  \\
  &+\E\brc*{\frac{1}{t}\sum_{x\in X_t}\prn*{\ell(\mathcal A_{\varepsilon_{X_t\setminus x}}(X_t\setminus x),x)-\ell(\mathcal A_{\varepsilon_{X_t}}(X_t),x)}}.
\end{align}
Here, the element removed uniformly from $X_t$ has the same distribution as $x_{t+1}$ under the random-order model.
Formally, sample an index $I\sim\mathrm{Unif}(\{1,\ldots,t\})$ and set $x=x_I$ and $U=X_t\setminus x_I$.
Conditioned on $U$, the two held-out points $x_I$ and $x_{t+1}$ are exchangeable under a uniformly random permutation of $X_{t+1}$
(they are exactly the two elements of $X_{t+1}\setminus U$).
Hence, recalling that $\varepsilon_U$ is a function only of the multiset $U$, we have $\E[\ell(\mathcal A_{\varepsilon_U}(U),x_I)\mid U]=\E[\ell(\mathcal A_{\varepsilon_U}(U),x_{t+1})\mid U]$, which allows us to replace $x$ in the first expectation by $x_{t+1}$.\footnote{
  The point of the argument is to use the parameter recomputed from the reduced dataset when invoking exchangeability, and then pay for the difference from the original parameter as the separate parameter-change term.
}
Therefore, we obtain
\begin{align}
  &\E\brc*{\ell(\theta_{t+1},x_{t+1})-\frac{1}{t} \ell(\theta_{t+1},X_t)}
  \\
  ={}&
  \E\brc*{\frac{1}{t}\!\sum_{x\in X_t}\prn*{\ell(\mathcal A_{\varepsilon_{X_t}}(X_t),x_{t+1})\!-\!\ell(\mathcal A_{\varepsilon_{X_t\setminus x}}(X_t\!\setminus\!x),x_{t+1})}}
  \\
  &+\E\brc*{\frac{1}{t}\!\sum_{x\in X_t}\prn*{\ell(\mathcal A_{\varepsilon_{X_t\setminus x}}(X_t\!\setminus\!x),x)\!-\!\ell(\mathcal A_{\varepsilon_{X_t}}(X_t),x)}}. \label{eq:onestep-two-terms}
\end{align}
Here, the range of $\ell(\cdot, x')$ is $[0,1]$ for any $x' \in \mathcal{X}$, which implies
\begin{align}
    \abs*{\E\brc*{\ell(\mathcal{A}_{\varepsilon_{X_t}}(X_t), x') - \ell(\mathcal{A}_{\varepsilon_{X_t\setminus x}}(X_t\setminus x), x')}}
  \le
  \mathrm{TV}\prn*{\mathcal{A}_{\varepsilon_{X_t}}(X_t),\mathcal{A}_{\varepsilon_{X_t\setminus x}}(X_t\setminus x)}.
\end{align}
Therefore, the above two components are bounded above by $\mathrm{TV}\prn*{\mathcal A_{\varepsilon_{X_t}}(X_t),\mathcal A_{\varepsilon_{X_t\setminus x}}(X_t\setminus x)}$ in expectation.
We next split this total variation distance by the triangle inequality into a fixed-$\varepsilon$ deletion part and a term comparing two parameters on the same reduced dataset:
\begin{align}
  &\mathrm{TV}\prn*{\mathcal A_{\varepsilon_{X_t}}(X_t),\mathcal A_{\varepsilon_{X_t\setminus x}}(X_t\setminus x)}
  \\
\le{}&
\mathrm{TV}\prn*{\mathcal A_{\varepsilon_{X_t}}(X_t),\mathcal A_{\varepsilon_{X_t}}(X_t\setminus x)}
+
\mathrm{TV}\prn*{\mathcal A_{\varepsilon_{X_t}}(X_t\setminus x),\mathcal A_{\varepsilon_{X_t\setminus x}}(X_t\setminus x)}.
\end{align}
Consequently, using the fixed-$\varepsilon_t$ average sensitivity bound in \cref{assump:approx-avesen}, we obtain\looseness=-1
\[
\E\brc*{\ell(\theta_{t+1},x_{t+1})}
\le
\E\brc*{\frac{1}{t} \ell(\theta_{t+1},X_t)
+
\frac{\varphi(\varepsilon_t)}{t}
+
\frac{2}{t}\sum_{x\in X_t}
\mathrm{TV}\prn*{
\mathcal A_{\varepsilon_{X_t}}\prn*{X_t\setminus x},
\mathcal A_{\varepsilon_{X_t\setminus x}}\prn*{X_t\setminus x}
}},
\]
where the factor $2$ comes from the two terms on the right-hand side in \eqref{eq:onestep-two-terms}.
By $(1+\varepsilon_t)$-approximation of $\mathcal A_{\varepsilon_t}$ in \cref{assump:approx-avesen}, for any $X_t \in \mathcal{X}^t$, we have
\[
\E_{\mathcal A_{\varepsilon_t}}\brc*{\ell(\theta_{t+1},X_t)}
=\E_{\mathcal A_{\varepsilon_t}}\brc*{\ell(\mathcal A_{\varepsilon_t}(X_t),X_t)}
\le (1+\varepsilon_t) \OPT_t + O(t/T).
\]
Combining the above two inequalities taking expectation over the random order of $\set*{x_t}_{t=1}^T$, we obtain
\begin{align}
  &\E\brc*{\ell(\theta_{t+1},x_{t+1})}
  \\
  \le{}&
  \E\brc*{\frac{1+\varepsilon_t}{t}\,\OPT_t +  \frac{\varphi(\varepsilon_t)}{t}
  +
  \frac{2}{t}\sum_{x\in X_t}
  \mathrm{TV}\prn*{
  \mathcal A_{\varepsilon_{X_t}}\prn*{X_t\setminus x},
  \mathcal A_{\varepsilon_{X_t\setminus x}}\prn*{X_t\setminus x}
}} + O(1/T),
\end{align}

completing the proof.
\end{proof}

By using the above single-step bound and the bound on the sum of average optimal values in \cref{lem:rom-lem}, we can now prove the regret decomposition in \cref{prop:regret-decomp}.
\regretdecomprop*
\begin{proof}
  Summing both sides of \cref{lem:onestep} for $t\!=\!1,\dots,T\!-\!1$ and using $\E\brc*{\frac{1+\varepsilon_T}{T}\,\OPT_T\!+\!\frac{\varphi\prn*{\varepsilon_T}}{T}}\!\ge\!0$ yield the following inequality up to an additive $O(1)$ term:
  \begin{align}
  &\E\brc*{\sum_{t=1}^{T-1} \ell(\theta_{t+1},x_{t+1})}
  \\
  \le{}&
  \sum_{t=1}^{T-1}
  \prn*{
    \E\brc*{\frac{1+\varepsilon_t}{t}\,\OPT_t +\frac{\varphi\prn*{\varepsilon_t}}{t}}
    +
    \E\brc*{
    \frac{2}{t}\sum_{x\in X_t}
    \mathrm{TV}\prn*{
      \mathcal A_{\varepsilon_{X_t}}\prn*{X_t\setminus x},
      \mathcal A_{\varepsilon_{X_t\setminus x}}\prn*{X_t\setminus x}
    }}
  }
  \\
  \le{}&
  \sum_{t=1}^{T}
  \prn*{
    \E\brc*{\frac{1+\varepsilon_t}{t}\,\OPT_t +\frac{\varphi\prn*{\varepsilon_t}}{t}}
    +
    \E\brc*{
    \frac{2}{t}\sum_{x\in X_t}
    \mathrm{TV}\prn*{
      \mathcal A_{\varepsilon_{X_t}}\prn*{X_t\setminus x},
      \mathcal A_{\varepsilon_{X_t\setminus x}}\prn*{X_t\setminus x}
    }}
  }.
  \end{align}
  The left-hand side is bounded below by $\E\brc*{\sum_{t=1}^T \ell(\theta_{t},x_{t})} - 1$, while \cref{lem:rom-lem} implies that the right-hand side is bounded above by
  \[
  \OPT_T
  +
  \sum_{t=1}^{T}
  \prn*{
    \E\brc*{\frac{\varepsilon_t}{t}\,\OPT_t + \frac{\varphi\prn*{\varepsilon_t}}{t}}
    +
    \E\brc*{
    \frac{2}{t}\sum_{x\in X_t}
    \mathrm{TV}\prn*{
      \mathcal A_{\varepsilon_{X_t}}\prn*{X_t\setminus x},
      \mathcal A_{\varepsilon_{X_t\setminus x}}\prn*{X_t\setminus x}
    }}
  }.
  \]
  Rearranging the terms completes the proof.
\end{proof}

\section{Proof of Lemma~\ref{lem:concave-conjugate-prop}}\label[appendix]{asec:proof-concave-conjugate-prop}
\concaveconjugateproplemma*
\begin{proof}
  First, $\varphi^\star$ is non-decreasing:
  if $0<u_1<u_2$, then for every $\varepsilon\in(0,1]$, we have
  $u_1\varepsilon+\varphi(\varepsilon)
  \le
  u_2\varepsilon+\varphi(\varepsilon)$,
  which implies
\[
  \varphi^\star(u_1)
  \le
  u_1\varepsilon_{\min}(u_2)+\varphi(\varepsilon_{\min}(u_2))
  \le
  u_2\varepsilon_{\min}(u_2)+\varphi(\varepsilon_{\min}(u_2))
  =
  \varphi^\star(u_2).
\]
  The concavity of $\varphi^\star$ follows from the fact that it is defined as the pointwise minimum of affine functions $u\mapsto u\varepsilon+\varphi(\varepsilon)$ parametrized by~$\varepsilon$.
  The selected minimizer satisfies $\varepsilon_{\min}(u)\in(0,1]$ as the effective domain of $\varphi$ is contained in $(0,1]$, and it is non-increasing in $u$: if $0<u_1<u_2$ and $\varepsilon_i=\varepsilon_{\min}(u_i)$, then adding the two optimality inequalities,
  \[
  u_1\varepsilon_1+\varphi(\varepsilon_1)\le u_1\varepsilon_2+\varphi(\varepsilon_2)
  \qquad\text{and}\qquad
  u_2\varepsilon_2+\varphi(\varepsilon_2)\le u_2\varepsilon_1+\varphi(\varepsilon_1),
  \]
  yields $0\le (u_2-u_1)(\varepsilon_1-\varepsilon_2)$, hence $\varepsilon_1\ge\varepsilon_2$.
  Finally, we verify the supergradient claim.
  Fix $u>0$ and write $\varepsilon_u=\varepsilon_{\min}(u)$.
  By the optimality of $\varepsilon_u$, for any $v>0$, we have
  \[
  \varphi^\star(v)
  =
  \min_{\varepsilon\in(0,1]}
  \{v\varepsilon+\varphi(\varepsilon)\}
  \le
  v\varepsilon_u+\varphi(\varepsilon_u)
  =
  \varphi^\star(u)+\varepsilon_u(v-u).
  \]
This is exactly the supergradient inequality for the concave function
$\varphi^\star$ at $u$.
\end{proof}

\section{Proof of Lemma~\ref{lem:pc-bound}}\label[appendix]{asec:proof-pc-bound}
\pcboundlemma*
\begin{proof}
  The $t=1$ summand is at most $2$ and is absorbed into the $O(1)$ term.
  For $t\ge2$, \cref{assump:adaptive-pc-stability} applied to $X_t$ gives
  \[
  \frac{2}{t}\sum_{x\in X_t}
  \mathrm{TV}\prn*{
    \mathcal A_{\varepsilon_{X_t}}\prn*{X_t\setminus x},
    \mathcal A_{\varepsilon_{X_t\setminus x}}\prn*{X_t\setminus x}
  }
  \le
  \frac{C_{\rm pc}\varepsilon_t}{t}\sum_{x\in X_t}
  \prn*{\OPT_t-\OPT(X_t\setminus x)}.
  \]
  Conditioned on $X_t$, the prefix $X_{t-1}$ is distributed as $X_t\setminus x$ with $x \in X_t$ being a uniformly chosen occurrence, and~$\varepsilon_t$ depends only on $X_t$. Thus, we have
 $
 \E\Brc*{\OPT_{t-1}}{X_t} = \frac1t\sum_{x\in X_t}\OPT(X_t\setminus x)
 $, and hence taking expectation over $X_t$ gives
  \[
  \E\brc*{
  \frac{C_{\rm pc}\varepsilon_t}{t}\sum_{x\in X_t}
  \prn*{\OPT_t-\OPT(X_t\setminus x)}
  }
  =
  C_{\rm pc}\E\brc*{\varepsilon_t(\OPT_t-\OPT_{t-1})}.
  \]
  Summing over $t$ completes the proof.
\end{proof}

\section{Proof of Corollary~\ref{cor:simple-form}}\label[appendix]{asec:proof-cor-simple-form}
\simpleformcorollary*
\begin{proof}
  Recall that $\varphi(\varepsilon)=C_1\varepsilon^{-q}\log(\mathrm{e} + C_2/\varepsilon)$ on $(0,1]$ and $\varphi(\varepsilon)=+\infty$ outside $(0,1]$; this choice of $\varphi$ satisfies the conditions in \cref{assump:approx-avesen}.
  Let $L_0=\log(\mathrm{e}+C_2)$.
  For $u>0$, set $\widehat\varepsilon(u)=\min\{1,(qC_1/u)^{1/(q+1)}\}$.
  Substituting this feasible value into $u\varepsilon+\varphi(\varepsilon)$ gives
  \[
    \varphi^{\star}(u)
    \le
    C_1L_0
    +
    q^{-\frac{q}{q+1}}C_1^{\tfrac{1}{q+1}}u^{\tfrac{q}{q+1}}
    \prn*{q + \log\prn*{\mathrm{e} + {C_2}{q^{-\tfrac{1}{q+1}} C_1^{-\tfrac{1}{q+1}}} u^{\tfrac{1}{q+1}} }}.
  \]
  Substituting $u = \frac{\OPT_T}{H_T} + \lambda$ into the above yields the following upper bound on $\varphi^{\star}(u)$:
  \[
  C_1L_0
   +
  q^{-\tfrac{q}{q+1}}C_1^{\tfrac{1}{q+1}}\prn*{\frac{\OPT_T}{H_T} + \lambda}^{\tfrac{q}{q+1}}
  \prn*{q + \log\prn*{\mathrm{e} + {C_2}{q^{-\tfrac{1}{q+1}} C_1^{-\tfrac{1}{q+1}}} \prn*{\frac{\OPT_T}{H_T} + \lambda}^{\tfrac{1}{q+1}}}\!\!}.
  \]
  By monotonicity and subadditivity of $u\mapsto u^\alpha$ for $\alpha\in(0,1)$, and $\lambda \le H_T^{-(q+1)/q} \le 1$, we have
  \begin{equation}
    \begin{aligned}
      \prn*{\frac{\OPT_T}{H_T} + \lambda}^{\tfrac{q}{q+1}} \le \prn*{\frac{\OPT_T}{H_T}}^{\tfrac{q}{q+1}} + H_T^{-1} 
      &&
      \text{and}
      &&
      \prn*{\frac{\OPT_T}{H_T} + \lambda}^{\tfrac{1}{q+1}}
      \le
      \prn*{1 + \frac{\OPT_T}{H_T}}^{\tfrac{1}{q+1}}.
    \end{aligned}
  \end{equation}
  Therefore, we obtain the following upper bound on $H_T\varphi^{\star}(u)$:
  \[
  C_1H_TL_0
    +
    q^{-\frac{q}{q+1}}C_1^{\tfrac{1}{q+1}}
    \prn*{
      1 + \OPT_T^{\tfrac{q}{q+1}} H_T^{\tfrac{1}{q+1}}
    }
    \prn*{q + \log\prn*{\mathrm{e} + {C_2}
    \prn*{\frac{1 + {\OPT_T}/{H_T}}{q C_1}}^{\tfrac{1}{q+1}} }}.
  \]
  It remains to bound the second term in \cref{thm:general-separable}.  Since $\varphi^\star$ is non-negative and non-decreasing,
  \[
  \varphi^\star(\OPT_T+\lambda)-\varphi^\star(\lambda)
  \le
  \varphi^\star(\OPT_T+\lambda).
  \]
  Applying the same estimate for $\varphi^\star(u)$ with $u=\OPT_T+\lambda$ and using $\lambda\le1$ gives the following upper bound on $\varphi^\star(\OPT_T+\lambda)$:
  \[
  C_1L_0
  +
  q^{-\frac{q}{q+1}}C_1^{\frac1{q+1}}
  \prn*{1+\OPT_T^{\frac q{q+1}}}
  \prn*{
    q+
    \log\prn*{
      \mathrm e+
      C_2
      \prn*{\frac{1+\OPT_T}{qC_1}}^{\frac1{q+1}}
    }
  }.
  \]
  Using these two bounds in \cref{thm:general-separable} and hiding logarithmic factors in $\tilde O$ completes the proof.
\end{proof}

\section{Small-loss inconsistency bound}\label[appendix]{asec:small-loss-inconsistency}
We show that the parameter selection rule in~\eqref{eq:rule} also
yields a small-loss bound on inconsistency. We measure inconsistency following \citet{Dong2023-yh} by counting changes in the sequence of outputs played by the learner:
\[
  \mathrm{Inc}_T
  \coloneqq
  \E_{\mathsf{Alg},\set*{x_t}}
  \brc*{
    \sum_{t=1}^{T-1}
    \ones\set*{\theta_t\neq \theta_{t+1}}
  }.
\]
We first state the probability-transformation requirement.
For two output laws $P,Q$ on $\Theta$, a probability transformation from $P$ to $Q$ means a randomized map which, given an input distributed as $P$, outputs a random variable distributed as $Q$; its change probability is the probability that the input and output values differ.
The fixed-parameter deletion transformation below is the same probability
transportation requirement used by \citet{Dong2023-yh}, following
\citet[Definition~4.1 and Lemma~4.5]{Yoshida2022-pb}.
The new ingredient in our adaptive setting is the
same-input parameter-change transformation, which is needed only because
$\varepsilon_X$ and $\varepsilon_{X\setminus x}$ may differ.
\begin{assumption}[Probability transformations]
\label[assumption]{assump:prob-transform-inconsistency}
Fix $\lambda>0$ and the rule
$\varepsilon_Y=\varepsilon_{\min}(\OPT(Y)+\lambda)$ used in~\eqref{eq:rule}.
For every multiset $X$ with $|X|\ge2$ and every occurrence $x\in X$, assume
that the following two probability transformations are available.
\begin{enumerate}[leftmargin=*,itemsep=2pt]
\item
There is a same-input parameter-change transformation
$K^{\rm pc}_{X,x}$ such that, if
$
  \theta^{-}\!\sim\!
  \mathcal A_{\varepsilon_{X\setminus x}}(X\setminus x)
$
and $ \theta^0$ is the output of $K^{\rm pc}_{X,x}$ on input $\theta^{-}$,
then it holds that
\[
   \theta^0
  \sim
  \mathcal A_{\varepsilon_X}(X\setminus x)
  \qquad\text{and}\qquad
  \Pr\brc*{ \theta^0\neq\theta^{-}}
  \le
  \mathrm{TV}\prn*{
    \mathcal A_{\varepsilon_{X\setminus x}}(X\setminus x),
    \mathcal A_{\varepsilon_X}(X\setminus x)
  }.
\]
\item
There is a fixed-parameter deletion transformation
$K^{\rm del}_{X,x}$ such that, if
$
   \theta^0\sim
  \mathcal A_{\varepsilon_X}(X\setminus x)
$
and $\theta^{+}$ is the output of $K^{\rm del}_{X,x}$ on input $ \theta^0$,
then it holds that
\[
  \theta^{+}
  \sim
  \mathcal A_{\varepsilon_X}(X)
  \qquad\text{and}\qquad
  \Pr\brc*{\theta^{+}\neq \theta^0}
  \le
  \mathrm{TV}\prn*{
    \mathcal A_{\varepsilon_X}(X\setminus x),
    \mathcal A_{\varepsilon_X}(X)
  }.
\]
\end{enumerate}
\end{assumption}
The theorem below is stated for the coupled implementation:
after the first prefix, each new output is obtained from the previous one by
applying the transformations in
\cref{assump:prob-transform-inconsistency}. This preserves the same marginal law
$\mathcal A_{\varepsilon_t}(X_t)$ at every prefix, and thus the regret analysis in
\cref{thm:general-separable} is unchanged.

\begin{theorem}[Small-loss inconsistency bound]
\label{thm:small-loss-inconsistency}
Under the random-order model and \cref{assump:approx-avesen}, fix
$\lambda>0$ and suppose that \cref{assump:adaptive-pc-stability} and
\cref{assump:prob-transform-inconsistency} hold for this $\lambda$.
Then, the coupled implementation of \cref{alg:bto} with
$\varepsilon_t$ given by~\eqref{eq:rule} satisfies
\[
  \mathrm{Inc}_T
  \le
  \frac{H_T}{2}
  \varphi^\star\prn*{\frac{\OPT_T}{H_T}+\lambda}
  +
  \frac{C_{\rm pc}}{2}
  \prn*{
    \varphi^\star(\OPT_T+\lambda)-\varphi^\star(\lambda)
  }
  + O(1).
  \]
\end{theorem}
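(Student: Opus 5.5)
We bound the expected number of changes one step at a time and then reuse the two estimates already developed for the regret. Fix $t\in\{1,\dots,T-1\}$; write $X=X_{t+1}$ and let $x=x_{t+1}$, so that $X\setminus x=X_t$. In the coupled implementation, $\theta_{t+1}\sim\mathcal A_{\varepsilon_{X_t}}(X_t)=\mathcal A_{\varepsilon_{X\setminus x}}(X\setminus x)$. We first apply $K^{\rm pc}_{X,x}$ to obtain $\theta^0\sim\mathcal A_{\varepsilon_X}(X\setminus x)$, and then apply $K^{\rm del}_{X,x}$ to obtain $\theta_{t+2}\sim\mathcal A_{\varepsilon_X}(X)$. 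By a union bound and \cref{assump:prob-transform-inconsistency}, conditioned on the ordering we get
\[
\Pr\brc*{\theta_{t+1}\neq\theta_{t+2}}\le
\mathrm{TV}\prn*{\mathcal A_{\varepsilon_X}(X\setminus x),\mathcal A_{\varepsilon_X}(X)}
+\mathrm{TV}\prn*{\mathcal A_{\varepsilon_{X\setminus x}}(X\setminus x),\mathcal A_{\varepsilon_X}(X\setminus x)}.
\]
The first change $\theta_1\neq\theta_2$ and any boundary terms cost $O(1)$.

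Next we average over the random order. Conditioned on the multiset $X_{t+1}$, the last element $x_{t+1}$ is a uniformly random occurrence in $X_{t+1}$, and $\varepsilon_{X_{t+1}}$ depends only on this multiset. Hence the conditional expectation of the right-hand side equals $\frac{1}{t+1}\sum_{x\in X_{t+1}}$ of the same two TV terms. The first average is at most $\varphi(\varepsilon_{t+1})/(2(t+1))$ by \cref{assump:approx-avesen}. For the second average we use \cref{assump:adaptive-pc-stability}, which gives at most $\frac{C_{\rm pc}\varepsilon_{t+1}}{2(t+1)}\sum_{x}(\OPT(X_{t+1})-\OPT(X_{t+1}\setminus x))$. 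By the same exchangeability step as in the proof of \cref{lem:pc-bound}, its expectation equals $\frac{C_{\rm pc}}{2}\E[\varepsilon_{t+1}(\OPT_{t+1}-\OPT_t)]$. Summing over $t$ and reindexing with $s=t+1$ yields
\[
\mathrm{Inc}_T\le \E\brc*{\sum_{s=1}^T\frac{\varphi(\varepsilon_s)}{2s}}+\frac{C_{\rm pc}}{2}\E\brc*{\sum_{s=1}^T\varepsilon_s(\OPT_s-\OPT_{s-1})}+O(1).
\]
Compared with the regret decomposition, the factor is halved: there is no doubled ``$2/t$'' sum and no approximation term.

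To finish, note that $\varphi(\varepsilon_s)\le \OPT_s\varepsilon_s+\varphi(\varepsilon_s)$ because $\OPT_s\ge0$. The first sum is therefore bounded by half of the accuracy--stability trade-off part, which \cref{lem:local-opt-eps}, Jensen's inequality and \cref{lem:rom-lem} (exactly as in the proof of \cref{thm:general-separable}) control by $\frac{H_T}{2}\varphi^\star(\OPT_T/H_T+\lambda)$. For the second sum we use the supergradient property in \cref{lem:concave-conjugate-prop}, with $a_s=\OPT_s+\lambda$ and $a_0=\lambda$; the sum then telescopes to $\varphi^\star(\OPT_T+\lambda)-\varphi^\star(\lambda)$ deterministically, and the factor $C_{\rm pc}/2$ carries through.

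The main obstacle is the conditioning step. We must justify that, in the coupled chain, the marginal of $\theta_{t+1}$ given the ordering is exactly $\mathcal A_{\varepsilon_t}(X_t)$, so that the transformation guarantees apply conditionally. We must also justify that the deleted point, the last arrival, is uniform within $X_{t+1}$ even though the kernels are applied along the actual ordering. We would handle this by induction on $t$, conditioning on the full permutation: each kernel preserves the correct marginal by \cref{assump:prob-transform-inconsistency}. The uniformity of the last arrival given the prefix multiset then follows from the random-order model, which is the same argument as in \cref{lem:pc-bound}.
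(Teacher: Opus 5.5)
Your proposal is correct and follows essentially the same route as the paper. Both arguments use the marginal invariant of the coupled chain, a union bound over $K^{\rm pc}$ and $K^{\rm del}$, and averaging over the uniformly random last arrival to invoke \cref{assump:approx-avesen} and \cref{assump:adaptive-pc-stability}; both then finish with \cref{lem:local-opt-eps}, Jensen's inequality, \cref{lem:rom-lem}, and supergradient telescoping. The differences are cosmetic: you index the change as $\theta_{t+1}\neq\theta_{t+2}$ with $X=X_{t+1}$, which harmlessly adds the non-negative term for $\theta_T\neq\theta_{T+1}$, and you condition on the full permutation rather than on $\mathcal H_{t-1}$, which is equivalent because the algorithm's randomness is independent of the order.
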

\begin{proof}
We first provide the marginal invariant of the coupled implementation. Let $\mathcal H_s=(x_1,\dots,x_s)$ denote the ordered data history up to time $s$.
We claim that, for every $s\ge1$,
\begin{equation}\label{eq:inv}
  \theta_{s+1}\mid \mathcal H_s
  \sim
  \mathcal A_{\varepsilon_{X_s}}(X_s).
\end{equation}
Indeed, for $s=1$, this holds by the initialization step after observing
$x_1$. Suppose that the invariant holds at time $t-1$, namely
\[
  \theta_t\mid \mathcal H_{t-1}
  \sim
  \mathcal A_{\varepsilon_{X_{t-1}}}(X_{t-1}).
\]
After observing $x_t$, write $X=X_t$ and let the newly added occurrence be
$x\in X$, so that $X_{t-1}=X\setminus x$. By first applying
$K^{\rm pc}_{X,x}$ and then $K^{\rm del}_{X,x}$, the output $\theta_{t+1}$
has marginal law
\[
  \theta_{t+1}\mid \mathcal H_t
  \sim
  \mathcal A_{\varepsilon_X}(X)
  =
  \mathcal A_{\varepsilon_{X_t}}(X_t).
\]
Thus, the coupled implementation preserves the same one-prefix marginal law as
the independent rerun version of \cref{alg:bto}.

The term at $t=1$ in the definition of $\mathrm{Inc}_T$ is at most one. We now
control the remaining changes. Fix $t\in\set*{2,\dots,T-1}$ and condition on
$X_t=X$. Under the random-order model, the previous prefix $X_{t-1}$ is
distributed as $X\setminus x$ with $x$ chosen uniformly from the multiset $X$.
For each such occurrence $x$, the invariant \eqref{eq:inv} implies
\[
  \theta_t
  \mid
  \{X_t=X,\; X_{t-1}=X\setminus x\}
  \sim
  \mathcal A_{\varepsilon_{X\setminus x}}(X\setminus x).
\]
Indeed, \eqref{eq:inv} holds conditional on the full ordered history
$\mathcal H_{t-1}$, and the algorithmic randomness used to generate
$\theta_t$ is independent of the next unrevealed occurrence once
$\mathcal H_{t-1}$ is fixed.
For this fixed $X$ and $x$, the union bound and
\cref{assump:prob-transform-inconsistency} give
\[
\begin{aligned}
  &\Pr\brc*{\theta_t\neq\theta_{t+1}\mid X_t=X,\;X_{t-1}=X\setminus x}
  \\
  \le{}&
  \mathrm{TV}\prn*{
    \mathcal A_{\varepsilon_{X\setminus x}}(X\setminus x),
    \mathcal A_{\varepsilon_X}(X\setminus x)
  }
  +
  \mathrm{TV}\prn*{
    \mathcal A_{\varepsilon_X}(X\setminus x),
    \mathcal A_{\varepsilon_X}(X)
  }.
\end{aligned}
\]
Averaging over the uniformly deleted occurrence $x\in X$, using the symmetry of
total variation, and then applying \cref{assump:approx-avesen} with the fixed
parameter $\varepsilon_X$, we obtain
\[
\begin{aligned}
  &\Pr\brc*{\theta_t\neq\theta_{t+1}\mid X_t=X}
  \\
  \le{}&
  \frac{1}{|X|}
  \sum_{x\in X}
  \mathrm{TV}\prn*{
    \mathcal A_{\varepsilon_X}(X),
    \mathcal A_{\varepsilon_X}(X\setminus x)
  }
  +
  \frac{1}{|X|}
  \sum_{x\in X}
  \mathrm{TV}\prn*{
    \mathcal A_{\varepsilon_X}(X\setminus x),
    \mathcal A_{\varepsilon_{X\setminus x}}(X\setminus x)
  }
  \\
  \le{}&
  \frac{\varphi(\varepsilon_X)}{2|X|}
  +
  \frac{1}{|X|}
  \sum_{x\in X}
  \mathrm{TV}\prn*{
    \mathcal A_{\varepsilon_X}(X\setminus x),
    \mathcal A_{\varepsilon_{X\setminus x}}(X\setminus x)
  }.
\end{aligned}
\]
Applying \cref{assump:adaptive-pc-stability} to the second term yields, for
$X=X_t$,
\[
  \Pr\brc*{\theta_t\neq\theta_{t+1}\mid X_t}
  \le
  \frac{\varphi(\varepsilon_t)}{2t}
  +
  \frac{C_{\rm pc}\varepsilon_t}{2t}
  \sum_{x\in X_t}
  \prn*{\OPT_t-\OPT(X_t\setminus x)}.
\]
Taking expectation and using that, conditioned on $X_t$, the prefix
$X_{t-1}$ is distributed as $X_t\setminus x$ for a uniformly chosen occurrence
$x\in X_t$, we get
\[
  \E\brc*{
    \frac{C_{\rm pc}\varepsilon_t}{2t}
    \sum_{x\in X_t}
    \prn*{\OPT_t-\OPT(X_t\setminus x)}
  }
  =
  \frac{C_{\rm pc}}{2}
  \E\brc*{\varepsilon_t(\OPT_t-\OPT_{t-1})}.
\]
Summing over $t=2,\dots,T-1$, adding the possible initial change, and extending
the non-negative sums to $t=1,\dots,T$ give
\[
  \mathrm{Inc}_T
  \le
  \frac12
  \E\brc*{\sum_{t=1}^T\frac{\varphi(\varepsilon_t)}{t}}
  +
  \frac{C_{\rm pc}}{2}
  \E\brc*{\sum_{t=1}^T
    \varepsilon_t(\OPT_t-\OPT_{t-1})
  }
  +O(1),
\]
where $\OPT_0=0$.

The rest of the proof bounds the two sums similar to the proof of \cref{thm:general-separable}.
Since
$\OPT_t\varepsilon_t/t\ge0$, \cref{lem:local-opt-eps} implies
\[
  \sum_{t=1}^T\frac{\varphi(\varepsilon_t)}{t}
  \le
  \sum_{t=1}^T
  \prn*{
    \frac{\OPT_t}{t}\varepsilon_t
    +
    \frac{\varphi(\varepsilon_t)}{t}
  }
  \le
  H_T
  \varphi^\star\prn*{
    \frac{\sum_{t=1}^T \OPT_t/t}{H_T}
    +
    \lambda
  }.
\]
Taking expectation, applying Jensen's inequality to the concave function
$\varphi^\star$, and then using \cref{lem:rom-lem} and the monotonicity of
$\varphi^\star$, we obtain
\[
  \E\brc*{\sum_{t=1}^T\frac{\varphi(\varepsilon_t)}{t}}
  \le
  H_T
  \varphi^\star\prn*{
    \frac{\OPT_T}{H_T}
    +
    \lambda
  }.
\]

For the second sum, set $a_t=\OPT_t+\lambda$ and $a_0=\lambda$.
Since the losses are non-negative, $\OPT_t$ is non-decreasing along every
realized prefix sequence. Hence, by the supergradient property of
$\varepsilon_{\min}$ in \cref{lem:concave-conjugate-prop}, we obtain
\[
  \varepsilon_t(\OPT_t-\OPT_{t-1})
  =
  \varepsilon_{\min}(a_t)(a_t-a_{t-1})
  \le
  \varphi^\star(a_t)-\varphi^\star(a_{t-1}).
\]
Telescoping over $t=1,\dots,T$ gives
\[
  \sum_{t=1}^T
  \varepsilon_t(\OPT_t-\OPT_{t-1})
  \le
  \varphi^\star(\OPT_T+\lambda)-\varphi^\star(\lambda).
\]
Taking expectation of this and combining the two bounds complete the proof.
\end{proof}
This theorem, instantiated with the function $\varphi$ in \cref{cor:simple-form}, specializes to the following small-loss inconsistency bound, which recovers the fixed-$\varepsilon$ bound of \citet{Dong2023-yh}.
\begin{corollary}
\label[corollary]{cor:inconsistency-simple-form}
Let $q>0$ be a constant. Assume
$
  \varphi(\varepsilon)
  =
  C_1\varepsilon^{-q}\log(\mathrm e+C_2/\varepsilon)
$
for $\varepsilon\in(0,1]$ and $\varphi(\varepsilon)=+\infty$ for $\varepsilon\notin(0,1]$, where
$C_1\ge1$ and $C_2\ge0$. Under the assumptions of
\cref{thm:small-loss-inconsistency}, for any
$\lambda\in(0,H_T^{-(q+1)/q}]$, it holds that
\[
  \mathrm{Inc}_T
  =
  \tilde O
  \prn*{
    (1+C_{\rm pc})
    \prn*{
      C_1
      +
      C_1^{1/(q+1)}\OPT_T^{q/(q+1)}
    }
  }.
\]
Moreover, let
$
  \bar\varepsilon
  \coloneqq
  \varepsilon_{\min}(\OPT_T+\lambda)
$.
Then, for any fixed $\varepsilon\in(0,\bar\varepsilon]$, the same bound also implies
\[
  \mathrm{Inc}_T
  =
  O\prn*{(H_T+C_{\rm pc})\varphi(\varepsilon)}.
\]
Thus, at the hindsight scale $\bar\varepsilon$ (or the known scale $\varepsilon_{\min}(T+\lambda)$ due to the monotonicity of $\varepsilon_{\min}$ and $\ell(\cdot, x_t) \in [0,1]$) and under $C_{\rm pc}=O(1)$, the coupled implementation recovers the
fixed-$\varepsilon$ inconsistency bound of \citet{Dong2023-yh}.
\end{corollary}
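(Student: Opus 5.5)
The first claim follows by reusing the estimates from the proof of \cref{cor:simple-form} inside \cref{thm:small-loss-inconsistency}. That theorem bounds $\mathrm{Inc}_T$ by $\frac{H_T}{2}\varphi^\star(\OPT_T/H_T+\lambda)+\frac{C_{\rm pc}}{2}\prn{\varphi^\star(\OPT_T+\lambda)-\varphi^\star(\lambda)}+O(1)$. These are exactly the two quantities estimated in \cref{asec:proof-cor-simple-form}, up to the factor $1/2$.

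For the first quantity, I plug the feasible choice $\widehat\varepsilon(u)=\min\{1,(qC_1/u)^{1/(q+1)}\}$ into $u\varepsilon+\varphi(\varepsilon)$. This gives $\varphi^\star(u)\le C_1L_0+q^{-q/(q+1)}C_1^{1/(q+1)}u^{q/(q+1)}\cdot(\text{log factor})$. I then use subadditivity of $u\mapsto u^{q/(q+1)}$ and $\lambda\le H_T^{-(q+1)/q}\le 1$ to obtain $H_T\varphi^\star(\OPT_T/H_T+\lambda)=\tilde O(C_1+C_1^{1/(q+1)}\OPT_T^{q/(q+1)})$. For the second quantity, I drop $-\varphi^\star(\lambda)\le 0$, using non-negativity of $\varphi^\star$, and apply the same estimate at $u=\OPT_T+\lambda$. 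Collecting terms and absorbing $H_T$ and the $\log(\mathrm e+C_2\cdots)$ factors into $\tilde O$ gives the factor $(1+C_{\rm pc})$ and proves the first display.

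For the second claim, fix $\varepsilon\in(0,\bar\varepsilon]$. I bound the two terms of \cref{thm:small-loss-inconsistency} separately, each by a multiple of $\varphi(\varepsilon)$.

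For the $H_T$ term, the definition of $\varphi^\star$ as an infimum gives $H_T\varphi^\star(\OPT_T/H_T+\lambda)\le \varepsilon\OPT_T+H_T\lambda\varepsilon+H_T\varphi(\varepsilon)$. Two pieces remain to be controlled.
\begin{itemize}
\item The piece $H_T\lambda\varepsilon\le 1$ is $O(1)$, and since $\varphi\ge C_1\ge1$ on $(0,1]$ it is absorbed into $O(H_T\varphi(\varepsilon))$.
\item The main obstacle is $\varepsilon\OPT_T$. Here I use optimality of $\bar\varepsilon$: since $\bar\varepsilon=\varepsilon_{\min}(\OPT_T+\lambda)$ minimizes $(\OPT_T+\lambda)\eta+\varphi(\eta)$, comparing with $\eta=\varepsilon$ gives
\[
(\OPT_T+\lambda)\bar\varepsilon+\varphi(\bar\varepsilon)\le(\OPT_T+\lambda)\varepsilon+\varphi(\varepsilon).
\]
This alone does not bound $\varepsilon\OPT_T$, so I will instead exploit the explicit form of $\varphi$. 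At an interior minimizer, the first-order condition gives $u\bar\varepsilon=-\bar\varepsilon\varphi'(\bar\varepsilon)\le(q+1)\varphi(\bar\varepsilon)$, because $-\varepsilon\varphi'(\varepsilon)\le (q+1)\varphi(\varepsilon)$ for this $\varphi$ (the log derivative contributes at most $1$, and $\log(\cdot)\ge1$). If instead $\bar\varepsilon=1$ is an endpoint, then $\varepsilon\le1$ and I must handle $\OPT_T$ directly; at the endpoint the optimality condition forces $u\le -\varphi'(1)\le(q+1)\varphi(1)$, so the same bound holds.
\end{itemize}
Combining these, and using $\varepsilon\le\bar\varepsilon$ together with $\varphi$ non-increasing, gives $\varepsilon\OPT_T\le\bar\varepsilon u\le(q+1)\varphi(\bar\varepsilon)\le(q+1)\varphi(\varepsilon)$. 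Hence the first term is $O(H_T\varphi(\varepsilon))$.

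For the $C_{\rm pc}$ term, $\varphi^\star(\OPT_T+\lambda)\le(\OPT_T+\lambda)\varepsilon+\varphi(\varepsilon)=O(\varphi(\varepsilon))$ by the same argument. Together these give $\mathrm{Inc}_T=O((H_T+C_{\rm pc})\varphi(\varepsilon))$.

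The closing remark then follows directly. Monotonicity of $\varepsilon_{\min}$ (\cref{lem:concave-conjugate-prop}) and $\OPT_T\le T$ give $\varepsilon_{\min}(T+\lambda)\le\bar\varepsilon$, so the known scale $\varepsilon_{\min}(T+\lambda)$ is admissible in the second claim.
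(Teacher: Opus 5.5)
Your proposal is correct. The first claim is handled exactly as in the paper, but the second claim takes a somewhat different route.

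\textbf{Deletion term.} The paper does not start from the final statement of \cref{thm:small-loss-inconsistency}. It returns to the intermediate inequality $\mathrm{Inc}_T\le\frac12\E[\sum_t\varphi(\varepsilon_t)/t]+\cdots$ from that theorem's proof. It then uses $\varepsilon_t=\varepsilon_{\min}(\OPT_t+\lambda)\ge\bar\varepsilon\ge\varepsilon$ and the monotonicity of $\varphi$ to get $\frac{H_T}{2}\varphi(\varepsilon)$ termwise, so no $\varepsilon\OPT_T$ term ever appears. You instead use the theorem's bound as a black box and plug the feasible $\varepsilon$ into $\varphi^\star$. This forces you to control $\varepsilon\OPT_T\le u\bar\varepsilon$. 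That step is valid, and arguably closer to the literal phrase ``the same bound also implies.''

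\textbf{Key estimate $u\bar\varepsilon=O_q(\varphi(\bar\varepsilon))$.} The paper needs this only for the parameter-change term, while you need it for both terms. You derive it from the first-order optimality condition, treating the endpoint $\bar\varepsilon=1$ via the one-sided derivative, together with the bound $-\varepsilon\varphi'(\varepsilon)\le(q+1)\varphi(\varepsilon)$. The paper instead compares the optimal value at $\bar\varepsilon$ with the feasible point $\bar\varepsilon/2$. This gives $u\bar\varepsilon\le2\varphi(\bar\varepsilon/2)$, and it then uses the doubling property $\varphi(\bar\varepsilon/2)=O_q(\varphi(\bar\varepsilon))$.

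\textbf{What each buys.} Your route gives the explicit constant $q+1$ and needs nothing from inside the theorem's proof. The cost is reliance on differentiability of $\varphi$ and an endpoint case split. The paper's comparison argument is derivative-free and has no case analysis. It applies to any $\varphi$ satisfying a doubling condition, but it has to open up the proof of \cref{thm:small-loss-inconsistency} for the deletion term.
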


\begin{proof}
The first bound follows from the same estimate on $\varphi^\star$ as in
\cref{cor:simple-form}. Indeed, the bound in
\cref{thm:small-loss-inconsistency} consists of
\[
  \frac{H_T}{2}
  \varphi^\star\prn*{\frac{\OPT_T}{H_T}+\lambda}
  \qquad\text{and}\qquad
  \frac{C_{\rm pc}}{2}
  \prn*{
    \varphi^\star(\OPT_T+\lambda)-\varphi^\star(\lambda)
  },
\]
which are the same two quantities appearing in \cref{thm:general-separable},
up to absolute constants. Therefore, applying the proof of
\cref{cor:simple-form} gives
\[
  \mathrm{Inc}_T
  =
  \tilde O
  \prn*{
    (1+C_{\rm pc})
    \prn*{
      C_1
      +
      C_1^{1/(q+1)}\OPT_T^{q/(q+1)}
    }
  }.
\]
We next prove the fixed-$\varepsilon$ comparison. Since the losses are
non-negative, $\OPT_t\le\OPT_T$ holds for every prefix. Hence, by the monotonicity
of $\varepsilon_{\min}$ in \cref{lem:concave-conjugate-prop}, it holds that
\[
  \varepsilon_t
  =
  \varepsilon_{\min}(\OPT_t+\lambda)
  \ge
  \varepsilon_{\min}(\OPT_T+\lambda)
  =
  \bar\varepsilon
  \ge
  \varepsilon.
\]
The function $\varphi$ is decreasing on $(0,1]$, hence
$\varphi(\varepsilon_t)\le\varphi(\varepsilon)$ for every $t$. Therefore, the
fixed-parameter deletion part in the proof of \cref{thm:small-loss-inconsistency} is bounded directly as
\[
  \frac12
  \sum_{t=1}^T
  \frac{\varphi(\varepsilon_t)}{t}
  \le
  \frac{H_T}{2}\varphi(\varepsilon).
\]
It remains to control the parameter-change part in the proof of \cref{thm:small-loss-inconsistency} at the same scale. Put
$u=\OPT_T+\lambda$ and $\bar\varepsilon=\varepsilon_{\min}(u)$. By optimality, we have
$
  \varphi^\star(u)
  =
  u\bar\varepsilon+\varphi(\bar\varepsilon)
$.
We claim
\[
  u\bar\varepsilon
  =
  O_q\prn*{\varphi(\bar\varepsilon)},
\]
where the hidden constant depends only on $q$. Indeed, by comparing the optimal value at $\bar\varepsilon$ with the feasible
point $\bar\varepsilon/2$, we have
$
u\bar\varepsilon+\varphi(\bar\varepsilon)
  \le
  \frac{u\bar\varepsilon}{2}
  +
  \varphi(\bar\varepsilon/2)
$. Thus, we obtain
$
  \frac{u\bar\varepsilon}{2}
  \le
  \varphi(\bar\varepsilon/2)-\varphi(\bar\varepsilon)
  \le
  \varphi(\bar\varepsilon/2)
$.
For the specific form of $\varphi$, we have
\[
  \varphi(\bar\varepsilon/2)
  =
  2^q C_1\bar\varepsilon^{-q}
  \log(\mathrm e+2C_2/\bar\varepsilon)
  =
  O_q\prn*{\varphi(\bar\varepsilon)},
\]
where the last step uses
$\log(\mathrm e+2z)=O(\log(\mathrm e+z))$ for $z\ge0$.
Therefore, $u\bar\varepsilon=O_q(\varphi(\bar\varepsilon))$ holds as claimed.
Consequently, by $\varepsilon\le\bar\varepsilon$ and the monotonicity of $\varphi$, we obtain
\[
  \varphi^\star(u)
  =
  u\bar\varepsilon+\varphi(\bar\varepsilon)
  =
  O_q\prn*{\varphi(\varepsilon)}.
\]
Since
$
  \varphi^\star(\OPT_T+\lambda)-\varphi^\star(\lambda)
  \le
  \varphi^\star(\OPT_T+\lambda)
$,
the parameter-change contribution is
$O_q(C_{\rm pc}\varphi(\varepsilon))$. Combining this with the
$H_T\varphi(\varepsilon)/2$ bound above yields
\[
  \mathrm{Inc}_T
  =
  O_q\prn*{
    (H_T+C_{\rm pc})
    \varphi(\varepsilon)
  }.
\]
Therefore, for constant $q > 0$ and $C_{\rm pc}=O(1)$, this recovers the fixed-$\varepsilon$ inconsistency bound of $O(\varphi(\varepsilon)\log T)$ obtained by \citet{Dong2023-yh}.
\end{proof}

\section{On a naive doubling reduction}\label[appendix]{sec:doubling-trick}
We explain why we do not use a naive doubling reduction based on the fixed-$\varepsilon$ guarantee of \citet{Dong2023-yh} as a substitute for our time-varying approach in \cref{sec:adaptive-control}.
For a fixed value of~$\varepsilon$, their batch-to-online transformation yields a $(1+\varepsilon)$-approximate regret guarantee; equivalently, at the level of standard regret, the bound has the form of $\Reg_T \lesssim \varepsilon\OPT_T + H_T\varphi\prn*{\varepsilon}$ up to constants.
Thus, it is natural to ask whether one can recover a small-loss regret bound by a phase-wise doubling trick that guesses the unknown scale of the phase-local offline optimum and runs the fixed-$\varepsilon$ algorithm separately in each phase.
The obstruction described below is not the deterministic algebra of such a doubling argument, but the random-order exchangeability step needed to justify applying a fixed-horizon guarantee inside phases whose endpoints are data-dependent.

Consider a threshold-based phase scheme, analogous to the standard doubling trick for an unknown scale parameter (see, e.g., \citet[Exercise~2.10]{Cesa-Bianchi2006-oa}).
Conditional on the history before a phase starts, the unrevealed data points are still in uniformly random order.
This observation alone, however, does not justify applying the fixed-horizon proof to a phase with a random endpoint, as we now explain.
In a phase with threshold $B$, one fixes a parameter $\varepsilon$ as if the phase-local optimum were of order $B$, restarts the batch-to-online procedure using only the observations in the current phase, and stops the phase when the phase-local optimum first exceeds $B$.
Let $Y_s=\prn*{y_1,\ldots,y_s}$ denote the phase-local prefix after $s$ revealed data points, and define the stopping time
\[
  \tau \coloneqq \inf \Set*{s\ge0}{\OPT\prn*{Y_s}>B},
  \qquad
  \text{where}
  \;\;
  \inf\emptyset=+\infty.
\]
Because the losses are non-negative, $s\mapsto\OPT\prn*{Y_s}$ is non-decreasing; hence the phase-survival event at local time $s$ is equivalently
\[
  \set*{\tau>s}=\set*{\OPT\prn*{Y_s}\le B}.
\]
Fixing $\varepsilon$ within the phase would remove the parameter-change term inside that phase, but the proof would still have to justify the single-step random-order comparison after conditioning on the survival event $\set*{\tau>s}$.

To see the issue, recall the exchangeability step used in the proof of \cref{lem:onestep}, which also appears in the fixed-$\varepsilon$ proof of \citet[Theorem~4.1]{Dong2023-yh}.
For a fixed, non-random local time~$s\ge1$, sample $I\sim\mathrm{Unif}\prn*{\set*{1,\ldots,s}}$, set $y=y_I$, and put $U=Y_s\setminus y$.
If no stopping event is imposed, then conditional on $U$, the deleted point $y$ and the next point $y_{s+1}$ are exchangeable under the random-order model.
Equivalently, after also conditioning on the unordered pair of held-out points, their two possible relative orders are equally likely.
Since the law of $\mathcal A_\varepsilon(U)$ depends only on $U$, this exchangeability gives the identity that allows the fresh loss on $y_{s+1}$ to be compared with the loss on a uniformly deleted point from $Y_s$.
This is the step used before applying average sensitivity.

In a stopped phase, however, the prediction at local time $s+1$ is made only on the event $\set*{\tau>s}$.
After conditioning on the same reduced prefix $U=Y_s\setminus y$, the survival event is no longer symmetric in the deleted point and the next point.
Indeed, it imposes the constraint
\[
  \OPT\prn*{U\cup\set*{y}} \le B
\]
on the point that has already been included in the prefix, whereas the prospective next point $y_{s+1}$ is not subject to the same constraint before it is revealed.
Consequently, the stopped analogue of the deterministic-time identity may fail:
\[
\E\Brc*{\ell\prn*{\mathcal{A}_{\varepsilon}(U),y}}{U,\tau>s}
\quad\text{need not equal}\quad
\E\Brc*{\ell\prn*{\mathcal{A}_{\varepsilon}(U),y_{s+1}}}{U,\tau>s}.
\]
The corresponding equality does hold without conditioning on phase survival.

A minimal example illustrates the asymmetry.
After conditioning on $U$, suppose that the unordered pair consisting of the deleted point and the next point is $\set*{a,b}$, and that we have
\[
  \OPT\prn*{U\cup\set*{a}} \le B
  \qquad
  \text{and}
  \qquad
  \OPT\prn*{U\cup\set*{b}} > B.
\]
Before conditioning on survival, the two local orderings $\prn*{a,b}$ and $\prn*{b,a}$ are equally likely.
Let
\[
  \mathcal E_{a,b}\coloneqq\set*{\set*{y,y_{s+1}}=\set*{a,b}}.
\]
After conditioning on $\set*{\tau>s}$, only the ordering in which $a$ is already in the prefix can survive; more explicitly,
\[
  \Pr\Brc*{y=a,\ y_{s+1}=b}{U,\mathcal E_{a,b},\tau>s}=1
  \qquad
  \text{and}
  \qquad
  \Pr\Brc*{y=b,\ y_{s+1}=a}{U,\mathcal E_{a,b},\tau>s}=0.
\]
Thus, the uniformly deleted point from a survived prefix is biased toward candidates that keep the phase below the threshold, whereas the next point may be precisely the boundary item that causes the phase to stop.
Conditioning instead on $\set*{\tau>s+1}$ would constrain the next point as well, but then the boundary round crossing the threshold would be excluded from the fixed-horizon comparison and would have to be handled separately.

As we have seen above, the difficulty is not merely the possible unit loss on the threshold-crossing rounds, whose number would be logarithmic in a standard doubling scheme.  The more fundamental issue is that threshold-defined phase boundaries condition every non-boundary prediction in the phase on the survival event $\set*{\tau>s}$, and this conditioning is precisely what invalidates the black-box use of the fixed-horizon leave-one-out exchangeability.
Therefore, the fixed-$\varepsilon$ guarantee cannot be applied to threshold-defined phases through a black-box conditioning argument.
A valid doubling proof would need an additional stopped-process analysis that reestablishes the required leave-one-out comparison, possibly with boundary terms, or it would need to use data-independent phase boundaries.
We avoid this complication in the main proof: the time-varying analysis works at deterministic global times and pays the explicit parameter-change term, which is then controlled by \cref{assump:adaptive-pc-stability}.

\section{Basic tools for average sensitivity analysis}\label[appendix]{sec:basic-tools}
We present basic tools useful for analyzing average sensitivity.
First, we introduce the standard coupling lemma for the total variation distance.
\begin{lemma}[Coupling lemma]\label[lemma]{lem:tv-coupling}
Let $(\Omega,\mathcal F)$ be a measurable space.
Consider probability measures $P$, $Q$ on it, and let $\mathcal C(P,Q)$ be the set of all couplings of $(P,Q)$.
Then, for any coupling $\Gamma\in\mathcal C(P,Q)$, it holds that
\[
  \mathrm{TV}(P,Q)
  \le
  \Pr_{(X,Y)\sim \Gamma}[X\neq Y].
\]
Moreover, we have
\[
  \mathrm{TV}(P,Q)
  =
  \inf_{\Gamma\in\mathcal C(P,Q)} \Pr_{(X,Y)\sim \Gamma}[X\neq Y],
\]
and the infimum is attained by some optimal coupling if $(\Omega,\mathcal F)$ is a standard Borel space.
\end{lemma}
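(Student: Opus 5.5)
The statement to prove is the coupling lemma (\cref{lem:tv-coupling}). The plan has three parts: the inequality for an arbitrary coupling, an explicit coupling that attains the total variation, and a remark on where the standard Borel hypothesis is needed.

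\textbf{Step 1: any coupling bounds the total variation.} Fix $\Gamma\in\mathcal C(P,Q)$ and $(X,Y)\sim\Gamma$. For any $A\in\mathcal F$,
\[
P(A)-Q(A)=\Pr[X\in A]-\Pr[Y\in A]\le \Pr[X\in A,\,Y\notin A]\le \Pr[X\neq Y].
\]
By symmetry the same bound holds for $Q(A)-P(A)$. Taking the supremum over $A$ gives $\mathrm{TV}(P,Q)\le\Pr[X\neq Y]$. One measure-theoretic point: on a general measurable space the diagonal may fail to be measurable. In that case I read $\Pr[X\neq Y]$ as an outer probability, and the displayed inequality is unaffected.

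\textbf{Step 2: a coupling that attains $\mathrm{TV}(P,Q)$.}
\begin{enumerate}
\item Let $\mu=P+Q$, and set $p=dP/d\mu$, $q=dQ/d\mu$ via Radon--Nikodym.
\item Define $\delta=\int\min(p,q)\,d\mu$. Using $\min(p,q)=p-(p-q)_+$, we get $1-\delta=\int(p-q)_+\,d\mu=P(B)-Q(B)$ with $B=\{p>q\}$. A Hahn-decomposition argument shows this equals $\mathrm{TV}(P,Q)$.
\item If $\delta=1$ then $P=Q$ and the diagonal coupling $X=Y$ works. If $\delta=0$, any coupling (e.g.\ the product) has $\Pr[X\neq Y]\le 1=\mathrm{TV}$.
\item Otherwise, with probability $\delta$ draw $Z$ from the normalized overlap $\min(p,q)\,d\mu/\delta$ and set $X=Y=Z$. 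With probability $1-\delta$ draw $X$ and $Y$ independently from the normalized residuals $(p-q)_+\,d\mu/(1-\delta)$ and $(q-p)_+\,d\mu/(1-\delta)$.
\end{enumerate}
The marginals are correct because $\min(p,q)+(p-q)_+=p$, and likewise for $q$. Since the two residuals are supported on the disjoint sets $\{p>q\}$ and $\{q>p\}$, we have $X\neq Y$ whenever the residual branch is taken. Hence $\Pr[X\neq Y]=1-\delta=\mathrm{TV}(P,Q)$.

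\textbf{Conclusion and the main subtlety.} Steps 1 and 2 together give equality of $\mathrm{TV}(P,Q)$ with the infimum, and show the infimum is attained by the explicit coupling. The main subtlety is measurability of $\{X\neq Y\}$, which requires the diagonal of $\Omega\times\Omega$ to be measurable. In a standard Borel space the diagonal is closed in a Polish topology, hence Borel. There the construction of Step 2 is a genuine optimal coupling, which justifies the final clause of \cref{lem:tv-coupling}. I expect this bookkeeping to be the only delicate part; everything else is the routine computations sketched above.
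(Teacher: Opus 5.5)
Your argument is correct and is essentially the standard one. The paper gives no proof of its own; it cites Proposition~4.7 of Levin, Peres, and Wilmer. That proof is exactly your Step~1 followed by the same coin-flip construction (overlap with probability $\delta$, independent residuals otherwise), but it is carried out on a finite $\Omega$ with point masses. Your passage to densities $p,q$ with respect to $P+Q$ is the natural lift of that proof, and it is what the paper actually needs. The lemma is later applied on standard Borel spaces with continuous coordinates (the uniformly perturbed weights), which the finite-space citation does not literally cover. Your construction also gives attainment whenever the diagonal $\Delta$ lies in $\mathcal F\otimes\mathcal F$, which is slightly more than the standard Borel hypothesis.

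One side claim overreaches. Reading $\Pr[X\neq Y]$ as an outer probability does not let Steps~1--2 give the equality on an arbitrary measurable space. What breaks is the assertion that the overlap branch contributes nothing to $\Pr[X\neq Y]$, for instance the diagonal coupling when $\delta=1$. For a counterexample, take $\Omega=[0,1]$ with the countable--cocountable $\sigma$-algebra, and let $P=Q$ be the measure equal to $1$ on cocountable sets. Each set in $\mathcal F\otimes\mathcal F$ lies in the $\sigma$-algebra generated by countably many rectangles. It is therefore a union of cells of $\Pi\times\Pi$, where $\Pi=\{\{c\}:c\in C\}\cup\{[0,1]\setminus C\}$ for some countable $C$. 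If such a set contains $\Delta^c$, it must contain the whole cell $([0,1]\setminus C)^2$, which has mass $1$ under every coupling. Hence every coupling has outer probability $1$ of $\{X\neq Y\}$, although $\mathrm{TV}(P,Q)=0$.

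The fix is to read $\Pr[X\neq Y]$ as the inner probability $\sup\{\Gamma(E): E\in\mathcal F\otimes\mathcal F,\ E\subseteq\Delta^c\}$. Step~1 already bounds $P(A)-Q(A)$ by $\Gamma(A\times A^c)$, and $A\times A^c\subseteq\Delta^c$ is measurable. In your coupling, the overlap part $\delta\,\mathrm{law}(Z,Z)$ gives zero mass to every measurable $E\subseteq\Delta^c$. Meanwhile $B\times B^c$, with $B=\{p>q\}$, carries the residual mass $1-\delta$. So under the inner reading, equality and attainment hold on every measurable space. For the standard Borel spaces the paper actually uses, $\Delta$ is measurable, both readings coincide, and your proof is complete as written.
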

\begin{proof}
See, for example, \citet[Proposition~4.7]{Levin2017-dj}.
\end{proof}

Throughout this paper, we only consider measurable spaces that are standard Borel spaces.
The following lemmas are consequences of the coupling lemma.
\begin{lemma}[Product decomposition]\label[lemma]{lem:product-tv-general}
Let $E$ be a finite set and $\{(\Omega_f,\mathcal{F}_f)\}_{f\in E}$ measurable spaces.
For each $f\in E$, let $P_f$ and $Q_f$ be probability measures on $(\Omega_f,\mathcal{F}_f)$.
Then, the total variation distance between the product measures satisfies
\[
  \mathrm{TV}\prn*{\bigotimes_{f\in E} P_f,\ \bigotimes_{f\in E} Q_f}
  \le
  \sum_{f\in E}\mathrm{TV}(P_f,Q_f).
\]
\end{lemma}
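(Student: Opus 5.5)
The plan is to build an explicit coupling of the two product measures coordinate by coordinate and then invoke \cref{lem:tv-coupling}. For each $f\in E$, \cref{lem:tv-coupling} provides an optimal coupling $\Gamma_f\in\mathcal C(P_f,Q_f)$ with $\Pr_{(X_f,Y_f)\sim\Gamma_f}[X_f\neq Y_f]=\mathrm{TV}(P_f,Q_f)$. This uses the standing assumption that all spaces are standard Borel. If I wanted to avoid that assumption, I would instead take $\Gamma_f$ within $\delta/|E|$ of the infimum and let $\delta\downarrow0$ at the end.

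Next, I will form the product coupling $\Gamma\coloneqq\bigotimes_{f\in E}\Gamma_f$ on $\prod_f(\Omega_f\times\Omega_f)$. I identify it with a measure on $\prod_f\Omega_f\times\prod_f\Omega_f$ by reordering coordinates, and write the pair as $(X,Y)=((X_f)_f,(Y_f)_f)$. The pairs $(X_f,Y_f)$ are then independent across $f$. Hence the first marginal is $\bigotimes_f P_f$ and the second is $\bigotimes_f Q_f$, so $\Gamma\in\mathcal C\bigl(\bigotimes_f P_f,\bigotimes_f Q_f\bigr)$. Because $E$ is finite, the product $\sigma$-algebra and the measurability of the coordinatewise reordering pose no difficulty.

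Finally, the event $\{X\neq Y\}$ equals $\bigcup_{f\in E}\{X_f\neq Y_f\}$. It is measurable because each diagonal is measurable in a standard Borel space. The first part of \cref{lem:tv-coupling} and a union bound then give
\[
\mathrm{TV}\Bigl(\bigotimes_{f} P_f,\bigotimes_{f} Q_f\Bigr)\le\Pr_\Gamma[X\neq Y]\le\sum_{f\in E}\Pr_{\Gamma_f}[X_f\neq Y_f]=\sum_{f\in E}\mathrm{TV}(P_f,Q_f).
\]

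None of these steps is a serious obstacle. The only delicate point is the measure-theoretic bookkeeping: the existence of optimal couplings and the measurability of the diagonal. Both follow from the standard Borel assumption, and the $\delta$-approximation argument above avoids the first entirely.
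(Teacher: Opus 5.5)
Your proof is correct and matches the paper's argument step for step. Both take optimal per-coordinate couplings from \cref{lem:tv-coupling}, form the product coupling $\bigotimes_{f\in E}\Gamma_f$, apply a union bound over the events $\{X_f\neq Y_f\}$, and then use \cref{lem:tv-coupling} again. Your added remarks on $\delta$-optimal couplings and on the measurability of the diagonal cover points the paper handles through its standing standard Borel assumption.
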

\begin{proof}
For each $f\in E$, let $\Gamma_f$ be a coupling measure of $(P_f, Q_f)$ such that
\[
  \Pr_{(X_f,Y_f)\sim \Gamma_f}[X_f \neq Y_f] = \mathrm{TV}(P_f, Q_f),
\]
which exists by the coupling lemma (\cref{lem:tv-coupling}).
Define the product coupling measure
\[
  \Gamma \coloneqq  \bigotimes_{f\in E} \Gamma_f,
\]
which is a coupling of the product measures $\bigotimes_{f\in E} P_f$ and $\bigotimes_{f\in E} Q_f$.
Let $(X,Y)$ denote a random variable drawn from $\Gamma$, where $X = (X_f)_{f\in E}$ and $Y = (Y_f)_{f\in E}$.
Then, we have $X \sim \bigotimes_{f\in E} P_f$ and $Y \sim \bigotimes_{f\in E} Q_f$.
Under this product coupling, the probability of $X \neq Y$ equals the probability that there exists $f\in E$ such that $X_f \neq Y_f$.
By the union bound, we have
\[
  \Pr_{(X,Y) \sim \Gamma}[X \neq Y]
  \!=\!\!
  \Pr_{(X,Y) \sim \Gamma}\brc*{\exists f\in E: X_f \neq Y_f}
  \le \sum_{f\in E} \Pr_{(X_f,Y_f) \sim \Gamma_f}[X_f \neq Y_f]
  \!=\!
  \sum_{f\in E} \mathrm{TV}(P_f, Q_f).
\]
By using the coupling lemma (\cref{lem:tv-coupling}) again, we lower bound the left-hand side as
\[
  \mathrm{TV}\prn*{\bigotimes_{f\in E} P_f,\ \bigotimes_{f\in E} Q_f}
  \le
  \Pr_{(X,Y) \sim \Gamma}[X \neq Y],
\]
which completes the proof.
\end{proof}

\begin{lemma}[Conditional decomposition]\label[lemma]{lem:tv-decomp}
Let $(\Omega,\mathcal F)$ be a measurable space of the form
$\Omega=\mathcal Z\times\mathcal W$, where $\mathcal Z$ is a countable set
equipped with the discrete $\sigma$-algebra and $\mathcal W$ is a measurable space.
Let $P$ and $Q$ be probability measures on $(\Omega,\mathcal F)$.
Let $P_Z$ and $Q_Z$ denote the marginals of $P$ and $Q$ on $\mathcal Z$, respectively,
and let $P_{W\mid Z=z}$ and $Q_{W\mid Z=z}$ denote the corresponding conditional distributions
on $\mathcal W$ given $Z=z\in\mathcal Z$.
Then, it holds that
\[
  \mathrm{TV}(P,Q)
  \le
  \mathrm{TV}(P_Z,Q_Z)
  +
  \sum_{z\in\mathcal Z}
  \min\{P_Z(z),Q_Z(z)\}
  \,\mathrm{TV}\!\left(P_{W\mid Z=z},\,Q_{W\mid Z=z}\right).
\]
\end{lemma}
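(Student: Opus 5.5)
We would prove the conditional decomposition by an explicit coupling, using \cref{lem:tv-coupling} twice: once for the marginals on $\mathcal Z$ and once for each conditional law on $\mathcal W$.

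First we build a coupling of $P_Z$ and $Q_Z$ that keeps the diagonal mass as large as possible. Write $m(z)=\min\{P_Z(z),Q_Z(z)\}$, so that $\sum_z m(z)=1-\mathrm{TV}(P_Z,Q_Z)$; this identity holds because $\mathcal Z$ is countable and discrete. On the diagonal we place mass $m(z)$ at $(z,z)$. The leftover masses $P_Z(z)-m(z)$ and $Q_Z(z)-m(z)$ both have total $\mathrm{TV}(P_Z,Q_Z)$. If this total is positive, we couple them independently after normalizing. The result is a coupling $\Gamma_Z$ of $(P_Z,Q_Z)$ with $\Gamma_Z(\{(z,z)\})\ge m(z)$ and $\Pr[Z\neq Z']=\mathrm{TV}(P_Z,Q_Z)$.

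Next we extend this to a coupling $\Gamma$ of $(P,Q)$ on $\Omega\times\Omega$ by drawing the $W$-coordinates conditionally on $(Z,Z')$.
\begin{itemize}
\item On the diagonal part, meaning the mass $m(z)$ deliberately placed at $(z,z)$, we draw $(W,W')$ from an optimal coupling $\Gamma_z$ of $P_{W\mid Z=z}$ and $Q_{W\mid Z=z}$. Such a coupling exists by \cref{lem:tv-coupling}, since we work in standard Borel spaces.
\item On the off-diagonal leftover part, we draw $W\sim P_{W\mid Z}$ and $W'\sim Q_{W'\mid Z'}$ independently.
\end{itemize}
For the leftover part, split $\Gamma_Z=\Gamma_Z^{\rm diag}+\Gamma_Z^{\rm rest}$. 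Both pieces have $z$-marginals proportional in total to the required ones, and in each piece $W$ is drawn from $P_{W\mid Z=z}$ given the first coordinate $z$. Hence the first marginal of $\Gamma$ is $\sum_z P_Z(z)\,P_{W\mid Z=z}=P$, and symmetrically the second marginal is $Q$. Any conditional laws used at atoms of measure zero are irrelevant.

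Finally, by the union bound,
\[
\Pr_\Gamma[(Z,W)\neq(Z',W')]\le \Pr[Z\neq Z'] + \sum_z m(z)\,\Pr_{\Gamma_z}[W\neq W'].
\]
The first term equals $\mathrm{TV}(P_Z,Q_Z)$. The second equals $\sum_z m(z)\,\mathrm{TV}(P_{W\mid Z=z},Q_{W\mid Z=z})$. The leftover block contributes only to the event $Z\neq Z'$, because the leftover masses at any given $z$ cannot both be positive. Applying \cref{lem:tv-coupling} once more gives $\mathrm{TV}(P,Q)\le\Pr_\Gamma[\cdot\neq\cdot]$, which is the claim.

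The main technical point is checking that the glued measure is well defined and has the correct marginals. This needs measurable selection of the optimal couplings $\Gamma_z$, which is trivial here because $\mathcal Z$ is countable. It also needs the observation that leftover mass never sits on the diagonal, which holds because at each $z$ at most one of $P_Z(z)-m(z)$ and $Q_Z(z)-m(z)$ is positive.
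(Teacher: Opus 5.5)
Your proof is correct and is essentially the paper's argument: the same glued coupling (an optimal coupling of $P_Z,Q_Z$, optimal conditional couplings $\Gamma_z$ when $Z=Z'=z$, independent conditional draws when $Z\neq Z'$), followed by the same union bound and \cref{lem:tv-coupling}. The only difference is cosmetic. You build the maximal marginal coupling explicitly, so its diagonal mass is exactly $\min\{P_Z(z),Q_Z(z)\}$; your ``$\ge m(z)$'' should read ``$=m(z)$'', as your own remark about leftover mass shows. The paper instead takes any optimal marginal coupling and uses that the diagonal mass of any coupling is at most this minimum.
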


\begin{proof}
By the coupling lemma (\cref{lem:tv-coupling}), we take $\Gamma^{\star}\in\mathcal C(P_Z,Q_Z)$ as an optimal coupling of the marginals such that
\[
  \Pr_{(Z,Z')\sim\Gamma^{\star}}[Z\neq Z']
  =
  \mathrm{TV}(P_Z,Q_Z).
\]
Also, for each $z\in\mathcal Z$, let $\Gamma_z\in\mathcal C(P_{W\mid Z=z},Q_{W\mid Z=z})$
be an optimal coupling satisfying
\[
  \Pr_{(W,W')\sim\Gamma_z}[W\neq W']
  =
  \mathrm{TV}\!\left(P_{W\mid Z=z},Q_{W\mid Z=z}\right).
\]
Consider a coupling $\Gamma\in\mathcal C(P,Q)$ constructed as follows.
First, sample $(Z,Z')\sim\Gamma^{\star}$.
If $Z\neq Z'$, sample $W\sim P_{W\mid Z}$ and $W'\sim Q_{W\mid Z'}$ independently.
If $Z=Z'=z$, sample $(W,W')\sim\Gamma_z$.
By construction, $(Z,W)\sim P$ and $(Z',W')\sim Q$ hold.
Under this coupling, we have
\[
  \Pr_{(Z,W,Z',W')\sim \Gamma}\bigl[(Z,W)\neq (Z',W')\bigr]
  \le
  \Pr_{(Z,Z')\sim \Gamma^{\star}}\bigl[Z\neq Z'\bigr]
  +
  \Pr_{(Z,W,Z',W')\sim \Gamma}\bigl[Z=Z',\,W\neq W'\bigr].
\]
The first term equals $\mathrm{TV}(P_Z,Q_Z)$ by the definition of $\Gamma^{\star}$.
For the second term, conditioning on $Z=Z'=z$ yields
\[
  \Pr_{(Z,W,Z',W')\sim \Gamma}\bigl[Z=Z',\,W\neq W'\bigr]
  =
  \sum_{z\in\mathcal Z}
  \Pr_{(Z,Z')\sim \Gamma^{\star}}[Z=Z'=z]\,
  \mathrm{TV}\!\left(P_{W\mid Z=z},Q_{W\mid Z=z}\right).
\]
For any coupling of $(P_Z,Q_Z)$, the diagonal mass at $z$ is at most
$\min\{P_Z(z),Q_Z(z)\}$, hence
\[
  \Pr_{(Z,Z')\sim \Gamma^{\star}}[Z=Z'=z]
  \le
  \min\{P_Z(z),Q_Z(z)\}.
\]
Combining the above bounds with \cref{lem:tv-coupling} gives
\begin{align}
  \mathrm{TV}(P,Q)
  &\le\Pr_{(Z,W,Z',W')\sim \Gamma}\bigl[(Z,W)\neq (Z',W')\bigr]
  \\
  &\le
  \mathrm{TV}(P_Z,Q_Z)
  +
  \sum_{z\in\mathcal Z}
  \min\{P_Z(z),Q_Z(z)\}
  \,\mathrm{TV}\!\left(P_{W\mid Z=z},Q_{W\mid Z=z}\right).
\end{align}
This completes the proof.
\end{proof}

The following bound on the total variation distance between two uniform distributions is also convenient in our analysis.
\begin{lemma}\label[lemma]{lem:uniform-tv}
Let $B, B', \varepsilon>0$, and consider uniform distributions defined on $[B,(1+\varepsilon)B]$ and $[B',(1+\varepsilon)B']$, denoted by
$P =\mathrm{Unif}\prn*{[B,(1+\varepsilon)B]}$ and
$Q = \mathrm{Unif}\prn*{[B',(1+\varepsilon)B']}$, respectively.
Then, we have
\[
  \mathrm{TV}(P,Q)
  \le
  \frac{1+\varepsilon}{\varepsilon}
  \abs*{1-\frac{B'}{B}}.
\]
\end{lemma}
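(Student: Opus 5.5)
Without loss of generality assume $B\le B'$; the roles of $P$ and $Q$ are symmetric, and the case $B'<B$ needs a short separate check (see the end). Both measures are absolutely continuous, so I will use the density formula
\[
\mathrm{TV}(P,Q)=\int(p-q)_+ = P(\{p>q\}),
\]
where $p=\frac{1}{\varepsilon B}\mathbf 1_{[B,(1+\varepsilon)B]}$ and $q=\frac{1}{\varepsilon B'}\mathbf 1_{[B',(1+\varepsilon)B']}$. Since $B\le B'$, we have $p\ge q$ wherever both densities are positive. Hence $p>q$ on the whole support of $P$ whenever $B<B'$, and the identity above is not directly informative. It is easier to compute
\[
\mathrm{TV}(P,Q)=1-\int\min(p,q).
\]
The overlap interval is $[B',(1+\varepsilon)B]$, which is nonempty only when $B'\le(1+\varepsilon)B$. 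On it $\min(p,q)=q=\frac{1}{\varepsilon B'}$. This gives
\[
\int\min(p,q)=\frac{(1+\varepsilon)B-B'}{\varepsilon B'},
\qquad
\mathrm{TV}=\frac{(1+\varepsilon)(B'-B)}{\varepsilon B'}.
\]
Since $B'\ge B$, this is at most $\frac{1+\varepsilon}{\varepsilon}\cdot\frac{B'-B}{B}=\frac{1+\varepsilon}{\varepsilon}\abs*{1-B'/B}$. If the supports are disjoint, then $\mathrm{TV}=1$, and $B'>(1+\varepsilon)B$ gives $\frac{1+\varepsilon}{\varepsilon}\cdot\frac{B'-B}{B}\ge\frac{1+\varepsilon}{\varepsilon}\cdot\varepsilon\ge1$, so the bound still holds.

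Now take the case $B'<B$. Swapping the roles in the computation gives
\[
\mathrm{TV}=\frac{(1+\varepsilon)(B-B')}{\varepsilon B}=\frac{1+\varepsilon}{\varepsilon}\abs*{1-\frac{B'}{B}}
\]
exactly. In the disjoint case $B>(1+\varepsilon)B'$, the right-hand side is $\frac{1+\varepsilon}{\varepsilon}(1-B'/B)$. Using $B'/B<1/(1+\varepsilon)$, we get $1-B'/B>\varepsilon/(1+\varepsilon)$, so the right-hand side exceeds $1$ and the bound again holds.

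The only delicate point is keeping the normalization consistent, namely dividing by $B$ and not by $B'$, in the case $B\le B'$. That case is exactly where the monotonicity $B'\ge B$ is used to obtain the stated form.
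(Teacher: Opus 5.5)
Your proof is correct. The paper does not give its own argument; it defers to the proof of \citet[Lemma~2.3]{Kumabe2022-hu}. Your direct computation is therefore a complete self-contained replacement: you compute the overlap mass via $\mathrm{TV}(P,Q)=1-\int\min(p,q)$, split into the cases $B\le B'$ and $B'<B$, and check the disjoint-support subcases separately. It also shows where the bound is tight. Equality holds when $B'\le B\le(1+\varepsilon)B'$, and for $B\le B'\le(1+\varepsilon)B$ the bound exceeds the true value by exactly the factor $B'/B$. Your argument never uses $\varepsilon\le1$, so it covers the full range $\varepsilon>0$ in the statement.

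One thing to fix is the identity $\int(p-q)_+=P(\{p>q\})$ in your first paragraph, which is false. The correct form is $\int(p-q)_+=P(\{p>q\})-Q(\{p>q\})$. Taken literally, your version would give $\mathrm{TV}(P,Q)=1$ whenever $B<B'$, since $p>q$ on the whole support of $P$; that is why it looked ``not directly informative.'' With the correct identity it yields $1-Q([B,(1+\varepsilon)B])$, which is exactly your overlap computation. You never use the wrong identity, so the argument stands, but that sentence should be corrected or removed.
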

\begin{proof}
See the proof of \citet[Lemma~2.3]{Kumabe2022-hu}.
\end{proof}

\section{Poissonized wrapper}\label[appendix]{sec:poissonized-sampling}
We present several elementary tools for Poissonized sampling routines, which will be used to compare sampling procedures at different accuracy parameters.
Below, by a sampling transcript, we mean the complete random record generated by the sampling stage before any deterministic post-processing, including the realized sample count, selected indices or items, and random reweighting factors.
In applications such as Lewis-weight sampling for $\ell_1$ regression, ridge-leverage sampling for low-rank approximation, and two-stage coreset sampling for clustering, original offline guarantees are typically formulated for a fixed number of sampled rows or points.
If this number is chosen deterministically as a function of $\varepsilon$, then a small change from $\varepsilon$ to $\eta$  may change the underlying transcript space from sequences of length $m(\varepsilon)$ to sequences of length $m(\eta)$.
The two transcript laws are then supported on different sample-count coordinates, and their total variation distance can be one even when the distribution of each individual draw varies smoothly.
To remove this artificial discontinuity, we use a Poissonized wrapper: first draw a Poisson random sample count and then take that many independent samples.
The Poisson count admits a bound on the total variation distance shown below, and thus changes in the mean sample count can be controlled directly.
In what follows, $N\sim\mathrm{Pois}(\mu)$ means that~$N$ is a non-negative integer-valued random variable satisfying $\Pr[N=m]=\mathrm{e}^{-\mu}\mu^m/m!$ for all~$m\in\Z_{\ge0}$.
These tools are instantiated for Lewis-weight sampling in \cref{sec:lewis-avg-sen}, for two-stage coreset sampling in \cref{subsec:online-kz}, and for ridge-leverage sampling in \cref{sec:online-lrma}.
Below are useful properties of the Poisson distribution and the Poissonized sampling procedure.
\begin{lemma}[Poisson total variation]\label[lemma]{lem:poisson-tv}
For any $\mu,\nu\ge0$, we have
\[
\mathrm{TV}\prn*{\mathrm{Pois}(\mu),\mathrm{Pois}(\nu)}
\le
1-\mathrm{e}^{-|\mu-\nu|}
\le
|\mu-\nu|.
\]
\end{lemma}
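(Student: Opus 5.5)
We prove the two inequalities in turn. WLOG $\mu\le\nu$ and set $\delta=\nu-\mu\ge 0$; if $\delta=0$ both sides vanish, so assume $\delta>0$. The first inequality comes from a coupling argument via \cref{lem:tv-coupling}, and the second is an elementary scalar inequality.

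\textbf{Coupling via superposition.} We use the additivity of independent Poisson variables. Let $N_1\sim\mathrm{Pois}(\mu)$ and $N_2\sim\mathrm{Pois}(\delta)$ be independent, and put $X=N_1$ and $Y=N_1+N_2$. By the standard convolution identity, which can be checked directly from the probability mass functions via the binomial theorem, $Y\sim\mathrm{Pois}(\nu)$. Hence $(X,Y)$ is a coupling of $(\mathrm{Pois}(\mu),\mathrm{Pois}(\nu))$. On this coupling,
\[
\Pr[X\neq Y]=\Pr[N_2\neq 0]=1-\mathrm{e}^{-\delta}.
\]
\Cref{lem:tv-coupling} then gives $\mathrm{TV}(\mathrm{Pois}(\mu),\mathrm{Pois}(\nu))\le 1-\mathrm{e}^{-|\mu-\nu|}$. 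The degenerate case $\mu=0$, where $\mathrm{Pois}(0)$ is the point mass at $0$, fits the same argument with $N_1\equiv 0$.

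\textbf{Second inequality.} We need $1-\mathrm{e}^{-\delta}\le\delta$ for $\delta\ge0$. This follows from convexity of the exponential, which gives $\mathrm{e}^{-\delta}\ge 1-\delta$.

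There is no real obstacle here. The only step requiring any care is verifying the superposition identity $\mathrm{Pois}(\mu)*\mathrm{Pois}(\delta)=\mathrm{Pois}(\mu+\delta)$, which is standard.
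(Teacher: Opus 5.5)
Your proposal is correct and matches the paper's proof essentially line for line. Both use the superposition coupling $N_\nu=N_\mu+N_{\nu-\mu}$ with \cref{lem:tv-coupling}, so that the coupled pair differs exactly when $N_{\nu-\mu}>0$, and both finish with the elementary bound $1-\mathrm{e}^{-a}\le a$.
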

\begin{proof}
By symmetry, it is enough to consider the case of $\nu\ge\mu$.  Let $N_\mu\sim\mathrm{Pois}(\mu)$ and $N_{\nu-\mu}\sim\mathrm{Pois}(\nu-\mu)$ be independent, and define
\[
  N_\nu=N_\mu+N_{\nu-\mu}.
\]
The sum of independent Poisson random variables is Poisson with mean equal to the sum of the means, hence $N_\nu\sim\mathrm{Pois}(\nu)$. Thus, $(N_\mu,N_\nu)$ is a coupling of $\mathrm{Pois}(\mu)$ and $\mathrm{Pois}(\nu)$.  Since $N_\nu=N_\mu+N_{\nu-\mu}$ and $N_{\nu-\mu}\ge0$, the two coupled variables are different if and only if $N_{\nu-\mu}>0$.  Therefore, by the coupling lemma (\cref{lem:tv-coupling}), we have
\[
  \mathrm{TV}\prn*{\mathrm{Pois}(\mu),\mathrm{Pois}(\nu)}
  \le
  \Pr[N_\mu\ne N_\nu]
  =
  \Pr[N_{\nu-\mu}>0].
\]
Using the definition of the Poisson distribution, we have
\[
  \Pr[N_{\nu-\mu}>0]
  =
  1-\Pr[N_{\nu-\mu}=0]
  =
  1-\mathrm{e}^{-(\nu-\mu)}.
\]
This proves the first inequality when $\nu\ge\mu$, and the general case follows by replacing $\nu-\mu$ with $|\mu-\nu|$.  The second inequality follows from $1-\mathrm{e}^{-a}\le a$ for all $a\ge0$, applied with $a=|\mu-\nu|$.
\end{proof}

\begin{lemma}[Poissonized transcript coupling]\label[lemma]{lem:poissonized-transcript-tv}
For each $m\in\Z_{\ge0}$, let $\mathcal T_m$ be a count-$m$ transcript space, and let $P_m$ and $Q_m$ be probability laws on $\mathcal T_m$.  Let $\mathsf{Mix}(\mu,\{P_m\}_{m\ge0})$ denote the law on the disjoint union of the count-indexed transcript spaces obtained by first drawing $N\sim\mathrm{Pois}(\mu)$ and then drawing a transcript from $P_N$.  Then, for any $\mu,\nu\ge0$, we have
\[
\mathrm{TV}\prn*{
\mathsf{Mix}(\mu,\{P_m\}_{m\ge0}),
\mathsf{Mix}(\nu,\{Q_m\}_{m\ge0})
}
\le
|\mu-\nu|
+
\E_{N\sim\mathrm{Pois}(\mu)}
\brc*{\mathrm{TV}(P_N,Q_N)}.
\]
\end{lemma}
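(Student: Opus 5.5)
The plan is to combine the two earlier tools: \cref{lem:poisson-tv} for the count coordinate, and \cref{lem:tv-decomp} for the transcript given the count. Write both mixture laws as laws on $\Omega=\mathcal Z\times\mathcal W$ with $\mathcal Z=\Z_{\ge0}$ (the count) and $\mathcal W$ the disjoint union $\bigsqcup_m\mathcal T_m$. Set $P=\mathsf{Mix}(\mu,\{P_m\})$ and $Q=\mathsf{Mix}(\nu,\{Q_m\})$. Then $P_Z=\mathrm{Pois}(\mu)$, $Q_Z=\mathrm{Pois}(\nu)$, and the conditional laws given $Z=m$ are $P_m$ and $Q_m$, viewed as measures on $\mathcal W$ supported on $\mathcal T_m$. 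Since $\mathcal Z$ is countable with the discrete $\sigma$-algebra, \cref{lem:tv-decomp} applies and gives
\[
\mathrm{TV}(P,Q)\le \mathrm{TV}\prn*{\mathrm{Pois}(\mu),\mathrm{Pois}(\nu)}+\sum_{m\ge0}\min\set*{P_Z(m),Q_Z(m)}\,\mathrm{TV}(P_m,Q_m).
\]

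For the first term, \cref{lem:poisson-tv} gives the bound $|\mu-\nu|$. For the second term, I use $\min\{P_Z(m),Q_Z(m)\}\le P_Z(m)=\Pr[N=m]$ with $N\sim\mathrm{Pois}(\mu)$. The sum is then at most $\sum_m \Pr[N=m]\,\mathrm{TV}(P_m,Q_m)=\E_{N\sim\mathrm{Pois}(\mu)}[\mathrm{TV}(P_N,Q_N)]$. Adding the two bounds gives the claim.

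The only point needing care is identifying the laws on the disjoint union with laws on the product $\mathcal Z\times\mathcal W$. The count is a measurable function of the transcript, because it records which component $\mathcal T_m$ the transcript lies in. So the map transcript $\mapsto(\text{count},\text{transcript})$ is a measurable bijection onto its image, and its inverse is measurable as well. Total variation is therefore preserved under this identification. With the standing assumption that all spaces are standard Borel, the conditional laws exist and \cref{lem:tv-decomp} applies directly. I do not expect any step to be a real obstacle beyond this bookkeeping.
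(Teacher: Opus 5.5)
Your proposal is correct and follows the paper's own proof. Like the paper, you apply the conditional decomposition (\cref{lem:tv-decomp}) with the Poisson count as the discrete coordinate, bound the count-marginal term by \cref{lem:poisson-tv}, and use $\min\{P_Z(m),Q_Z(m)\}\le\Pr[\mathrm{Pois}(\mu)=m]$ to turn the remaining sum into $\E_{N\sim\mathrm{Pois}(\mu)}[\mathrm{TV}(P_N,Q_N)]$. Your bookkeeping that identifies the disjoint union with a product space is a harmless detail the paper leaves implicit.
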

\begin{proof}
Apply the same conditional-decomposition argument as in \cref{lem:tv-decomp} to the count coordinate and use \cref{lem:poisson-tv} for the count marginals.  The remaining conditional term is bounded by
\[
\sum_{m\ge0}
\min\{\Pr[\mathrm{Pois}(\mu)=m],\Pr[\mathrm{Pois}(\nu)=m]\}
\,\mathrm{TV}(P_m,Q_m)
\le
\E_{N\sim\mathrm{Pois}(\mu)}\brc*{\mathrm{TV}(P_N,Q_N)}.
\]
Thus, we have the desired bound.
\end{proof}

\begin{lemma}[From fixed size to Poissonized size]\label[lemma]{lem:fixed-to-poissonized}
Suppose a fixed-size sampling argument applies uniformly to every sample-size parameter $s\ge m_0$: when the routine is run with exactly $s$ samples, it fails with probability at most $\delta$.
Consider the Poissonized version: after drawing $N$, use the same fixed-size routine with its sample-size parameter set to the realized value $s=N$ whenever $N\ge m_0$ (for example, if a formula in the fixed-size routine contains the sample size, substitute the realized count); for $N<m_0$, output arbitrarily.  If $N\sim\mathrm{Pois}(2m_0)$, then the Poissonized routine fails with probability at most $\delta+\mathrm{e}^{-\Omega(m_0)}$.
\end{lemma}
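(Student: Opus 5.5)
The plan is to condition on the realized count and split the failure event accordingly. The Poissonized routine fails only if either $N<m_0$, or $N\ge m_0$ and the fixed-size routine run with $s=N$ fails. By the law of total probability,
\[
\Pr[\text{fail}] \le \Pr[N<m_0] + \sum_{s\ge m_0}\Pr[N=s]\,\Pr[\text{fail}\mid N=s].
\]
The key structural point is that, conditionally on $N=s$, the subsequent draws are exactly those of the fixed-size routine with sample-size parameter $s$. The count $N$ is drawn first, and the samples are then drawn independently given $N$, with every formula evaluated at the realized $s$. Hence, for every $s\ge m_0$, we have $\Pr[\text{fail}\mid N=s]\le\delta$ by the uniform hypothesis. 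The sum is therefore at most $\delta\Pr[N\ge m_0]\le\delta$. I would state this conditional-independence step explicitly, because it is where the uniformity over all $s\ge m_0$ (rather than a single fixed size) is needed.

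It remains to bound the lower tail $\Pr[N<m_0]$ for $N\sim\mathrm{Pois}(2m_0)$, which is the only quantitative step. I would use the Chernoff bound for the Poisson lower tail. For $\theta>0$, $\E[\mathrm e^{-\theta N}]=\exp(\mu(\mathrm e^{-\theta}-1))$ with $\mu=2m_0$. Markov's inequality then gives
\[
\Pr[N\le m_0]\le \exp\bigl(\theta m_0+2m_0(\mathrm e^{-\theta}-1)\bigr).
\]
Choosing $\theta=\log 2$ gives the exponent $m_0(\log 2-1)$. Since $1-\log 2>0$, this yields $\Pr[N<m_0]\le \mathrm e^{-(1-\log 2)m_0}=\mathrm e^{-\Omega(m_0)}$. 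Combining the two bounds gives $\delta+\mathrm e^{-\Omega(m_0)}$.

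I do not expect a serious obstacle. The only care needed is the measure-theoretic justification that conditioning on $N$ reproduces the fixed-size routine's law. This follows directly from the construction as a mixture over counts, the same structure as $\mathsf{Mix}$ in \cref{lem:poissonized-transcript-tv}.
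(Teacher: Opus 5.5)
Your proposal is correct and follows the same route as the paper. It conditions on $N=m$ for each $m\ge m_0$ to invoke the uniform fixed-size guarantee, and charges the complement $\{N<m_0\}$ to a Poisson lower-tail bound, where you make the Chernoff step explicit with $\theta=\log 2$ to get $\mathrm{e}^{-(1-\log 2)m_0}$ while the paper simply cites a standard lower-tail bound.
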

\begin{proof}
Condition on the event $N=m$ for each $m\ge m_0$ and apply the fixed-size guarantee with sample-size parameter $s=m$.  The complement has probability $\Pr[N<m_0]\le\mathrm{e}^{-\Omega(m_0)}$ by a standard lower-tail bound for Poisson random variables.
\end{proof}

We will also use the following bound on the total variation distance between two uniform distributions with different parameters.
\begin{lemma}[Uniform perturbations with different parameters]\label[lemma]{lem:uniform-tv-changing-width}
Let $B,B'>0$ and $0<\varepsilon\le\eta\le1$.  Let $P=\mathrm{Unif}([B,(1+\varepsilon)B])$ and $Q=\mathrm{Unif}([B',(1+\eta)B'])$.  Then, it holds that
\[
\mathrm{TV}(P,Q)
\le
O\prn*{
\frac{1}{\varepsilon}\abs*{1-\frac{B'}{B}}
+
\frac{\eta-\varepsilon}{\varepsilon}
}.
\]
\end{lemma}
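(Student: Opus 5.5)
The idea is to pass through an intermediate distribution by the triangle inequality:
\[
\mathrm{TV}(P,Q)\le \mathrm{TV}(P,R)+\mathrm{TV}(R,Q),
\qquad
R\coloneqq\mathrm{Unif}\prn*{[B',(1+\varepsilon)B']},
\]
where $R$ has the same relative width $\varepsilon$ as $P$ but the same left endpoint $B'$ as $Q$. The first term then compares two uniform laws with a common width parameter and different scales. The second compares two uniform laws with a common left endpoint and different widths.

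For the first term, \cref{lem:uniform-tv} gives directly
\[
\mathrm{TV}(P,R)\le\frac{1+\varepsilon}{\varepsilon}\abs*{1-\frac{B'}{B}}\le\frac{2}{\varepsilon}\abs*{1-\frac{B'}{B}},
\]
using $\varepsilon\le1$. This is already of the required form $O\prn*{\frac1\varepsilon\abs*{1-B'/B}}$.

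For the second term, $R=\mathrm{Unif}([B',B'+\varepsilon B'])$ and $Q=\mathrm{Unif}([B',B'+\eta B'])$ are nested intervals, since $\varepsilon\le\eta$. I would compute the total variation explicitly as the sup over sets of the difference in probabilities. The densities are $1/(\varepsilon B')$ and $1/(\eta B')$ on the smaller interval, and $0$ and $1/(\eta B')$ on the rest of the larger interval. The set where $R$ has the larger density is the smaller interval, so
\[
\mathrm{TV}(R,Q)=\varepsilon B'\prn*{\frac{1}{\varepsilon B'}-\frac{1}{\eta B'}}=1-\frac{\varepsilon}{\eta}=\frac{\eta-\varepsilon}{\eta}\le\frac{\eta-\varepsilon}{\varepsilon}.
\]

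Summing the two bounds gives the claim with an absolute constant of $2$. There is no real obstacle here. The only points needing care are the choice of the intermediate distribution $R$, so that each step isolates a single parameter change, and checking that $\varepsilon\le1$ keeps the factor $(1+\varepsilon)/\varepsilon$ at $O(1/\varepsilon)$. As an alternative, one could build a coupling via \cref{lem:tv-coupling}: use the scale map $u\mapsto uB'/B$ for the first change and a nested-interval coupling for the second. The direct computation above is simpler.
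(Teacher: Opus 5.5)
Your proposal is correct and follows the paper's proof essentially verbatim: the paper uses the same intermediate distribution $\mathrm{Unif}([B',(1+\varepsilon)B'])$, bounds the first leg by \cref{lem:uniform-tv}, and computes the second leg as $1-\varepsilon/\eta\le(\eta-\varepsilon)/\varepsilon$ from the nested-interval overlap. Your explicit density computation for that second term just spells out the step the paper states in one line.
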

\begin{proof}
Consider using the triangle inequality to compare $P$ and $Q$ through the intermediate distribution $Q'=\mathrm{Unif}([B',(1+\varepsilon)B'])$.
The first comparison of $P$ and $Q'$ is controlled by \cref{lem:uniform-tv}.  For the second comparison of $Q'$ and $Q$, the left endpoint is fixed and only the interval length changes; the overlap ratio is $\varepsilon/\eta$, and thus the total variation distance is $1-\varepsilon/\eta\le(\eta-\varepsilon)/\varepsilon$.
\end{proof}

\section{Regularity condition for establishing Assumption~\ref{assump:adaptive-pc-stability}}\label[appendix]{sec:calculus-for-corollary}
We present an elementary consequence of the accuracy parameter choice given in \eqref{eq:rule} that will play a key role in verifying \cref{assump:adaptive-pc-stability} in each application.
For a sensitivity function $\varphi$ differentiable on~$(0,1]$, put
\[
  \psi(\varepsilon)\coloneqq -\varphi'(\varepsilon)
  \qquad
  (0<\varepsilon\le1),
\]
where the value at $\varepsilon=1$ is understood as the left-derivative value.
We use the following three primitive conditions: there exist constants $A,\rho,B>0$ such that
\begin{align}
  \varphi(\varepsilon) &\le A\varepsilon\psi(\varepsilon)
  &&\text{for all } 0<\varepsilon\le1, \label{eq:marginal-power-value}\\
  \psi(\eta) &\le \psi(\varepsilon)\prn*{\frac{\varepsilon}{\eta}}^\rho
  &&\text{for all } 0<\varepsilon\le\eta\le1, \label{eq:marginal-power-scale}\\
  \psi(1) &\ge B. \label{eq:marginal-power-endpoint}
\end{align}
The first condition says that the value of $\varphi$ is controlled by its marginal cost $\psi(\varepsilon)= -\varphi'(\varepsilon)$ multiplied by $\varepsilon$, while the second condition says that this marginal cost is power-regular under multiplicative changes of the accuracy parameter.
The third one is a technical endpoint condition.
The following lemma offers a convenient calculus for verifying \cref{assump:adaptive-pc-stability}.

\begin{lemma}\label[lemma]{lem:adaptive-rule-calculus}
Let $\varphi\colon \R_{\ge0}\to\R_{\ge0}\cup\set*{+\infty}$ satisfy $\varphi(\varepsilon)=+\infty$ outside $(0,1]$, be finite and continuously differentiable on $(0,1]$, and satisfy $\lim_{\varepsilon\downarrow0}\varphi(\varepsilon)=+\infty$.
Assume that $\psi(\varepsilon)\coloneqq -\varphi'(\varepsilon)>0$ holds on $(0,1]$ and that \eqref{eq:marginal-power-value}--\eqref{eq:marginal-power-endpoint} hold for some constants $A,\rho,B>0$.
Let $\varepsilon_{\min}$ denote the parameter selection rule in \eqref{eq:rule}.
If $u\ge v>0$ and $u-v\le1$, then we have $\varepsilon_{\min}(u)\le\varepsilon_{\min}(v)$ and
\[
  \varphi(\varepsilon_{\min}(u))
  \frac{\varepsilon_{\min}(v)-\varepsilon_{\min}(u)}{\varepsilon_{\min}(u)}
  =
  O_{A,\rho,B}\prn*{\varepsilon_{\min}(u)(u-v)},
\]
equivalently,
$
  \varphi(\varepsilon_{\min}(u))
  \prn*{\varepsilon_{\min}(v)-\varepsilon_{\min}(u)}
  =
  O_{A,\rho,B}\prn*{\varepsilon_{\min}(u)^2(u-v)}
$.
In particular, if the function $\varphi$ takes the form considered in \cref{cor:simple-form}, i.e.,
\[
  \varphi(\varepsilon)=C_1\varepsilon^{-q}\log(\mathrm e+C_2/\varepsilon)
  \qquad (0<\varepsilon\le1),
\]
with $\varphi(\varepsilon)=+\infty$ outside $(0,1]$, where $q>0$, $C_1\ge1$, and $C_2\ge0$, then \eqref{eq:marginal-power-value}--\eqref{eq:marginal-power-endpoint} hold with $A=1/q$, $\rho=q+1$, and $B=q$.
Consequently, for this specific form, it holds that
\[
  \varphi(\varepsilon_{\min}(u))
  \frac{\varepsilon_{\min}(v)-\varepsilon_{\min}(u)}{\varepsilon_{\min}(u)}
  =
  O_q\prn*{\varepsilon_{\min}(u)(u-v)},
\]
equivalently,
$
  \varphi(\varepsilon_{\min}(u))
  \prn*{\varepsilon_{\min}(v)-\varepsilon_{\min}(u)}
  =
  O_q\prn*{\varepsilon_{\min}(u)^2(u-v)}
$.
\end{lemma}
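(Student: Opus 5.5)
Write $\varepsilon_u=\varepsilon_{\min}(u)$ and $\varepsilon_v=\varepsilon_{\min}(v)$. Monotonicity $\varepsilon_u\le\varepsilon_v$ is already given by \cref{lem:concave-conjugate-prop}. The plan is to use first-order optimality conditions for the minimization in \eqref{eq:rule} and then the power regularity \eqref{eq:marginal-power-scale} to relate $\varepsilon_v/\varepsilon_u$ to $u-v$. Since $\varphi$ is $C^1$ on $(0,1]$ and blows up at $0$, any minimizer $\varepsilon_w$ of $w\varepsilon+\varphi(\varepsilon)$ either lies in $(0,1)$ with $\psi(\varepsilon_w)=w$, or equals $1$ with $\psi(1)\le w$. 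So $\psi(\varepsilon_w)\le w$ always, with equality whenever $\varepsilon_w<1$.

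\textbf{Case split.} If $\varepsilon_u=\varepsilon_v$, the left-hand side is zero and there is nothing to show. Otherwise $\varepsilon_u<\varepsilon_v\le1$, so $\psi(\varepsilon_u)=u$, and $\psi(\varepsilon_v)\le v$ by the remark above.

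\textbf{Main case.} Here \eqref{eq:marginal-power-scale} gives
\[
v\ge\psi(\varepsilon_v)\ge\psi(\varepsilon_u)\,(\varepsilon_u/\varepsilon_v)^{\rho}=u\,(\varepsilon_u/\varepsilon_v)^{\rho},
\]
hence $\varepsilon_v/\varepsilon_u\le(u/v)^{1/\rho}$. Together with \eqref{eq:marginal-power-value} at $\varepsilon_u$, namely $\varphi(\varepsilon_u)\le A\varepsilon_u u$, the target reduces to
\[
u\bigl((u/v)^{1/\rho}-1\bigr)=O(u-v).
\]

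\textbf{Main obstacle.} This estimate is fine when $v\ge u/2$: then $(u/v)^{1/\rho}-1\le C_\rho(u-v)/v\le 2C_\rho(u-v)/u$. When $v<u/2$, the ratio $u/v$ is unbounded, and I will use $u-v\le1$ together with \eqref{eq:marginal-power-endpoint}.
\begin{itemize}
\item Since $u<2(u-v)\le2$, we have $u<2$.
\item If $u<B$, then $u<B\le\psi(1)$, which forces $\varepsilon_u<1$. Also $\psi$ is bounded below by $B$ on $(0,1]$ by \eqref{eq:marginal-power-scale} with $\eta=1$. So $u=\psi(\varepsilon_u)\ge B$, a contradiction. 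Hence $u\ge B$.
\item Now use the crude bound $\varepsilon_v-\varepsilon_u\le1$. Then $\varphi(\varepsilon_u)(\varepsilon_v-\varepsilon_u)/\varepsilon_u\le A u\le 2A$.
\item Since $u-v>u/2\ge B/2$, this is $O_{A,B}(u-v)$. It remains to multiply by $\varepsilon_u$.
\end{itemize}
Multiplying by $\varepsilon_u$ is the delicate step. I would instead bound $\varphi(\varepsilon_u)\le A\varepsilon_u u$ and $\varepsilon_v-\varepsilon_u\le1$ directly. This gives $\varphi(\varepsilon_u)(\varepsilon_v-\varepsilon_u)\le A\varepsilon_u u\le 2A\varepsilon_u$, and I still need a factor $\varepsilon_u$.

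For that factor I will use that $u\le2$ and $\psi(\varepsilon_u)=u$ (or $\varepsilon_u=1$) keep $\varepsilon_u$ bounded below: $u\ge\psi(\varepsilon_u)\ge B\varepsilon_u^{-\rho}$ gives $\varepsilon_u\ge(B/2)^{1/\rho}$. So $\varepsilon_u=\Theta_{B,\rho}(1)$, and $2A\varepsilon_u\le O(\varepsilon_u^2(u-v))$ follows using $u-v\ge B/2$. The equivalent form in the statement follows by multiplying through by $\varepsilon_u$.

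\textbf{Specific form.} For $\varphi=C_1\varepsilon^{-q}L(\varepsilon)$ with $L=\log(\mathrm e+C_2/\varepsilon)$, differentiating gives
\[
\psi=C_1\varepsilon^{-q-1}\bigl(qL+C_2/(\varepsilon+C_2/\mathrm e)\cdot(\ldots)\bigr)\ge qC_1\varepsilon^{-q-1}L.
\]
This yields $\varphi\le\varepsilon\psi/q$, so $A=1/q$. It also yields $\psi(1)\ge qC_1\ge q$, so $B=q$. For \eqref{eq:marginal-power-scale} with $\rho=q+1$, I would check that $\varepsilon^{q+1}\psi(\varepsilon)=C_1\bigl(qL(\varepsilon)+c(\varepsilon)\bigr)$ is non-increasing in $\varepsilon$, where $c(\varepsilon)=C_2/(\mathrm e\varepsilon+C_2)$. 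This holds because $L$ is decreasing and $c$ is decreasing.
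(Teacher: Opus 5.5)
Your overall plan matches the paper's proof. You use first-order optimality plus \eqref{eq:marginal-power-scale} to bound $\varepsilon_v/\varepsilon_u$ in terms of $u/v$. You then linearize when $u$ and $v$ are comparable, and otherwise use $u-v\le 1$ with \eqref{eq:marginal-power-endpoint} to keep $\varepsilon_u$ away from zero. Your split on $v\ge u/2$ versus $v<u/2$, instead of the paper's $r=(\varepsilon_v-\varepsilon_u)/\varepsilon_u\le 1$ versus $r>1$, is immaterial.

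However, the key inequality chain is wrong as written. If the minimizer is the endpoint $\varepsilon_w=1$, comparison with $\varepsilon<1$ forces the left derivative $w-\psi(1)$ to be $\le 0$. That is, $\psi(1)\ge w$, not $\psi(1)\le w$: small $w$ is what pushes the minimizer to $1$. For example, $\varphi(\varepsilon)=1/\varepsilon$ and $w=1/2$ give minimizer $1$ with $\psi(1)=1>w$. So the correct general fact is $\psi(\varepsilon_w)\ge w$, with equality when $\varepsilon_w<1$.

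This breaks three steps of your main case:
\begin{enumerate}
\item Your first inequality $v\ge\psi(\varepsilon_v)$ fails whenever $\varepsilon_v=1$ and $v<\psi(1)$, which is exactly the endpoint case the paper handles separately.
\item Your second inequality $\psi(\varepsilon_v)\ge\psi(\varepsilon_u)(\varepsilon_u/\varepsilon_v)^{\rho}$ reverses \eqref{eq:marginal-power-scale}. With $\varepsilon=\varepsilon_u\le\eta=\varepsilon_v$, that condition gives an upper bound on $\psi(\varepsilon_v)$, not a lower bound.
\item The chain as you wrote it, $v\ge u(\varepsilon_u/\varepsilon_v)^{\rho}$, implies $\varepsilon_v/\varepsilon_u\ge (u/v)^{1/\rho}$. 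So your ``hence $\varepsilon_v/\varepsilon_u\le(u/v)^{1/\rho}$'' does not follow from it. The conclusion you state is the one a correctly oriented chain gives, not the one your chain gives.
\end{enumerate}

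The repair is to reverse both steps: $v\le\psi(\varepsilon_v)\le\psi(\varepsilon_u)(\varepsilon_u/\varepsilon_v)^{\rho}=u(\varepsilon_u/\varepsilon_v)^{\rho}$. This yields exactly $\varepsilon_v/\varepsilon_u\le (u/v)^{1/\rho}$, which is the paper's inequality $u-v\ge u\bigl(1-(1+r)^{-\rho}\bigr)$ rewritten.

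The same sign error appears in your bullet claiming that $u<B\le\psi(1)$ forces $\varepsilon_u<1$; in fact $u<\psi(1)$ is compatible with $\varepsilon_u=1$. That detour is unnecessary. In the main case $\varepsilon_u<1$ already holds, so $u=\psi(\varepsilon_u)\ge\psi(1)\varepsilon_u^{-\rho}\ge B\varepsilon_u^{-\rho}$. This directly gives $u\ge B$, and also $\varepsilon_u\ge (B/2)^{1/\rho}$ when $u<2$.

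With these corrections, both of your cases go through as you describe. Your verification of $A=1/q$, $\rho=q+1$, $B=q$ for the specific form matches the paper's.
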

\begin{proof}
The monotonicity $\varepsilon_{\min}(u)\le\varepsilon_{\min}(v)$ follows from the properties of $\varepsilon_{\min}$ in \cref{lem:concave-conjugate-prop}.
It remains to prove the quantitative bound.
We first prove the general bound under \eqref{eq:marginal-power-value}--\eqref{eq:marginal-power-endpoint}, and then verify these conditions for the specific form.
Let
\[
  a\coloneqq \varepsilon_{\min}(u),
  \qquad
  b\coloneqq \varepsilon_{\min}(v),
  \qquad
  \Delta\coloneqq u-v,
  \qquad
  \text{and}
  \qquad
  r\coloneqq \frac{b-a}{a}.
\]
By the monotonicity of $\varepsilon_{\min}(\cdot)$, we have $0<a\le b\le1$.
If $a=1$, then $b=1$ and the claim is trivial.
Hence, assume $a<1$.
For a scalar $w>0$, write
$g_w(\varepsilon)\coloneqq w\varepsilon+\varphi(\varepsilon)$
for the objective minimized in the definition of $\varepsilon_{\min}(w)$.
Since $a=\varepsilon_{\min}(u)$ is an interior minimizer of $g_u$ and
$g_u'(\varepsilon)=u-\psi(\varepsilon)$, we have $u=\psi(a)$.
If $b<1$, the same argument gives $v=\psi(b)$.
If $b=1$, the one-sided optimality condition for minimizing $g_v$ over
$(0,1]$ gives $g_v'(1)\le0$, equivalently $v\le\psi(1)=\psi(b)$.
Thus, in both cases, it holds that
\begin{equation}\label{eq:mpower-delta-psi}
  \Delta
  \ge
  \psi(a)-\psi(b)
  =
  u - \psi(b).
\end{equation}
By \eqref{eq:marginal-power-scale}, we have
\[
  \psi(b)
  \le
  \psi(a)\prn*{\frac{a}{b}}^\rho
  =
  u(1+r)^{-\rho}.
\]
Combining this with \eqref{eq:mpower-delta-psi} yields
\begin{equation}\label{eq:mpower-delta-lower}
  \Delta
  \ge
  u\prn*{1-(1+r)^{-\rho}}.
\end{equation}

We split the argument into two cases.
First suppose $r\le1$.
Since $s\mapsto1-(1+s)^{-\rho}$ is concave, non-negative, and vanishes at $s=0$, we have
\[
  1-(1+r)^{-\rho}
  \ge
  \prn*{1-2^{-\rho}}r
  \qquad (0\le r\le1).
\]
Therefore, \eqref{eq:mpower-delta-lower} gives $\Delta\ge(1-2^{-\rho})ur$.
Using \eqref{eq:marginal-power-value}, we obtain
\[
  \varphi(a)r
  \le
  A a\psi(a)r
  =
  A aur
  \le
  \frac{A}{1-2^{-\rho}}a\Delta.
\]
This proves the desired bound when $r\le1$.

Next suppose that $r>1$ holds.
Then \eqref{eq:mpower-delta-lower} gives
\[
  \Delta
  \ge
  \prn*{1-2^{-\rho}}u.
\]
Since $\Delta\le1$, it follows that $u\le(1-2^{-\rho})^{-1}$.
On the other hand, applying \eqref{eq:marginal-power-endpoint} and \eqref{eq:marginal-power-scale} with $\varepsilon=a$ and $\eta=1$ gives
\[
  B
  \le
  \psi(1)
  \le
  \psi(a)a^\rho
  =
  ua^\rho.
\]
Hence, whenever this case occurs, $a$ is bounded below by a positive constant depending only on $\rho$ and $B$; specifically, we have
\[
  a\ge  a_{\rho,B}
  \coloneqq
  \min\set*{1,\prn*{B(1-2^{-\rho})}^{1/\rho}}.
\]
Since $b\le1$, \eqref{eq:marginal-power-value} and the preceding lower bound on $\Delta$ give
\[
  \varphi(a)r
  =
  \varphi(a)\frac{b-a}{a}
  \le
  \frac{\varphi(a)}{a}
  \le
  A\psi(a)
  =
  Au
  \le
  \frac{A}{1-2^{-\rho}}\Delta
  \le
  \frac{A}{(1-2^{-\rho}) a_{\rho,B}}a\Delta.
\]
This proves the desired bound when $r>1$.

It remains to verify the three primitive conditions for $\varphi(\varepsilon)=C_1\varepsilon^{-q}\log(\mathrm e+C_2/\varepsilon)$.
Set
\[
  L(\varepsilon)=\log(\mathrm e+C_2/\varepsilon),
  \qquad
  R(\varepsilon)=\frac{C_2}{\mathrm e\varepsilon+C_2},
  \qquad
  \text{and}
  \qquad
  M(\varepsilon)=qL(\varepsilon)+R(\varepsilon).
\]
Then, we have
\[
  \varphi(\varepsilon)
  =
  C_1\varepsilon^{-q}L(\varepsilon)
  \qquad\text{and}\qquad
  \psi(\varepsilon)
  =
  -\varphi'(\varepsilon)
  =
  C_1\varepsilon^{-q-1}M(\varepsilon).
\]
Since $R(\varepsilon)\ge0$, we have $M(\varepsilon)\ge qL(\varepsilon)$.
Thus, from $L(\varepsilon)\ge1$, it follows that
\[
  \frac{\varphi(\varepsilon)}{\varepsilon\psi(\varepsilon)}
  =
  \frac{L(\varepsilon)}{M(\varepsilon)}
  \le
  \frac1q.
\]
Hence \eqref{eq:marginal-power-value} holds with $A=1/q$.
Moreover, both $L$ and $R$ are non-increasing on $(0,1]$, and thus $M$ is non-increasing.
Therefore, if $0<\varepsilon\le\eta\le1$, then we have
\[
  \psi(\eta)
  =
  C_1\eta^{-q-1}M(\eta)
  \le
  C_1\eta^{-q-1}M(\varepsilon)
  =
  \psi(\varepsilon)\prn*{\frac{\varepsilon}{\eta}}^{q+1}.
\]
Thus, \eqref{eq:marginal-power-scale} holds with $\rho=q+1$.
Finally, it holds that
\[
  \psi(1)
  =
  C_1\prn*{qL(1)+R(1)}
  \ge
  q,
\]
because $C_1\ge1$, $L(1)\ge1$, and $R(1)\ge0$.
Thus \eqref{eq:marginal-power-endpoint} holds with $B=q$.
Plugging $A=1/q$, $\rho=q+1$, and $B=q$ into the general bound completes the proof.
\end{proof}

A convenient sufficient template for checking \cref{lem:adaptive-rule-calculus} is the following.
Suppose
\[
  \varphi(\varepsilon)=C\varepsilon^{-q}L(\varepsilon),
  \qquad
  C\ge1,
  \qquad
  \text{and}
  \qquad
  q>0,
\]
where $L\ge1$ is continuously differentiable and non-increasing on $(0,1]$.
If
\[
  M(\varepsilon)\coloneqq qL(\varepsilon)-\varepsilon L'(\varepsilon)
\]
is non-increasing, then $\psi(\varepsilon)=C\varepsilon^{-q-1}M(\varepsilon)$ satisfies \eqref{eq:marginal-power-value} with $A=1/q$ and \eqref{eq:marginal-power-scale} with $\rho=q+1$.
Moreover, since $M(1)\ge qL(1)\ge q$, condition \eqref{eq:marginal-power-endpoint} holds with $B=q$.
Thus the proof above applies without using any special property of the logarithm beyond these monotonicity checks.

\section{Offline algorithm for submodular function minimization}\label[appendix]{sec:submod-avg-sens-proof}
We detail the offline algorithm for submodular function minimization and prove the average sensitivity bound stated in \cref{prop:submod-avg-sens}.
Below, we work under value-oracle access to each splitting submodular function as in \citet{Kenneth2024-pf}, while we assume the exact-real model to handle perturbation by uniform noise introduced later.
Throughout the anchored-lift discussion,~$V$ denotes the original ground set and $n=|V|$.
In the generic sparsification analysis, we write the vertex set as $U$.
In the anchored setting, $U=V^+$ and hence $|U|=n+1$, which does not change the asymptotic bounds after translating back to the original ground-set size.

\subsection{Anchored lift for non-normalized submodular losses}\label[appendix]{subsec:anchored-lift}
The cut-sparsifier theorem of \citet{Kenneth2024-pf} is stated for splitting functions normalized at the empty set.  The online submodular losses in \cref{sec:submodular} need not satisfy $\ell_t(\emptyset)=0$.  We therefore use the following anchored representation.

Let $V$ be the original ground set and let $r\notin V$ be a dummy vertex.
For this anchored lift, set $U\coloneqq V^+=V\cup\set{r}$.
For a non-negative submodular function $\ell\colon2^V\to\R_{\ge0}$, define $g_\ell\colon2^{U}\to\R_{\ge0}$ by\looseness=-1
\[
  g_\ell(A)
  =
  \begin{cases}
    0, & A=\emptyset,\\
    \ell(A\cap V), & A\neq\emptyset.
  \end{cases}
\]
A value query to $g_\ell$ is implemented by either returning zero, when the queried set is empty, or by making one value query to the original oracle for $\ell$ on $A\cap V$.  Thus the lift is compatible with the same value-oracle model as the original losses.

\begin{lemma}\label[lemma]{lem:anchored-lift}
  The function $g_\ell$ is non-negative and submodular on $U$, and it satisfies $g_\ell(\emptyset)=0$.  Moreover, for every $S\subseteq V$, we have $g_\ell(S\cup\set{r})=\ell(S)$.
\end{lemma}

\begin{proof}
  Non-negativity and the identity on anchored sets are immediate from the definition.  It remains to check submodularity.  Take arbitrary $A,B\subseteq U$.  If $A=\emptyset$ or $B=\emptyset$, the submodularity inequality for $g_\ell$ holds with equality.  Assume next that $A,B\neq\emptyset$ and $A\cap B\neq\emptyset$.  Then, we have
  \[
    g_\ell(A)+g_\ell(B)
    =
    \ell(A\cap V)+\ell(B\cap V)
    \ge
    \ell((A\cup B)\cap V)+\ell((A\cap B)\cap V)
    =
    g_\ell(A\cup B)+g_\ell(A\cap B),
  \]
  where the inequality is the submodularity of $\ell$.  Finally, suppose that $A,B\neq\emptyset$ and $A\cap B=\emptyset$.  Since $g_\ell(A\cap B)=0$ and $\ell(\emptyset)\ge0$, submodularity of $\ell$ gives
  \[
    g_\ell(A)+g_\ell(B)
    =
    \ell(A\cap V)+\ell(B\cap V)
    \ge
    \ell((A\cup B)\cap V)+\ell(\emptyset)
    \ge
    g_\ell(A\cup B)+g_\ell(A\cap B).
  \]
  Thus, $g_\ell$ is submodular.
\end{proof}

For a multiset $X$ of submodular loss functions, let $H_X^+=(U,E_X,\set{g_e}_{e\in E_X})$ be the lifted hypergraph obtained by creating one hyperedge $e=U$ for each $\ell\in X$ and setting $g_e=g_\ell$.  The original objective is exactly represented on anchored cuts:
\[
  \sum_{\ell\in X}\ell(S)
  =
  \mathrm{cut}_{H_X^+}(S\cup\set{r})
  \qquad
  \text{for every $S\subseteq V$.}
\]
After constructing a sparsifier $\tilde H_X^+$, we do not minimize over all cuts of $U=V^+$.  Instead, we define an ordinary set function on the original ground set by
\[
  F_{\tilde H_X^+}(S)
  \coloneqq
  \mathrm{cut}_{\tilde H_X^+}(S\cup\set{r})
  =
  \sum_{e\in \tilde E_X} w_e\,g_e((S\cup\set{r})\cap e)
  \qquad \text{for every\; $S\subseteq V$},
\]
where $\tilde E_X$ is the sampled hyperedge multiset and $w_e\ge0$ denotes the sampled reweighting factor.  This function is submodular on $V$: for each sampled hyperedge $e$, the map $S\mapsto g_e((S\cup\set{r})\cap e)$ preserves unions and intersections in the sense that the corresponding arguments to $g_e$ for $S\cup T$ and $S\cap T$ are the union and intersection of the arguments for $S$ and $T$, and hence submodularity of $g_e$ applies.  A non-negative weighted sum of these functions is again submodular.

Moreover, $F_{\tilde H_X^+}$ has an explicit value oracle from the original loss oracles.  In the lifted instance every sampled hyperedge corresponds to some original loss $\ell$, and $e=U=V^+$, hence $g_e((S\cup\set{r})\cap e)=\ell(S)$.  Therefore, a query to $F_{\tilde H_X^+}(S)$ is answered by querying each sampled loss oracle at the same set $S$ and summing the returned values with the sampled weights $w_e$.  Thus, the post-processing step is exactly standard value-oracle submodular function minimization on the ground set $V$, with a fixed deterministic tie-breaking rule.
Note that, although the ambient lifted hypergraph has the empty cut $A=\emptyset$, the post-processing step minimizes only over anchored cuts $A=S\cup\{r\}$ over $S \subseteq V$.
Thus, $A=\emptyset$ is not feasible, and the original empty action corresponds to $\{r\}$, not to $\emptyset$.

From the next subsection onward, $H=(U,E,\set{g_e}_{e\in E})$ denotes a generic submodular hypergraph.
When these statements are applied to the anchored construction above, we take $U=V^+$ and restrict the final minimization to anchored cuts $S\cup\set{r}$ with $S\subseteq V$.

\subsection{Sparsification via importance sampling}
We describe the importance-sampling-based algorithm of \citet{Kenneth2024-pf} in detail.
Recall that a submodular hypergraph is given by $H=(U,E,\set{g_e}_{e\in E})$ with vertex set $U$, hyperedges~$E$, and non-negative submodular splitting functions $g_e\colon 2^e\to\R_{\ge0}$ satisfying $g_e(\emptyset)=0$ for each~$e\in E$.
Define
\[
  g^e_{a\to b} \coloneqq \min_{{A\subseteq U:\;a\in A,\;b\notin A}} g_e(A\cap e)
  \qquad
  \text{for distinct $a,b\in U$.}
\]
In what follows, all sums over ordered pairs are taken over distinct pairs.
Following \citet{Kenneth2024-pf}, define
\[
  \rho_e(H) \coloneqq \sum_{\substack{(a,b)\in U\times U\\ a\ne b}} \frac{g^e_{a\to b}}{\sum_{f\in E} g^f_{a\to b}},
  \quad
  \rho'_e(H) \coloneqq \frac{g_e(e)}{\sum_{f\in E} g_f(f)}
  \quad
  \text{and}\quad
  s_e(H) \coloneqq \rho_e(H)+\rho'_e(H),
\]
with the convention that zero denominators yield zero ratios.
Note that we have
\begin{equation}
  \label{eq:rho-sum-bounds}
  \sum_{e\in E} \rho_e(H) \le |U|^2
  \qquad
  \text{and}
  \qquad
  \sum_{e\in E} \rho'_e(H) \le 1.
\end{equation}

Let $M > 0$, which is specified later.
The sparsification algorithm of \citet{Kenneth2024-pf} samples each $e\in E$ independently with probability
\[
  p_e(H)=\min\set*{1,\,M s_e(H)},
\]
sets $g'_e \equiv g_e / p_e(H)$ for each selected hyperedge $e \in E$, and returns the resulting sparsifier.
\Citet{Kenneth2024-pf} show that, for $\delta \in (0,1)$, setting $M = \Theta(\varepsilon^{-2}(|U| + \log(1/\delta)))$ yields a $\prn*{1\pm\varepsilon}$-sparsifier $H' = (U, E', \set{g'_e}_{e\in E'})$ with probability at least $1-\delta$, and the sparsifier size is $\E[\abs{E'}]=\sum_e p_e(H)\le M(|U|^2+1) = O(\varepsilon^{-2}|U|^2(|U| + \log(1/\delta)))$.
For our purposes, we set $\delta = 1/T$, which adds $O(t/T)$ to the expected loss evaluated on $X_t$ in the failure event and hence satisfies \cref{assump:approx-avesen}.
Once the sparsifier $H'$ is constructed with internal accuracy
$\varepsilon'=\Theta(\varepsilon)$, as specified in
\cref{rem:accuracy-translation} below, we minimize the relevant sparse cut objective
to obtain a $(1+\varepsilon)$-approximate solution;
in the anchored setting this applies to $S\mapsto\mathrm{cut}_{H'}(S\cup\set{r})$ over the original actions $S\subseteq V$.\looseness=-1

\begin{remark}[Accuracy translation]\label{rem:accuracy-translation}
  Suppose that an approximating objective function $\tilde F$ is a two-sided
  $(1\pm\varepsilon')$ approximation to an original objective function $F$.
  Then, any minimizer of $\tilde F$ is a
  $(1+\varepsilon')/(1-\varepsilon')$-approximate minimizer of $F$.
  Therefore, for target accuracy $\varepsilon\in(0,1]$, it is sufficient to
  run the sparsification or sampling routine with internal accuracy
  $\varepsilon'=\varepsilon/3$, since we have
  \[
    \frac{1+\varepsilon'}{1-\varepsilon'}
    =
    \frac{1+\varepsilon/3}{1-\varepsilon/3}
    \le 1+\varepsilon
    \qquad\text{for all\;} 0<\varepsilon\le1.
  \]
  In the sequel, we write $\varepsilon$ for the internal accuracy parameter
  for notational simplicity; this changes the stated sensitivity bounds only
  by absolute constant factors.
\end{remark}

\textbf{Computational complexity.}
The sparsifier construction of \citet{Kenneth2024-pf} is constructive and runs in polynomial time under their standard value-oracle and finite-precision assumptions.
We use this construction as a subroutine; the additional continuous perturbations introduced below are interpreted under the exact-real oracle model stated in the preliminaries.
The resulting sparsifier has $\tilde O(\varepsilon^{-2}|U|^3)$ hyperedges, and the remaining optimization task is a standard submodular function minimization problem in the value-oracle model \citep{Schrijver2000-tj,Iwata2001-ao,Jiang2024-sb}.  In the anchored reduction, we have $|U|=n+1$, and \cref{subsec:anchored-lift} shows that the optimized function is an ordinary submodular function on the original ground set and that its value oracle is implemented by summing value-oracle queries to the sampled losses.
Hence, for every fixed $\varepsilon$, the offline $(1+\varepsilon)$-approximation algorithm runs in polynomial time under these standard assumptions.

For later use, we present basic lemmas on this importance-sampling-based sparsification.
\begin{lemma}\label[lemma]{lem:clip-lip}
  For $x,y\ge 0$ and $M>0$, $\abs{\min\set*{1,Mx}-\min\set*{1,My}}\le M\abs{x-y}$ holds.
\end{lemma}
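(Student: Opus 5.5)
The function $s\mapsto\min\{1,s\}$ is $1$-Lipschitz on $\R_{\ge0}$, so I will argue by direct case analysis on which branch of the minimum is active for each argument. Throughout, let $a=Mx$ and $b=My$, both non-negative because $M>0$ and $x,y\ge0$. Since the claim is symmetric in $x$ and $y$, I assume without loss of generality that $a\le b$.

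I will use three cases, distinguished by where $a$ and $b$ sit relative to $1$.

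\emph{Case 1: $b\le 1$.} Both minima equal their second argument. The difference is then exactly $\abs{a-b}=M\abs{x-y}$.

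\emph{Case 2: $a\ge1$.} Both minima equal $1$, so the difference is $0\le M\abs{x-y}$.

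\emph{Case 3: $a<1<b$.} The difference is $1-a$. Because $1<b$, this is at most $b-a=M\abs{x-y}$.

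There is no real obstacle. The only points to check are that the case split is exhaustive and that the symmetry reduction is valid.
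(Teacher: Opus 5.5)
Your case analysis is correct, and the three cases do cover every possibility once $a\le b$. It is essentially the paper's argument: the paper notes that $x\mapsto\min\{1,Mx\}$ is piecewise linear with slopes in $[0,M]$ and hence $M$-Lipschitz, and your cases check that Lipschitz bound directly.
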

\begin{proof}
  The map $x\mapsto \min\set*{1,Mx}$ is piecewise linear with slope in $[0,M]$, hence $M$-Lipschitz.
\end{proof}

\begin{lemma}\label[lemma]{lem:l1change}
  For any $e\in E$, we have
  $
    \sum_{f\in E,\;f \neq e}\abs{ s_f(H)-s_f(H-e) } \le s_{e}(H)
  $.
\end{lemma}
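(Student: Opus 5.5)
The plan is to handle the two components of $s_f=\rho_f+\rho'_f$ separately, one ordered pair $(a,b)$ at a time. For $f\neq e$, the triangle inequality gives
\[
\abs{s_f(H)-s_f(H-e)}\le\abs{\rho_f(H)-\rho_f(H-e)}+\abs{\rho'_f(H)-\rho'_f(H-e)}.
\]
So it suffices to show that $\sum_{f\neq e}\abs{\rho_f(H)-\rho_f(H-e)}\le\rho_e(H)$ and that $\sum_{f\neq e}\abs{\rho'_f(H)-\rho'_f(H-e)}\le\rho'_e(H)$. Both follow from the same calculation, and I expect it to hold with equality.

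Fix a distinct ordered pair $(a,b)$. Write $D=\sum_{f\in E}g^f_{a\to b}$ and $D'=D-g^e_{a\to b}=\sum_{f\neq e}g^f_{a\to b}$, which is the denominator in $H-e$. All $g^f_{a\to b}\ge0$, so $D'\le D$. Hence for each $f\neq e$ the per-pair term $g^f_{a\to b}/D'-g^f_{a\to b}/D$ is nonnegative, and removing $e$ can only increase each surviving per-pair ratio. Summing these nonnegative differences over $f\neq e$ gives, when $D'>0$,
\[
D'\prn*{\frac1{D'}-\frac1D}=\frac{D-D'}{D}=\frac{g^e_{a\to b}}{D},
\]
which is exactly the contribution of the pair $(a,b)$ to $\rho_e(H)$.

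The degenerate cases use the zero-denominator convention. If $D'=0$, then $g^f_{a\to b}=0$ for every $f\neq e$, so all surviving terms vanish in both $H$ and $H-e$, and the difference is $0\le g^e_{a\to b}/D$ (read as $0$ if $D=0$). If $D=0$, everything vanishes.

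Since every per-pair difference has the same sign, summing over pairs gives
\[
\sum_{f\neq e}\abs{\rho_f(H)-\rho_f(H-e)}=\sum_{f\neq e}\prn*{\rho_f(H-e)-\rho_f(H)}\le\rho_e(H).
\]
The identical argument, with $D=\sum_f g_f(f)$ and $D'=D-g_e(e)$, gives the bound $\rho'_e(H)$ for the $\rho'$ part. Adding the two bounds yields the claim.

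The only step needing care is the sign argument. Because each per-pair difference is nonnegative, absolute values can be dropped before summing over $f$, which is what makes the telescoping identity go through. The zero-denominator conventions must also be checked so that no spurious terms appear.
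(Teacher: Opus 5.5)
Your proof is correct and follows the paper's argument. You split $s_f$ into the $\rho$ and $\rho'$ parts, fix an ordered pair $(a,b)$, compute $\sum_{f\neq e}\bigl(g^f_{a\to b}/D' - g^f_{a\to b}/D\bigr) = g^e_{a\to b}/D$, and handle the same zero-denominator edge cases. The only cosmetic difference is that you use the common sign of the per-pair differences to drop the absolute values, whereas the paper applies $|\sum_i x_i|\le\sum_i|x_i|$ when summing over pairs; your expected equality also fails for pairs with $D'=0<D$, though your displayed inequality accounts for this.
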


\begin{proof}
  First, we have
  \[
    \sum_{f\in E,\;f \neq e} \abs{ s_f(H)-s_f(H-e) } \le \sum_{f\in E,\;f \neq e} \abs{ \rho_f(H)-\rho_f(H-e) } + \sum_{f\in E,\;f \neq e} \abs{ \rho'_f(H)-\rho'_f(H-e) }.
  \]
  We bound the two parts separately.
  Fix distinct $a,b \in U$ and $e \in E$ and abbreviate $D=\sum_{f \in E} g^f_{a\to b}$ and $\alpha=g^{e}_{a\to b}$.
  If $D = 0$, $\rho_f \equiv 0$ and the $\rho$ part is zero; if $D = \alpha$, $\rho_f(H-e) = \rho_f(H)$ for all $f\neq e$ and thus the $\rho$ part is again zero.  Hence, we may assume that $D > 0$ and $\alpha < D$.
  It holds that
  \[
    \sum_{f\in E,\;f \neq e} \abs*{\frac{g^f_{a\to b}}{D-\alpha} - \frac{g^f_{a\to b}}{D}}
    =
    \sum_{f\in E,\;f \neq e} \prn*{\frac{g^f_{a\to b}}{D-\alpha} - \frac{g^f_{a\to b}}{D}}
    =
    1 - \frac{D - \alpha}{D}
    =
    \frac{\alpha}{D}.
  \]
  Summing over distinct ordered pairs $(a,b)$ and using $|\sum_i x_i| \le \sum_i |x_i|$ yield the claim for the $\rho$ part.
  The same argument with $D=\sum_{f\in E} g_f(f)$ and $\alpha=g_{e}(e)$ bounds the $\rho'$ part.
  Consequently, the desired inequality follows by adding the two parts.
\end{proof}

\subsection{Weight perturbation}\label[appendix]{subsec:weight-perturbation}
We introduce a small random perturbation to the weights of sampled hyperedges, thereby controlling the average sensitivity of the full output, including both sampled hyperedges and assigned weights.
This idea is already studied in \citet{Yoshida2022-pb} and \citet{Dong2023-yh}, but we restate it here for completeness.

Recall that we sample each hyperedge $e\in E$ independently with probability
\[
  p_e(H) = \min\set*{1,\,M s_e(H)},
\]
and, if selected, rescale $g_e$ by the weight $1/p_e(H)$.
By setting $M = \Theta(\varepsilon^{-2}(|U| + \log T))$, with probability at least $1 - 1/T$, this construction yields a $(1\pm \varepsilon/2)$-cut sparsifier of $H$, where the division by two is for later convenience.

\textbf{Perturbation rule.\;}
Fix $\varepsilon\in(0,1]$.
Whenever a hyperedge $e$ is sampled, instead of using the deterministic weight $1/p_e(H)$,
we draw an independent random variable
\[
  \widetilde p_e \sim \mathrm{Unif}\prn*{[p_e(H),(1+\varepsilon/2)p_e(H)]}
\]
and assign the weight $1/\widetilde p_e$ to $g_e$.
If $e$ is not sampled, its weight is set to $0$.
We denote by $\tilde H$ the resulting randomly reweighted subgraph, and by $H'$ the original
(unperturbed) sparsifier that uses weights $1/p_e(H)$.
Note that $\widetilde p_e$ is used solely as a weight parameter and is not required to lie in $[0,1]$.

\textbf{Preservation of approximation guarantee.\;}
By construction, whenever $e$ is selected, we have
\[
  \frac{1}{(1+\varepsilon/2)p_e(H)} \le \frac{1}{\widetilde p_e} \le \frac{1}{p_e(H)}.
\]
Therefore, for every cut $A\subseteq U$, we have
\[
  \frac{1}{1+\varepsilon/2}\,\mathrm{cut}_{H'}(A)
  \le
  \mathrm{cut}_{\tilde H}(A)
  \le
  \mathrm{cut}_{H'}(A).
\]
Combining this inequality with the fact that $H'$ is a $(1\pm \varepsilon/2)$-cut sparsifier
of $H$, we obtain
\[
  \frac{1-\varepsilon/2}{1+\varepsilon/2}\,\mathrm{cut}_{H}(A)
  \le
  \mathrm{cut}_{\tilde H}(A)
  \le
  (1+\varepsilon/2)\,\mathrm{cut}_{H}(A)
  \qquad
  \text{for all $A\subseteq U$.}
\]
Since $\frac{1-\varepsilon/2}{1+\varepsilon/2} \ge 1-\varepsilon$ for $\varepsilon\in(0,1]$,
the perturbed sparsifier $\tilde H$ is a $(1\pm \varepsilon)$-cut sparsifier of $H$.

\subsection{Bounding average sensitivity}
We show that, given a submodular hypergraph $H = (U, E, \set{g_e}_{e \in E})$, the law of the output of the $(1\pm\varepsilon)$-sparsification algorithm, equipped with the above weight perturbation, satisfies
  \[
    \frac{1}{|E|} \sum_{e \in E} \mathrm{TV}\prn*{\mathcal{L}(H), \mathcal{L}(H - e)}
    =
    O\prn*{ \frac{\varepsilon^{-3}(|U|^{3} + |U|^2\log T)}{|E|}},
  \]
thereby proving \cref{prop:submod-avg-sens}.

\textbf{Decomposition into independent coordinates.\;}
Let $Z=(Z_f)_{f\in E}\in\{0,1\}^E$ denote the random subset indicator vector, where $Z_f=1$ iff $f \in E$ is sampled, and let
$W=(W_f)_{f\in E}\in\R_{\ge0}^E$ denote the resulting random weight vector (with $W_f=0$ when $Z_f=0$).
Note that specifying these random variables determines the output sparsifier.
Fix any $e\in E$, and let $P$ and $Q$ be the joint distributions of $(Z,W)$ produced from $H$ and $H-e$, respectively.
We view both $P$ and $Q$ as probability measures on the common space $\{0,1\}^E\times\mathbb{R}_{\ge 0}^E$ by identifying the $e$-th coordinate of the sampling probability vector under $H-e$ with $p_e(H-e)=0$.
By the procedure of the above sparsification algorithm, for each $f\in E$, the random pair $(Z_f,W_f)$ is generated independently of all the other coordinates, both under $P$ and under $Q$.
Therefore, $P$ and $Q$ admit the product decompositions
\[
P=\bigotimes_{f\in E} P^{(f)}
\qquad
\text{and}
\qquad
Q=\bigotimes_{f\in E} Q^{(f)},
\]
where $P^{(f)}$ and $Q^{(f)}$ are probability measures on $\{0,1\}\times\R_{\ge0}$ corresponding to hyperedge $f$.
We let $Q^{(e)}=\delta_{(0,0)}$, where $\delta_{(0,0)}$ is the Dirac measure at $(0,0)$.
By applying \cref{lem:product-tv-general}, we have
\begin{equation}\label{eq:tv-product-first}
  \mathrm{TV}(P,Q)
  \le
  \sum_{f\in E}
  \mathrm{TV}\!\left(P^{(f)},Q^{(f)}\right).
\end{equation}
Below, we analyze the total variation distance for each coordinate $f$.

\textbf{Decomposition via conditioning.\;}
For each $f\in E$, the distribution $P^{(f)}$ is a two-point mixture of the form
\[
P^{(f)}
=
(1-p_f(H))\,\delta_{(0,0)}
+
p_f(H)\,P_{(Z_f,W_f)\mid Z_f=1},
\]
and similarly
\[
Q^{(f)}
=
(1-p_f(H-e))\,\delta_{(0,0)}
+
p_f(H-e)\,Q_{(Z_f,W_f)\mid Z_f=1},
\]
where, whenever the relevant sampling probability is positive, the conditional law given $Z_f=1$ is specified by $W_f=1/\widetilde p_f$ with
\begin{align}
  &\widetilde p_f \sim \mathrm{Unif}\prn*{[p_f(H),(1+\varepsilon/2)p_f(H)]}
  &&\text{under $P$, and}
  \\
  &\widetilde p_f \sim \mathrm{Unif}\prn*{[p_f(H-e),(1+\varepsilon/2)p_f(H-e)]}
  &&\text{under $Q$.}
\end{align}
If either $p_f(H)=0$ or $p_f(H-e)=0$, the coordinate whose sampling probability is zero is deterministically equal to $(0,0)$, and hence
\[
  \mathrm{TV}\!\left(P^{(f)},Q^{(f)}\right)
  =
  \abs{p_f(H)-p_f(H-e)}.
\]
It remains to consider the case $p_f(H)>0$ and $p_f(H-e)>0$.  Applying \cref{lem:tv-decomp,lem:uniform-tv} to this mixture yields
\begin{align}
  &\mathrm{TV}\!\left(P^{(f)},Q^{(f)}\right)
  \\
  \le{}&
  \abs{p_f(H)-p_f(H-e)}
  +
  \min\{p_f(H),p_f(H-e)\}\,
  \mathrm{TV}\!\left(
    P_{W_f\mid Z_f=1},
    Q_{W_f\mid Z_f=1}
  \right)
  \\
  \le{}&
  \abs{p_f(H)-p_f(H-e)}
  +
  \min\{p_f(H),p_f(H-e)\}\,
  \frac{1+\varepsilon/2}{\varepsilon/2}\,
  \abs*{1-\frac{p_f(H-e)}{p_f(H)}}.
  \\
  \le{}&
  \abs{p_f(H)-p_f(H-e)}
  +
  \frac{2+\varepsilon}{\varepsilon}\,
  \abs{p_f(H)-p_f(H-e)}.
  \\
  \le{}&
  \frac{4}{\varepsilon}\,
  \abs{p_f(H)-p_f(H-e)}.
\end{align}
Therefore, for every $f\in E$, we have
$
  \mathrm{TV}\!\left(P^{(f)},Q^{(f)}\right)
  \le
  \frac{4}{\varepsilon}
  \abs{p_f(H)-p_f(H-e)}
$.

\textbf{Bounding total variation over all hyperedges.\;}
Combining the above bound with \eqref{eq:tv-product-first} gives
\begin{equation}\label{eq:tv-sum-bound}
  \mathrm{TV}(P,Q)
  \le
  \frac{4}{\varepsilon}
  \sum_{f\in E}
  \abs{p_f(H)-p_f(H-e)}
  =
  \frac{4}{\varepsilon}
  \prn*{
    \sum_{f \in E,\, f\neq e}
    \abs{p_f(H)-p_f(H-e)}
    +
    p_e(H)
  }.
\end{equation}
By the definition of $p_f(H) = \min\set*{1,\,M s_f(H)}$ and \cref{lem:clip-lip,lem:l1change}, we have
\begin{equation}
  \sum_{f \in E,\, f\neq e}\!\!
    \abs{p_f(H)-p_f(H-e)}
    +
    p_e(H)
  \le
  M
  \prn*{
    \sum_{f \in E,\, f\neq e}\!\!
    \abs{ s_f(H)-s_f(H-e) }
    \!+\!
    s_e(H)
  }
  \!\le
  2 M s_e(H).
\end{equation}
Therefore, taking the average over $e\in E$ and using \eqref{eq:rho-sum-bounds} and $M = \Theta(\varepsilon^{-2}(|U| + \log T))$, we obtain
\begin{equation}
  \frac{1}{|E|} \sum_{e \in E} \mathrm{TV}\prn*{\mathcal{L}(H), \mathcal{L}(H\!-\!e)}
  \!\le\!
  \frac{8 M}{\varepsilon |E|} \sum_{e \in E} s_e(H)
  \!\le\!
  \frac{8 M}{\varepsilon |E|} (|U|^2 + 1)
  \!=\!
  O\prn*{ \frac{\varepsilon^{-3}(|U|^{3}\!+\!|U|^2\log T)}{|E|}}.
\end{equation}
This completes the proof of \cref{prop:submod-avg-sens}, and hence $\varphi(\varepsilon)=C_{\rm sub}\varepsilon^{-3}(n^3+n^2\log T)$.

\subsection{Checking Assumption~\ref{assump:adaptive-pc-stability}}\label[appendix]{subsec:submod-adaptive-pc-proof}
We next verify that the above sparsifier construction, equipped with the weight perturbation, satisfies \cref{assump:adaptive-pc-stability} with $C_{\rm pc}=O(1)$.
\begin{proposition}\label[proposition]{prop:submod-same-hypergraph-pc}
Let $H=(U,E,\set{g_e}_{e\in E})$ be a submodular hypergraph, and let $\mathcal L_\varepsilon(H)$ be the law of the perturbed sparsifier above with parameter $\varepsilon\in(0,1]$.  For $0<\varepsilon\le\eta\le1$, it holds that
\[
\mathrm{TV}\prn*{\mathcal L_\varepsilon(H),\mathcal L_\eta(H)}
\le
O\prn*{
(|U|^3+|U|^2\log T)\varepsilon^{-3}\frac{\eta-\varepsilon}{\varepsilon}
}.
\]
Consequently, for the anchored lifted instance in \cref{subsec:anchored-lift} built from the original $n$-element ground set, the sparsifier-based offline algorithm family satisfies \cref{assump:adaptive-pc-stability} with $C_{\rm pc}=O(1)$.
\end{proposition}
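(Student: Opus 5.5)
We compare the two perturbed sparsifiers on the same hypergraph coordinate by coordinate, following the structure of the average-sensitivity proof above. The only difference is that now both the sampling intensity $M$ and the perturbation width depend on the accuracy parameter. Write $M(\varepsilon)=c\,\varepsilon^{-2}(|U|+\log T)$ and $p_f^{\varepsilon}\coloneqq\min\set{1,M(\varepsilon)s_f(H)}$. Here the scores $s_f(H)$ do not depend on the accuracy parameter, since the input is the same. The output $(Z,W)$ is a product over $f\in E$ under both $\mathcal L_\varepsilon(H)$ and $\mathcal L_\eta(H)$, so \cref{lem:product-tv-general} reduces the claim to bounding $\sum_{f}\mathrm{TV}(P^{(f)}_\varepsilon,P^{(f)}_\eta)$.

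\textbf{Per-coordinate bound.} For each $f$ with both probabilities positive, I would apply \cref{lem:tv-decomp} to the two-point mixture. This bounds $\mathrm{TV}(P^{(f)}_\varepsilon,P^{(f)}_\eta)$ by
\[
|p_f^\varepsilon-p_f^\eta|+\min\{p_f^\varepsilon,p_f^\eta\}\,\mathrm{TV}\bigl(\mathrm{Unif}([p_f^\varepsilon,(1+\varepsilon/2)p_f^\varepsilon]),\mathrm{Unif}([p_f^\eta,(1+\eta/2)p_f^\eta])\bigr).
\]
The uniform term is handled by \cref{lem:uniform-tv-changing-width} with $B=p_f^\varepsilon$, $B'=p_f^\eta$ and widths $\varepsilon/2\le\eta/2$. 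It gives
\[
\min\{p_f^\varepsilon,p_f^\eta\}\cdot O\Bigl(\tfrac{1}{\varepsilon}\bigl|1-\tfrac{p_f^\eta}{p_f^\varepsilon}\bigr|+\tfrac{\eta-\varepsilon}{\varepsilon}\Bigr)\le O\Bigl(\tfrac1\varepsilon|p_f^\varepsilon-p_f^\eta|+p_f^\varepsilon\tfrac{\eta-\varepsilon}{\varepsilon}\Bigr).
\]
If one probability is zero, both are zero, because the scores are the same, so that coordinate contributes nothing. By \cref{lem:clip-lip},
\[
|p_f^\varepsilon-p_f^\eta|\le |M(\varepsilon)-M(\eta)|\,s_f(H)=M(\varepsilon)\bigl(1-(\varepsilon/\eta)^2\bigr)s_f(H)\le 2M(\varepsilon)\tfrac{\eta-\varepsilon}{\varepsilon}s_f(H).
\]
Also $p_f^\varepsilon\le M(\varepsilon)s_f(H)$. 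Hence each coordinate contributes $O\bigl(M(\varepsilon)s_f(H)(\eta-\varepsilon)/\varepsilon^2\bigr)$. Summing with \eqref{eq:rho-sum-bounds}, i.e.\ $\sum_f s_f(H)\le|U|^2+1$, gives
\[
O\Bigl((|U|^3+|U|^2\log T)\,\varepsilon^{-3}\tfrac{\eta-\varepsilon}{\varepsilon}\Bigr),
\]
which is the first claim. The internal accuracy $\varepsilon/3$ from \cref{rem:accuracy-translation} only changes constants.

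\textbf{Verifying \cref{assump:adaptive-pc-stability}.} Fix $X$ with $|X|\ge2$ and an occurrence $x\in X$. Set $u=\OPT(X)+\lambda$ and $v=\OPT(X\setminus x)+\lambda$. Losses lie in $[0,1]$, so $0\le u-v\le1$. By \cref{lem:concave-conjugate-prop}, $\varepsilon_X=\varepsilon_{\min}(u)\le\varepsilon_{\min}(v)=\varepsilon_{X\setminus x}$. The final minimization over anchored cuts is deterministic, so by data processing the algorithm's output TV is at most the sparsifier TV. The bound just proved, applied with $\varepsilon=\varepsilon_X$ and $\eta=\varepsilon_{X\setminus x}$ on the lifted hypergraph $H^+_{X\setminus x}$ with $|U|=n+1$, gives
\[
\mathrm{TV}\le O\Bigl(\varphi(\varepsilon_X)\,\tfrac{\varepsilon_{X\setminus x}-\varepsilon_X}{\varepsilon_X}\Bigr),\qquad \varphi(\varepsilon)=C_{\rm sub}\varepsilon^{-3}(n^3+n^2\log T).
\]
This $\varphi$ has the form of \cref{cor:simple-form} with $q=3$ and $C_2=0$. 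Therefore \cref{lem:adaptive-rule-calculus} converts the right-hand side into $O(\varepsilon_X(u-v))=O\bigl(\varepsilon_X(\OPT(X)-\OPT(X\setminus x))\bigr)$, with an absolute constant. Averaging over $x\in X$ yields \cref{assump:adaptive-pc-stability} with $C_{\rm pc}=O(1)$.

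\textbf{Main obstacle.} I expect the delicate step to be the per-coordinate comparison, where the mean $p_f$ and the width of the uniform perturbation change simultaneously. The issue is that the factor $1/\varepsilon$ in \cref{lem:uniform-tv-changing-width} must be paid only against $|p_f^\varepsilon-p_f^\eta|$, which already carries a factor $(\eta-\varepsilon)/\varepsilon$. This is where the total $\varepsilon^{-4}$ scaling comes from, and it must match $\varphi(\varepsilon)/\varepsilon$ exactly for \cref{lem:adaptive-rule-calculus} to apply. I also need to check the clipped regime $p_f=1$ and the endpoint case $\varepsilon_{X\setminus x}=1$; in both, the bounds above remain valid as stated.
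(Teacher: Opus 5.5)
Your proposal is correct and follows essentially the same route as the paper: a product decomposition over hyperedges, then \cref{lem:tv-decomp} with \cref{lem:uniform-tv-changing-width} to get the per-coordinate bound $O\bigl(\varepsilon^{-1}|p_f^\varepsilon-p_f^\eta|+p_f^\varepsilon(\eta-\varepsilon)/\varepsilon\bigr)$, then summation using $\sum_f s_f(H)\le|U|^2+1$, and finally data processing together with \cref{lem:adaptive-rule-calculus} (using $0\le u-v\le1$) to establish \cref{assump:adaptive-pc-stability}. Your explicit computation $M(\varepsilon)-M(\eta)=M(\varepsilon)\bigl(1-(\varepsilon/\eta)^2\bigr)\le 2M(\varepsilon)(\eta-\varepsilon)/\varepsilon$ simply spells out the step that the paper states as $|M(\varepsilon)-M(\eta)|\le O\bigl(M(\varepsilon)(\eta-\varepsilon)/\varepsilon\bigr)$.
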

\begin{proof}
For each parameter $\alpha\in(0,1]$, represent the output of the sparsifier construction by the vector $((Z_f^\alpha,W_f^\alpha))_{f\in E}$, where $Z_f^\alpha$ indicates whether $f$ is sampled and $W_f^\alpha=0$ when $Z_f^\alpha=0$.  Conditional on $Z_f^\alpha=1$ and $p_f^\alpha(H)>0$, the algorithm draws $\widetilde p_f^\alpha$ uniformly from $[p_f^\alpha(H),(1+\alpha/2)p_f^\alpha(H)]$ and sets $W_f^\alpha=1/\widetilde p_f^\alpha$, where
\[
p_f^\alpha(H)=\min\{1,M(\alpha)s_f(H)\},
\qquad
M(\alpha)=c_{\rm sp}\alpha^{-2}(|U|+\log T),
\]
for a fixed absolute constant $c_{\rm sp} > 0$.  Since $x\mapsto\min\{1,x\}$ is $1$-Lipschitz, $\sum_f s_f(H)\le |U|^2+1$, and $|M(\varepsilon)-M(\eta)|\le O(M(\varepsilon)(\eta-\varepsilon)/\varepsilon)$ for $0<\varepsilon\le\eta\le1$, we have
\begin{equation}\label{eq:submod-param-p-diff-sum}
\sum_{f\in E}|p_f^\varepsilon(H)-p_f^\eta(H)|
=
O\prn*{(|U|^3+|U|^2\log T)\varepsilon^{-2}\frac{\eta-\varepsilon}{\varepsilon}}.
\end{equation}
Also, it holds that
\begin{equation}\label{eq:submod-param-p-sum}
\sum_{f\in E}p_f^\varepsilon(H)
\le
M(\varepsilon)\sum_{f\in E}s_f(H)
=
O\prn*{(|U|^3+|U|^2\log T)\varepsilon^{-2}}.
\end{equation}
Let $P_f^\alpha$ denote the law of $(Z_f^\alpha,W_f^\alpha)$, and let $U_f^\alpha$ denote the conditional law of $\widetilde p_f^\alpha$ given $Z_f^\alpha=1$, when $p_f^\alpha(H)>0$.
By applying the conditional decomposition in \cref{lem:tv-decomp} with the sampling indicator as the discrete coordinate, and then using the data processing inequality for the map $x\mapsto1/x$, for every $f$ with $p_f^\varepsilon(H)>0$, we obtain
\[
\mathrm{TV}\prn*{P_f^\varepsilon,P_f^\eta}
\le
\abs{p_f^\varepsilon(H)-p_f^\eta(H)}
+
\min\{p_f^\varepsilon(H),p_f^\eta(H)\}
\,
\mathrm{TV}\prn*{U_f^\varepsilon,U_f^\eta}.
\]
Moreover, \cref{lem:uniform-tv-changing-width} gives
\[
\mathrm{TV}\prn*{U_f^\varepsilon,U_f^\eta}
=
O\prn*{
\frac{1}{\varepsilon}
\abs*{1-\frac{p_f^\eta(H)}{p_f^\varepsilon(H)}}
+
\frac{\eta-\varepsilon}{\varepsilon}
}.
\]
Multiplying by $\min\{p_f^\varepsilon(H),p_f^\eta(H)\}\le p_f^\varepsilon(H)$ yields
\[
\mathrm{TV}\prn*{P_f^\varepsilon,P_f^\eta}
=
O\prn*{
\frac{1}{\varepsilon}\abs{p_f^\varepsilon(H)-p_f^\eta(H)}
+
p_f^\varepsilon(H)\frac{\eta-\varepsilon}{\varepsilon}
}.
\]
If $p_f^\varepsilon(H)=0$, then $s_f(H)=0$ and therefore $p_f^\eta(H)=0$, so the same bound is trivial.  Since the coordinates are sampled independently under each parameter, \cref{lem:product-tv-general} gives
\[
\begin{aligned}
\mathrm{TV}\prn*{\mathcal L_\varepsilon(H),\mathcal L_\eta(H)}
&\le
\sum_{f\in E}\mathrm{TV}\prn*{P_f^\varepsilon,P_f^\eta}
\\
&=
O\prn*{
\frac{1}{\varepsilon}
\sum_{f\in E}\abs{p_f^\varepsilon(H)-p_f^\eta(H)}
+
\frac{\eta-\varepsilon}{\varepsilon}
\sum_{f\in E}p_f^\varepsilon(H)
}
\\
&=
O\prn*{
(|U|^3+|U|^2\log T)\varepsilon^{-3}\frac{\eta-\varepsilon}{\varepsilon}
},
\end{aligned}
\]
where the last step uses \eqref{eq:submod-param-p-diff-sum} and \eqref{eq:submod-param-p-sum}, together with $\varepsilon\le1$.

To establish the condition in \cref{assump:adaptive-pc-stability}, let $X$ be a multiset of losses and let $H_X^+$ be its anchored lifted hypergraph.  Define
\[
  \OPT^+(H_X^+)
  \coloneqq
  \min_{S\subseteq V}\mathrm{cut}_{H_X^+}(S\cup\set{r})
  =
  \min_{S\subseteq V}\sum_{\ell\in X}\ell(S),
\]
and set $\varepsilon_X=\varepsilon_{\min}(\OPT^+(H_X^+)+\lambda)$.  For each $x\in X$, let $e_x$ be the corresponding lifted hyperedge and write $Y=X\setminus x$, so that $H_Y^+=H_X^+-e_x$ and $\varepsilon_Y=\varepsilon_{\min}(\OPT^+(H_Y^+)+\lambda)$.
Since the final decision is obtained from the sparsifier by deterministic minimization, the data processing inequality gives\looseness=-1
\[
  \mathrm{TV}\prn*{A_{\varepsilon_X}(Y),A_{\varepsilon_Y}(Y)}
  \le
  \mathrm{TV}\prn*{\mathcal L_{\varepsilon_X}(H_Y^+),\mathcal L_{\varepsilon_Y}(H_Y^+)}.
\]
Since all losses take values in $[0,1]$, we have
\[
  0\le \OPT^+(H_X^+)-\OPT^+(H_Y^+)\le 1.
\]
Indeed, monotonicity follows from non-negativity, and the upper bound follows by evaluating the added loss at an optimizer for $H_Y^+$.  Applying the fixed-hypergraph bound above to $H_Y^+$ with $|U|=|V^+|=n+1$, $\varepsilon=\varepsilon_X$, and $\eta=\varepsilon_Y$, and using \cref{lem:adaptive-rule-calculus} with $\varphi(\varepsilon)= C_{\rm sub}\varepsilon^{-3}(n^3+n^2\log T)$, yield\looseness=-1
\begin{align}
  \mathrm{TV}\prn*{A_{\varepsilon_X}(Y),A_{\varepsilon_Y}(Y)}
  ={}&
  O\prn*{(n^3+n^2\log T)\varepsilon_X^{-3}
  \frac{\varepsilon_Y-\varepsilon_X}{\varepsilon_X}}
  \\
  ={}&
  O\prn*{\varepsilon_X\prn*{\OPT^+(H_X^+)-\OPT^+(H_Y^+)}}.
\end{align}
Averaging this inequality over $x\in X$ gives the condition in \cref{assump:adaptive-pc-stability} for the original submodular function minimization instance, with $C_{\rm pc}=O(1)$.
\end{proof}

\section{\texorpdfstring{Offline algorithm for $\ell_1$ regression}{Offline algorithm for l1 regression}}\label[appendix]{sec:offline-regression}
We present the details of the offline algorithm for $\ell_1$ regression used in \cref{sec:online-regression}.
We use the $\ell_1$ regression algorithm of \citet{Parulekar2021-bm} based on Lewis weight sampling.
Specifically, suppose that we are now at round $t = n$ and have observed $n$ data points.
In this appendix only, let $d_{\rm feat}$ denote the original feature dimension and set $d=d_{\rm feat}+1$ for the augmented matrix used in Lewis-weight sampling, for convenience.
Let $X = \{(a_i^{\rm feat}, b_i)\}_{i=1}^n \subset \mathbb{R}^{d_{\rm feat}} \times \R$ be a dataset, let $A^{\rm feat}\in\mathbb R^{n\times d_{\rm feat}}$ have rows $(a_i^{\rm feat})^\top$, and let $b=(b_1,\dots,b_n)^\top$.
Define the augmented sampling matrix
\[
  A=[A^{\rm feat},-b]\in\mathbb R^{n\times d}.
\]
We write $a_i^\top=((a_i^{\rm feat})^\top,-b_i)$ for the $i$-th row of this augmented matrix, so that $\abs{\inpr{a_i^{\rm feat},\theta}-b_i}=\abs{\inpr{a_i,(\theta,1)}}$.
Rows with $a_i=0$, equivalently $(a_i^{\rm feat},b_i)=(0,0)$, have zero loss for all $\theta$ and may be discarded;
rows with $a_i^{\rm feat}=0$ and $b_i\neq0$ remain non-zero augmented rows.
For notational simplicity, after discarding zero augmented rows, we relabel the remaining rows and write $n$ for their number in the analysis below; the resulting estimate can be converted to an average over the original occurrences.\footnote{
  If zero augmented rows are discarded, deleting such a row does not change the transcript law, and applying the sensitivity bound to the remaining non-zero rows and then averaging over the original~$n$ occurrences gives the $O(\mu_\varepsilon/(\varepsilon n))$ bound in \cref{thm:avg-sensitivity-l1}.
  If no non-zero augmented row remains after discarding them, the empirical loss is identically zero on $\Theta$, and the offline routine returns a fixed deterministic feasible solution.
}
Throughout the rest of this appendix, $A$ denotes this augmented matrix, after zero augmented rows have been discarded when necessary, or a row-deleted version of it, and $d$ denotes its column dimension.  Since $d=d_{\rm feat}+1$, the bounds stated below imply the rates in \cref{sec:online-regression}, after replacing $d_{\rm feat}$ by the symbol $d$ used there for the feature dimension and adjusting absolute constants.

\begin{remark}
  \Citet{Dong2023-yh} also studied online regression, but their argument does not directly apply to the batch-to-online framework.
  Specifically, their offline approximation guarantee is given in terms of the (unsquared) $\ell_2$ error
  based on an $\ell_2$-norm subspace embedding, but this error measure is not additive over
  rounds and therefore cannot be plugged into the batch-to-online framework.
  If one instead considers the squared $\ell_2$ error, the induced loss becomes exp-concave,
  and the standard online Newton method yields a stronger guarantee.
\end{remark}

\subsection{\texorpdfstring{$\ell_1$ Lewis weights and sampling algorithm}{l1 Lewis weights and sampling algorithm}}
We define $\ell_1$ Lewis weights and introduce useful properties.
For a matrix $A\in\mathbb{R}^{n\times d}$, the $\ell_1$ Lewis weights $w(A) = (w_1(A),\dots,w_n(A))$ are defined as the unique solution to the system of equations
\[
  (w_i(A))^2 = a_i^\top \prn*{A^\top W(A)^{-1} A}^{\dagger} a_i,
  \qquad i\in[n],
\]
where we define $W(A) = \mathrm{diag}(w_1(A),\dots,w_n(A))$ and $M^{\dagger}$ denotes the
Moore--Penrose pseudoinverse of a matrix $M$.
Below are basic properties of $\ell_1$ Lewis weights (cf.~\citealt{Cohen2015-te}).
We include a short proof for completeness.
\begin{lemma}[Basic properties of $\ell_1$ Lewis weights]\label[lemma]{lem:lewis-basic}
Let $A\in\mathbb{R}^{n\times d}$ be a matrix with non-zero rows.
The $\ell_1$ Lewis weights $w(A)$ satisfy the following properties:
\begin{enumerate}[leftmargin=*,itemsep=2pt]
  \item Weights are positive: $w_i(A) > 0$ for all $i \in [n]$.
  \item Bound on the sum of weights: $\displaystyle \sum_{i=1}^n w_i(A) = \mathrm{rank}(A) \eqqcolon r \le d$.
  \item Monotonicity: if $A'$ is obtained from $A$ by adding extra rows, then for every original row~$j \in [n]$, we have
    \[
      w_j(A') \le w_j(A).
    \]
\end{enumerate}
\end{lemma}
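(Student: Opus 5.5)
The plan is to work directly from the fixed-point characterization, with shorthand $B\coloneqq A^\top W(A)^{-1}A$. In that notation the fixed point reads $w_i(A)^2=a_i^\top B^\dagger a_i$, and the three properties are handled in turn.

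\emph{Positivity.} Let $a_i\neq0$. Since $w_j(A)>0$ for all $j$ (the standard existence theorem of \citet{Cohen2015-te} produces a positive solution, which is part of the definition we take as given), we have
\[
B\succeq w_i(A)^{-1}a_ia_i^\top .
\]
Hence $a_i$ lies in the column space of $B$. On that column space $B^\dagger$ is positive definite, so $a_i^\top B^\dagger a_i>0$, and therefore $w_i(A)>0$.

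\emph{Sum.} Write $\tau_i\coloneqq w_i^{-1}a_i^\top B^\dagger a_i$. This is the $i$-th leverage score of the reweighted matrix $W^{-1/2}A$. The fixed-point equation gives $\tau_i=w_i^2/w_i=w_i$. Summing the leverage scores gives
\[
\sum_i\tau_i=\mathrm{tr}\bigl(B^\dagger A^\top W^{-1}A\bigr)=\mathrm{tr}\bigl(B^\dagger B\bigr)=\mathrm{rank}(B).
\]
Because $W^{-1/2}$ is invertible, $\mathrm{rank}(B)=\mathrm{rank}(W^{-1/2}A)=\mathrm{rank}(A)$. This yields $\sum_i w_i(A)=\mathrm{rank}(A)\le d$.

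\emph{Monotonicity (main obstacle).} Adding rows changes all the weights simultaneously, so the comparison cannot be read off directly. I would use the standard iterative characterization: the Lewis weights are the unique fixed point of the map
\[
T_A(w)_i\coloneqq\bigl(a_i^\top(A^\top W^{-1}A)^\dagger a_i\bigr)^{1/2}.
\]
This map is monotone: $w\le w'$ entrywise implies $A^\top W^{-1}A\succeq A^\top W'^{-1}A$, hence $T_A(w)\le T_A(w')$ on the relevant range. It is also a contraction in the Thompson / log metric, with exponent $1/2$, which gives uniqueness and convergence of iterates from any positive start.

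The comparison then goes as follows. Let $w'=w(A')$ and restrict it to the original rows. Because
\[
A'^\top W'^{-1}A'\succeq A^\top W'^{-1}_{[n]}A ,
\]
and the original rows lie in the row space of $A$, we get $w'_j=T_{A'}(w')_j\le T_A(w'_{[n]})_j$ for $j\in[n]$. So $w'_{[n]}$ is a sub-solution, $w'_{[n]}\le T_A(w'_{[n]})$. By monotonicity, the iterates $T_A^k(w'_{[n]})$ are non-decreasing in $k$, and they converge to the unique fixed point $w(A)$. Hence $w'_{[n]}\le w(A)$.

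The delicate points are the pseudoinverse ordering when ranks differ and the contraction argument itself; for both I would defer to \citet{Cohen2015-te}, which I would cite for existence, uniqueness, and monotonicity.
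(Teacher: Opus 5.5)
Your sum computation is the paper's argument in different notation. The paper sets $B=W^{-1/2}A$ and evaluates $\mathrm{tr}(B(B^\top B)^\dagger B^\top)$, and your $\mathrm{tr}(B^\dagger A^\top W^{-1}A)$ is the same trace by cyclicity. For positivity, both you and the paper rest on the positive-solution statement of \citet{Cohen2015-te}. Your positivity paragraph is circular as written: it assumes $w_j(A)>0$ for all $j$, including $j=i$, in order to conclude $w_i(A)>0$. Its only real content is therefore that citation.

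The genuine difference is monotonicity. The paper obtains it by citing \citet[Lemma~5.5]{Cohen2015-te}; you instead give a sub-solution argument for the fixed-point map $T_A$. That argument is correct, and the two points you flag as delicate each close in one line.
\begin{itemize}
\item Pseudoinverse ordering when ranks differ. For $X\succeq Y\succeq 0$ and $v\in\mathrm{range}(Y)\subseteq\mathrm{range}(X)$, the variational formula $v^\top Y^\dagger v=\sup_z\{2z^\top v-z^\top Yz\}$, and the same formula for $X$, give $v^\top X^\dagger v\le v^\top Y^\dagger v$. This is exactly the rank-changing comparison between $A'^\top W'^{-1}A'$ and $A^\top (W'_{[n]})^{-1}A$.
\item Contraction. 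If $c^{-1}w\le w'\le cw$, then $A^\top W'^{-1}A$ lies between $c^{-1}A^\top W^{-1}A$ and $cA^\top W^{-1}A$ on a common range. Hence $c^{-1/2}T_A(w)\le T_A(w')\le c^{1/2}T_A(w)$, which is a $\tfrac12$-contraction for $\max_i|\log w_i-\log w'_i|$. In this metric $\R^n_{>0}$ is complete.
\end{itemize}

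Your route buys three things.
\begin{itemize}
\item It is self-contained.
\item It explicitly covers added rows that enlarge the row space. This is precisely how the lemma is used later, since deleting a row may lower the rank ($r^{(-i)}\le r$).
\item Since $T_A$ maps $\R^n_{>0}$ into itself when all rows are non-zero, Banach's fixed-point theorem also gives existence and uniqueness of the positive solution. Positivity then becomes immediate, which removes both the circularity above and the need for any external citation.
\end{itemize}
The paper's route is shorter. However, it delegates existence, uniqueness, positivity, and monotonicity to \citet{Cohen2015-te}, whose statements must be adapted to the rank-deficient pseudoinverse setting.
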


\begin{proof}
  Let $w = w(A)$ and $W = W(A)$ for brevity.
  We use the fixed-point characterization of $\ell_p$ Lewis weights.
  For $p=1<4$, \citet[Corollary~3.4]{Cohen2015-te} ensures that the system
  $w_i^2 = a_i^\top (A^\top W^{-1}A)^{\dagger}a_i$ admits a unique solution in $\R^n_{>0}$ (by considering projection onto the range of $A^\top$ if necessary; this does not affect the fixed point argument).
  This fact implies the first claim.

  For the second claim, let $B\coloneqq W^{-1/2}A$ and $M \coloneqq A^\top W^{-1}A = B^\top B$.
  Let $b_i^\top$ denote the $i$-th row of $B$, i.e., $b_i = w_i^{-1/2} a_i$.
  Then, from $w_i^2 = a_i^\top M^{\dagger} a_i$, we have
  \[
    \tau_i(B)
    \coloneqq
    b_i^\top (B^\top B)^{\dagger} b_i
    =
    w_i^{-1} a_i^\top M^{\dagger} a_i
    =
    w_i^{-1} w_i^2
    =
    w_i.
  \]
  That is, $w_i$ coincides with the statistical leverage score of the reweighted matrix $B$.
  It holds that
  \[
    \sum_{i=1}^n w_i
    = \sum_{i=1}^n \tau_i(B)
    = \mathrm{tr}\bigl(B(B^\top B)^{\dagger}B^\top\bigr)
    = \mathrm{tr}\bigl((B^\top B)(B^\top B)^{\dagger}\bigr)
    = \mathrm{rank}(B).
  \]
  Finally, since $W^{-1/2}$ is an invertible diagonal matrix, $\mathrm{rank}(B)=\mathrm{rank}(A)=r$, proving the claim.

  For the third claim, monotonicity under row addition holds for $\ell_p$ Lewis weights for all $p\le 2$ (see \citet[Lemma~5.5]{Cohen2015-te}).
  Thus, for our case of $p=1$, adding rows does not increase the Lewis weights of the existing rows.
\end{proof}

The algorithm we analyze builds upon the standard sampling scheme based on Lewis weights, into which we introduce the weight perturbation and Poissonized sample-size technique to control the average sensitivity.

\textbf{Offline $\ell_1$ regression procedure.\;}
Given the augmented matrix $A\in\mathbb{R}^{n\times d}$, the feasible set $\Theta\subseteq\mathbb R^{d_{\rm feat}}$, and a target mean sample size $\mu\ge1$:
\begin{enumerate}[leftmargin=*,itemsep=2pt]
  \item Compute the $\ell_1$ Lewis weights $w(A)$.
  \item Define a probability distribution $p(A)\in\mathbb{R}^n$ by
    \[
      p_i(A) \coloneqq  \frac{w_i(A)}{\sum_{j=1}^n w_j(A)} = \frac{w_i(A)}{r},
      \qquad i\in[n].
    \]
  \item Draw $N_\varepsilon\sim\mathrm{Pois}(\mu)$ and, conditional on $N_\varepsilon$, independently sample $I_1,\dots,I_{N_\varepsilon}$ from $[n]$ according to $p(A)$ (with replacement).
  \item If $N_\varepsilon=0$, return a fixed deterministic feasible solution.  Otherwise, for each sampled occurrence $k\in[N_\varepsilon]$, draw an independent perturbation
    \[
      \widetilde p_k \sim \mathrm{Unif}\prn*{[p_{I_k}(A),(1+\varepsilon/2)p_{I_k}(A)]}.
    \]
    Form the weighted objective
    \[
      \widehat L(\theta)
      =
      \sum_{k=1}^{N_\varepsilon}
      \frac{\abs{\inpr{a_{I_k},(\theta,1)}}}{N_\varepsilon\,\widetilde p_k}.
    \]
  \item Return a deterministic minimizer of $\widehat L$ over $\Theta$, using fixed tie-breaking.
\end{enumerate}
\Citet{Parulekar2021-bm} show that, for any $\varepsilon,\delta\in(0,1]$, a sample size of $m = \Theta \prn*{\tfrac{d}{\varepsilon^2}\log \tfrac{d}{\varepsilon\delta}}$ is sufficient for the unperturbed version of the $\ell_1$ regression algorithm to enjoy a $(1\pm\varepsilon/2)$-approximation guarantee with probability at least $1-\delta$.
Fix a sufficiently large absolute constant $c_{\ell_1}\ge1$ and set
\[
  \mu_\varepsilon
  =
  c_{\ell_1}d\varepsilon^{-2}
  \log\prn*{\mathrm e+\frac{dT}{\varepsilon}}.
\]
Since $d=d_{\rm feat}+1$, this is the same asymptotic dependence on the original feature dimension as the bound stated in \cref{sec:online-regression}, after adjusting absolute constants.
This choice of $\mu_\varepsilon$ is meant to dominate twice the fixed-size threshold $m$ for $\delta=1/T$.
Conditional on every realized count $N_\varepsilon=m'\ge m$, the fixed-size theorem is applied with its sample-size parameter set to $m'$; in particular, each sampled occurrence is weighted by $1/(m'\widetilde p)$ with the corresponding realized perturbation $\widetilde p$.  Hence \cref{lem:fixed-to-poissonized} transfers the approximation guarantee to the Poissonized version.  The additional event $N_\varepsilon<m$ has probability $\mathrm e^{-\Omega(m)}=O(\delta)$ for the above choice of $m$, so the total failure probability remains~$O(\delta)$.
Note that the perturbation step uses the same technique as the one described in \cref{subsec:weight-perturbation}, which only changes the approximation factor from $1\pm\varepsilon/2$ to $1\pm\varepsilon$.
Also, as in \cref{rem:accuracy-translation}, the constant-factor translation between the internal sampling accuracy and the target $(1+\varepsilon)$ guarantee is absorbed into the constants.

\subsection{Change of probability vectors under row deletion}
To analyze the average sensitivity of the above procedure, we examine how the probability vector based on the Lewis weights changes when a row is deleted.

For each $i\in[n]$, let $A^{(-i)}\in\mathbb{R}^{(n-1)\times d}$ denote the matrix obtained from $A$ by deleting the $i$-th row~$a_i^\top$, and let $w(A^{(-i)})$ be the Lewis weight vector of $A^{(-i)}$.
We define $w^{(-i)} \in \R^n$ by inserting a zero in the missing coordinate as follows:
\[
  w^{(-i)} \coloneqq  (w^{(-i)}_1,\dots,w^{(-i)}_n)
  \quad\text{with}\quad
  w^{(-i)}_i \coloneqq  0
  \quad
  \text{and}
  \quad
  w^{(-i)}_j \coloneqq  w_j(A^{(-i)}) \quad \text{for}\; j\neq i.
\]
Note that $A$ can be obtained from $A^{(-i)}$ by adding the
row $a_i^\top$.
Thus, \cref{lem:lewis-basic}~(3) implies
\[
  w_j(A) \le w_j(A^{(-i)}) = w^{(-i)}_j \quad \text{for}\; j\neq i.
\]
From \cref{lem:lewis-basic}~(2), we also have $\sum_{j=1}^n w_j(A) = r$.
Recall that the sampling distribution is
\[
  p(A) = \prn*{p_1(A),\dots,p_n(A)}
  \quad\text{where}\quad
  p_i(A) = \frac{w_i(A)}{r}.
\]
Similarly, for $A^{(-i)}$, define $p(A^{(-i)})$ as the normalized
Lewis weights of $A^{(-i)}$, extended by a zero in the $i$-th coordinate, i.e.,
\[
  p_j(A^{(-i)}) = \frac{w^{(-i)}_j}{r^{(-i)}}
  \quad\text{for}\; j\in[n],
\]
where $r^{(-i)} = \sum_{j=1}^n w^{(-i)}_j = \mathrm{rank}(A^{(-i)}) \le r$.
If $r^{(-i)}=0$, define $p(A^{(-i)})$ as the zero vector on the extended coordinate space.
By using these, we obtain the following lemma on the average change of the probability vectors.
\begin{lemma}\label[lemma]{lem:avg-L1-lewis}
It holds that
\[
  \frac{1}{n}\sum_{i=1}^n
  \norm*{p(A) - p(A^{(-i)})}_1
  \le \frac{2}{n}.
\]
\end{lemma}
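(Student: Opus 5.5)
We bound each term $\norm{p(A)-p(A^{(-i)})}_1$ separately and then average; the estimate should yield the pointwise bound $2$, except that we track the deleted coordinate so the sum telescopes to $2$ overall. Fix $i$ and write $r^{(-i)}=\mathrm{rank}(A^{(-i)})\in\{r-1,r\}$, since deleting one row lowers the rank by at most one. Split
\[
\norm*{p(A)-p(A^{(-i)})}_1 = p_i(A) + \sum_{j\ne i}\abs*{\frac{w_j(A)}{r}-\frac{w^{(-i)}_j}{r^{(-i)}}}.
\]
The coordinate $i$ contributes $p_i(A)=w_i(A)/r$, which averages to $1/n$ over $i$ by \cref{lem:lewis-basic}~(2).

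For $j\neq i$ we use monotonicity, \cref{lem:lewis-basic}~(3): $w_j(A)\le w^{(-i)}_j$. Since also $r^{(-i)}\le r$, we get $w_j(A)/r\le w^{(-i)}_j/r^{(-i)}$. So every difference has the same sign, and the absolute values can be dropped:
\[
\sum_{j\ne i}\prn*{\frac{w^{(-i)}_j}{r^{(-i)}}-\frac{w_j(A)}{r}} = 1-\frac{r-w_i(A)}{r} = \frac{w_i(A)}{r}.
\]
Here we used $\sum_{j\ne i}w^{(-i)}_j=r^{(-i)}$ and $\sum_{j\ne i}w_j(A)=r-w_i(A)$.

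When $r^{(-i)}=0$, $A$ has rank one and consists of the single nonzero row $a_i$. Then $n=1$, $p(A^{(-i)})=0$, and the distance is $1\le 2$, so this case is handled directly.

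Combining the two pieces, each term equals $2w_i(A)/r$. Averaging over $i$ and using $\sum_i w_i(A)=r$ gives
\[
\frac{1}{n}\sum_{i=1}^n \norm*{p(A)-p(A^{(-i)})}_1 = \frac{1}{n}\sum_{i=1}^n\frac{2w_i(A)}{r} = \frac{2}{n}.
\]

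The only delicate point is the sign argument for $j\ne i$, which needs both the monotonicity of Lewis weights and the rank inequality $r^{(-i)}\le r$. Both are supplied by \cref{lem:lewis-basic}, so no real obstacle remains.
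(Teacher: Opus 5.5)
Your proposal is correct and follows the paper's proof essentially line by line. You split off coordinate $i$, use Lewis-weight monotonicity together with $r^{(-i)}\le r$ to drop the absolute values for $j\neq i$, get $2w_i(A)/r$ per term, and average using $\sum_i w_i(A)=r$. The degenerate case $r^{(-i)}=0$ is handled slightly differently: the paper notes $w_i(A)=r$, so the term satisfies $1\le 2w_i(A)/r$, while you deduce $n=1$ from the non-zero-row convention. In either case the term there is $1$ rather than $2w_i(A)/r$, so your final display should read $\le 2/n$ rather than $=2/n$.
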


\begin{proof}
  Write $w \coloneqq  w(A)$. Fix $i \in [n]$.
  If $r^{(-i)}=0$, then $p(A^{(-i)})$ is the zero vector.  Moreover, $w_j(A)\le w_j(A^{(-i)})=0$ for every $j\neq i$, so $r=\sum_j w_j(A)=w_i(A)$ and hence
  \[
    \norm*{p(A)-p(A^{(-i)})}_1=1\le \frac{2w_i}{r}.
  \]
  Below, we assume $r^{(-i)}>0$. Then, we have
  \begin{align}
    \norm*{p(A) - p(A^{(-i)})}_1
    &= \sum_{j=1}^n \abs*{ \frac{w_j}{r} - \frac{w^{(-i)}_j}{r^{(-i)}} } \\
    &=
    \frac{w_i}{r} +
    \sum_{j\neq i} \prn*{ \frac{w^{(-i)}_j}{r^{(-i)}} - \frac{w_j}{r}} \\
    &=
    \frac{w_i}{r} +
     \frac{\sum_{j\neq i}w^{(-i)}_j}{r^{(-i)}} - \frac{\sum_{j=1}^n w_j - w_i}{r} \\
    &=
    \frac{w_i}{r} + \frac{r^{(-i)}}{r^{(-i)}} - \frac{r - w_i}{r}  \\
    &=
    \frac{2w_i}{r}.
  \end{align}
  Averaging over $i=1,\dots,n$ and using $\sum_{i=1}^n w_i = r$ yield the claim.
\end{proof}

\subsection{Bounding average sensitivity}\label[appendix]{sec:lewis-avg-sen}
We upper bound the average sensitivity of the offline $\ell_1$ regression algorithm.
Fix $\varepsilon\in(0,1]$ and let $p(A)$ denote the single-step sampling distribution on $[n]$.
Conditioned on the Poisson count taking the value $m\in\Z_{>0}$, the algorithm draws $m$ indices i.i.d.\ as
\[
  I_1,\dots,I_m \sim p(A),
\]
and assigns randomly perturbed weights to selected rows.
That is, conditioned on $I_j = k$, we draw
\[
  \widetilde p_{j} \sim \mathrm{Unif}\prn*{[p_k(A),(1+\varepsilon/2)p_k(A)]}
\]
independently and set the weight to
\[
  W_j \coloneqq \frac{1}{m\,\widetilde p_{j}}.
\]
If $p_k(A)=0$, then $\Pr[I_j=k]=0$ holds and the value of $W_j$ on that event is irrelevant.
We write $(I,W)=(I_1,\dots,I_m,W_1,\dots,W_m)$ for the full random output of the sampling procedure, and view the final $\ell_1$-regression step on the sampled weighted subset as a deterministic function of $(I,W)$.

\textbf{Bounding total variation for a single draw $(I_j,W_j)$.\;}
Fix $i\in[n]$ and let $A^{(-i)}$ be the matrix with the $i$-th row removed.
In this paragraph, assume that $p(A^{(-i)})$ is a probability vector.
Let $P$ and~$Q$ denote the laws of the single-draw pair $(I_j,W_j)$ under inputs $A$ and $A^{(-i)}$, respectively; below we omit the subscript~$j$ for brevity.
We apply the conditional decomposition (\cref{lem:tv-decomp}) with $\mathcal Z=[n]$ and $\mathcal W=\R_{\ge0}$:
\begin{equation}
  \mathrm{TV}(P,Q)
  \le
  \mathrm{TV}\!\left(p(A),p(A^{(-i)})\right)
  \!+\!
  \sum_{k\in[n]}
  \min\{p_k(A),p_k(A^{(-i)})\}\,
  \mathrm{TV}\!\left(P_{W\mid I=k},Q_{W\mid I=k}\right),
  \label{eq:l1reg-tv-one-draw-decomp}
\end{equation}
where $P_{W\mid I=k}$ and $Q_{W\mid I=k}$ denote the conditional distributions of $W$ given $I=k$ under $P$ and~$Q$, respectively.
The first term equals $\frac12\|p(A)-p(A^{(-i)})\|_1$.
For the second term, coordinates with $p_k(A)=0$ or $p_k(A^{(-i)})=0$ contribute nothing because the coefficient $\min\{p_k(A),p_k(A^{(-i)})\}$ is zero.  For the remaining coordinates, the conditional laws of $W\mid(I=k)$ are images of the corresponding uniform perturbation intervals under the map $\widetilde p\mapsto 1/(m\widetilde p)$.
Since this deterministic mapping does not increase the total variation distance, \cref{lem:uniform-tv} gives
\[
  \mathrm{TV}\prn*{P_{W\mid I=k},Q_{W\mid I=k}}
  \le
  \frac{1+\varepsilon/2}{\varepsilon/2}\,
  \abs*{1-\frac{p_k(A^{(-i)})}{p_k(A)}}
  \qquad
  \text{for\; $p_k(A)>0$\; and\; $p_k(A^{(-i)})>0$},
\]
and the zero-probability coordinates are covered by the vanishing coefficient noted above.
Therefore, using $\min\{a,b\}\,\bigl|1-b/a\bigr|\le |a-b|$ for the positive-probability coordinates, we obtain
\begin{align}
  \sum_{k\in[n]}
  \min\{p_k(A),p_k(A^{(-i)})\}\,
  \mathrm{TV}\!\left(P_{W\mid I=k},Q_{W\mid I=k}\right)
  &\le
  \frac{2+\varepsilon}{\varepsilon}\,
  \sum_{k\in[n]}\abs{p_k(A)-p_k(A^{(-i)})}
  \nonumber\\
  &=
  \frac{2+\varepsilon}{\varepsilon}\,
  \|p(A)-p(A^{(-i)})\|_1.
\end{align}
Plugging this into \eqref{eq:l1reg-tv-one-draw-decomp} yields
\begin{equation}\label{eq:l1reg-tv-one-draw-final}
  \mathrm{TV}(P,Q)
  \le
  \prn*{\frac12 + \frac{2+\varepsilon}{\varepsilon}}\,
  \|p(A)-p(A^{(-i)})\|_1
  \le
  \frac{4}{\varepsilon}\,\|p(A)-p(A^{(-i)})\|_1,
\end{equation}
where the last inequality uses $\varepsilon\in(0,1]$.

\textbf{From one draw to $m$ draws.\;}
Let $P_{A,m}$ and $P_{A^{(-i)},m}$ denote the laws of the count-$m$ transcripts under inputs $A$ and $A^{(-i)}$, respectively.  When $p(A^{(-i)})$ is a probability vector, the count-$m$ laws are products of the single-draw laws $P$ and $Q$ above, and hence \cref{lem:product-tv-general} and \eqref{eq:l1reg-tv-one-draw-final} give
\[
  \mathrm{TV}\prn*{P_{A,m},P_{A^{(-i)},m}}
  \le
  \frac{4m}{\varepsilon}\,\|p(A)-p(A^{(-i)})\|_1.
\]
If $p(A^{(-i)})$ is the zero vector, the deleted input produces a fixed deterministic transcript; viewing this transcript as an element of the count-$m$ transcript space, we have
\[
  \mathrm{TV}\prn*{P_{A,m},P_{A^{(-i)},m}}
  \le 1
  \le
  \frac{4m}{\varepsilon}\,
  \|p(A)-p(A^{(-i)})\|_1,
\]
because $\|p(A)-p(A^{(-i)})\|_1=\|p(A)\|_1=1$ and $\varepsilon\le1$.  Therefore, for every $m\ge1$, we have
\begin{equation}\label{eq:l1reg-tv-m-draws}
  \mathrm{TV}\prn*{
    P_{A,m},P_{A^{(-i)},m}
  }
  \le
  \frac{4m}{\varepsilon}\,\|p(A)-p(A^{(-i)})\|_1.
\end{equation}
We now bound the average sensitivity of the Poissonized $\ell_1$ regression algorithm.
\begin{proposition}\label{thm:avg-sensitivity-l1}
Let $A\in\mathbb{R}^{n\times d}$, fix $\varepsilon\in(0,1]$, and let the Poissonized algorithm use mean sample count $\mu_\varepsilon$.  Let $\mathcal L_\varepsilon(A)$ be the output law of the resulting offline $\ell_1$ regression algorithm.
Then, we have
\[
  \frac{1}{n}\sum_{i=1}^n
  \mathrm{TV}\prn*{\mathcal L_\varepsilon(A),\mathcal L_\varepsilon(A^{(-i)})}
  =
  O\prn*{\frac{\mu_\varepsilon}{\varepsilon n}}.
\]
\end{proposition}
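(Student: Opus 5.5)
The plan is to reduce to the transcript level and then combine the Poissonized transcript coupling with the per-count bound \eqref{eq:l1reg-tv-m-draws} and the averaged $\ell_1$ change of Lewis-weight probabilities in \cref{lem:avg-L1-lewis}.

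\textbf{Step 1 (reduction to transcripts).} The returned solution is a deterministic function of the sampling transcript: the realized count $N_\varepsilon$, the indices $I$, and the weights $W$. By the data processing inequality, $\mathrm{TV}(\mathcal L_\varepsilon(A),\mathcal L_\varepsilon(A^{(-i)}))$ is bounded by the total variation between the transcript laws. Each transcript law has the form $\mathsf{Mix}(\mu_\varepsilon,\{P_{A,m}\}_{m\ge0})$, and the same $\mu_\varepsilon$ is used for both inputs. This holds because $\mu_\varepsilon$ depends only on $d$, $\varepsilon$ and $T$, not on the data. For the count $m=0$ both transcripts are the same fixed deterministic output, so that term vanishes.

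\textbf{Step 2 (Poisson mixing).} Apply \cref{lem:poissonized-transcript-tv} with $\mu=\nu=\mu_\varepsilon$. The count term $|\mu-\nu|$ vanishes, which gives
\[
\mathrm{TV}\prn*{\mathcal L_\varepsilon(A),\mathcal L_\varepsilon(A^{(-i)})}
\le
\E_{N\sim\mathrm{Pois}(\mu_\varepsilon)}\brc*{\mathrm{TV}\prn*{P_{A,N},P_{A^{(-i)},N}}}
\le
\frac{4\,\E[N]}{\varepsilon}\norm*{p(A)-p(A^{(-i)})}_1 ,
\]
where the second inequality uses \eqref{eq:l1reg-tv-m-draws} for each $m\ge1$. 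Since $\E[N]=\mu_\varepsilon$, the right-hand side equals $(4\mu_\varepsilon/\varepsilon)\norm{p(A)-p(A^{(-i)})}_1$.

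\textbf{Step 3 (averaging over deletions).} Average over $i\in[n]$ and invoke \cref{lem:avg-L1-lewis}, which gives $\frac1n\sum_i\norm{p(A)-p(A^{(-i)})}_1\le 2/n$. The average sensitivity is therefore at most $8\mu_\varepsilon/(\varepsilon n)=O(\mu_\varepsilon/(\varepsilon n))$.

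\textbf{Bookkeeping for discarded rows.} If zero augmented rows were discarded, deleting such a row leaves the transcript law unchanged. Averaging the bound for the non-zero rows over all original occurrences then only decreases the average, as noted in the footnote.

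\textbf{Main obstacle.} The step needing care is the applicability of \eqref{eq:l1reg-tv-m-draws} when $A^{(-i)}$ has rank zero. In that case the deleted-input transcript is deterministic, and the bound is still valid there, as argued just before the proposition. Everything else is linearity of expectation.

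Plugging in $\mu_\varepsilon=c_{\ell_1}d\varepsilon^{-2}\log(\mathrm e+dT/\varepsilon)$ yields the claimed form $\varphi(\varepsilon)/(2n)$ with $\varphi(\varepsilon)=C_{\ell_1}d\varepsilon^{-3}\log(\mathrm e+dT/\varepsilon)$.
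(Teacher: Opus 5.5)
Your proposal is correct and follows the paper's proof essentially step for step. You reduce to the sampling transcript by data processing, apply \cref{lem:poissonized-transcript-tv} with equal Poisson means together with the per-count bound \eqref{eq:l1reg-tv-m-draws} (the $m=0$ term being zero) to obtain $(4\mu_\varepsilon/\varepsilon)\|p(A)-p(A^{(-i)})\|_1$, and average over $i$ via \cref{lem:avg-L1-lewis}; your remarks on discarded zero rows and the degenerate rank-zero deletion also match the paper's handling of those cases.
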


\begin{proof}
Fix $i\in[n]$.  Since the final optimization step is deterministic given the sampled indices and weights, the data processing inequality reduces the claim to the total variation distance between sampling transcripts.  Apply \cref{lem:poissonized-transcript-tv} with equal Poisson means and with $P_m,Q_m$ equal to the count-$m$ transcript laws for $A$ and $A^{(-i)}$.  Conditional on the common count $m$, \eqref{eq:l1reg-tv-m-draws} applies for~$m\ge1$, while for $m=0$ both procedures produce the same fixed deterministic transcript.
By \cref{lem:poissonized-transcript-tv} with equal Poisson means, averaging over $m\sim\mathrm{Pois}(\mu_\varepsilon)$ gives\looseness=-1
\[
  \mathrm{TV}\prn*{
    \mathcal L_\varepsilon(A),\mathcal L_\varepsilon(A^{(-i)})
  }
  \le
  \frac{4\mu_\varepsilon}{\varepsilon}\,\|p(A)-p(A^{(-i)})\|_1.
\]
Averaging over $i \in [n]$ and using \cref{lem:avg-L1-lewis} completes the proof.
\end{proof}

With the explicit mean $\mu_\varepsilon$ above, fix a sufficiently large constant $C_{\ell_1}\ge c_{\ell_1}$ and use the following admissible choice of $\varphi$ in \cref{assump:approx-avesen}:
\[
\varphi(\varepsilon)
=
C_{\ell_1}\frac{d}{\varepsilon^{3}}\log \prn*{\mathrm e+\frac{d T}{\varepsilon}}.
\]
Since $d=d_{\rm feat}+1$, increasing the absolute constant if necessary gives the $C_{\ell_1}d_{\rm feat}\varepsilon^{-3}\log(\mathrm e+d_{\rm feat}T/\varepsilon)$ rate, which is written with $d$ in the main text.

\subsection{Checking Assumption~\ref{assump:adaptive-pc-stability}}\label[appendix]{sec:check-pc-stab-regression}
We verify that the above offline $\ell_1$ regression algorithm, equipped with the weight perturbation and Poissonized sample-size technique, satisfies \cref{assump:adaptive-pc-stability} with $C_{\rm pc}=O(1)$.
\begin{proposition}\label[proposition]{prop:l1-same-input-pc}
Let $\mathcal L_\varepsilon(A)$ be the output law of the Poissonized Lewis-weight sampling algorithm on a fixed input matrix $A\in\mathbb R^{n\times d}$, with mean sample count $\mu_\varepsilon=c_{\ell_1}d\varepsilon^{-2}\log(\mathrm e+dT/\varepsilon)$.  For $0<\varepsilon\le\eta\le1$, it holds that
\[
\mathrm{TV}\prn*{\mathcal L_\varepsilon(A),\mathcal L_\eta(A)}
=
O\prn*{
\mu_\varepsilon\frac{\eta-\varepsilon}{\varepsilon}
}.
\]
Consequently, with $\varphi(\varepsilon)=C_{\ell_1}d\varepsilon^{-3}\log(\mathrm e+dT/\varepsilon)$ for a sufficiently large constant $C_{\ell_1} > 0$, the Poissonized Lewis-weight sampling family satisfies \cref{assump:adaptive-pc-stability} with $C_{\rm pc}=O(1)$; as above, this implies the feature-dimension rate stated in the main text after adjusting constants.
\end{proposition}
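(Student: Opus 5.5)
The proof follows the template of \cref{prop:submod-same-hypergraph-pc}. Fix the input $A$ and $0<\varepsilon\le\eta\le1$. Since the final regression step is a deterministic function of the sampling transcript, the data processing inequality reduces the claim to bounding the total variation distance between the Poissonized transcript laws at parameters $\varepsilon$ and $\eta$. Note that the sampling distribution $p(A)$ does not depend on the accuracy parameter, so the only differences are the Poisson mean ($\mu_\varepsilon$ versus $\mu_\eta$) and the perturbation width ($\varepsilon/2$ versus $\eta/2$).

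We first apply \cref{lem:poissonized-transcript-tv}. The count coordinate contributes at most $|\mu_\varepsilon-\mu_\eta|$. Since $\mu_\alpha=c_{\ell_1}d\alpha^{-2}\log(\mathrm e+dT/\alpha)$ is decreasing in $\alpha$, the mean value theorem (the logarithmic derivative of $\alpha^{-2}\log(\mathrm e+dT/\alpha)$ is at most $3/\alpha$ in absolute value) gives $|\mu_\varepsilon-\mu_\eta|=O(\mu_\varepsilon(\eta-\varepsilon)/\varepsilon)$.

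Next, conditional on a common count $m$, the two count-$m$ transcripts are products of $m$ i.i.d.\ single draws $(I_j,W_j)$. In each draw the index law $p(A)$ is identical under both parameters. Conditional on $I_j=k$, the weight is the image under $\widetilde p\mapsto 1/(m\widetilde p)$ of $\mathrm{Unif}([p_k,(1+\varepsilon/2)p_k])$ versus $\mathrm{Unif}([p_k,(1+\eta/2)p_k])$. Applying \cref{lem:tv-decomp} (the index term vanishes) together with \cref{lem:uniform-tv-changing-width} with $B=B'$ gives a per-draw distance of at most $(\eta-\varepsilon)/\varepsilon$. \Cref{lem:product-tv-general} then yields $m(\eta-\varepsilon)/\varepsilon$, and averaging over $m\sim\mathrm{Pois}(\mu_\varepsilon)$ gives $\mu_\varepsilon(\eta-\varepsilon)/\varepsilon$. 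Summing the two contributions proves the fixed-input bound $O(\mu_\varepsilon(\eta-\varepsilon)/\varepsilon)$.

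For \cref{assump:adaptive-pc-stability}, take a multiset $X$, delete $x$, and set $Y=X\setminus x$. Nonnegativity and the $[0,1]$ range of the losses give $0\le\OPT(X)-\OPT(Y)\le1$. By \cref{lem:concave-conjugate-prop}, $\varepsilon_X\le\varepsilon_Y$, so the bound applies with $\varepsilon=\varepsilon_X$ and $\eta=\varepsilon_Y$. Since $\mu_\varepsilon\le\varepsilon\varphi(\varepsilon)$ for $C_{\ell_1}\ge c_{\ell_1}$, we get
\[
\mathrm{TV}\prn*{\mathcal A_{\varepsilon_X}(Y),\mathcal A_{\varepsilon_Y}(Y)}=O\prn*{\varphi(\varepsilon_X)(\varepsilon_Y-\varepsilon_X)}\le O\prn*{\varphi(\varepsilon_X)\tfrac{\varepsilon_Y-\varepsilon_X}{\varepsilon_X}}.
\]
Now \cref{lem:adaptive-rule-calculus} applies with $u=\OPT(X)+\lambda$ and $v=\OPT(Y)+\lambda$, since $\varphi$ has the form of \cref{cor:simple-form} with $q=3$, $C_1=C_{\ell_1}d$ and $C_2=dT$. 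It bounds the right-hand side by $O(\varepsilon_X(\OPT(X)-\OPT(Y)))$, and averaging over $x$ gives $C_{\rm pc}=O(1)$.

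The main obstacle is the row-deletion convention. $\OPT$ is defined on the original occurrences, while the algorithm discards zero augmented rows; such rows contribute nothing to either side, so this is harmless. The degenerate case of an empty augmented matrix, where both outputs are the same deterministic solution, is similarly trivial. Apart from this, the only delicate computation is verifying the logarithmic-derivative bound for $\mu_\alpha$.
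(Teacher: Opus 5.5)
Your proposal is correct and follows the paper's proof essentially step for step. Both arguments pass to the transcript by data processing, use \cref{lem:poissonized-transcript-tv} with the mean-value bound $\abs{\mu_\varepsilon-\mu_\eta}=O(\mu_\varepsilon(\eta-\varepsilon)/\varepsilon)$, compare the perturbation widths per draw via \cref{lem:uniform-tv-changing-width} and the product lemma, and finish with \cref{lem:adaptive-rule-calculus} at $u=\OPT(X)+\lambda$, $v=\OPT(Y)+\lambda$. The only cosmetic difference is that you loosen by $1/\varepsilon_X$ before invoking the calculus lemma, whereas the paper uses its $\varepsilon_X^2$ form and then $\varepsilon_X\le1$; the two are equivalent.
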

\begin{proof}
For a fixed matrix $A$, the original Lewis sampling distribution $p(A)$ is independent of the accuracy parameter.  Thus changing $\varepsilon$ to $\eta$ affects the transcript only through the Poisson mean and the width of the weight perturbation interval. Viewing $\mu_\varepsilon$ as a function of $\varepsilon$, it is non-increasing and satisfies $|\mu_\varepsilon'|= O\prn*{\frac{\mu_\varepsilon}{\varepsilon}}$.
Therefore, for $0<\varepsilon\le\eta\le1$, the mean-value theorem gives
\[
|\mu_\varepsilon-\mu_\eta|
=
O\prn*{\sup_{x\in[\varepsilon,\eta]}\frac{\mu_x}{x}(\eta-\varepsilon)}
=
O\prn*{\mu_\varepsilon\frac{\eta-\varepsilon}{\varepsilon}}.
\]
Condition on a common count $m$.  For each sampled index $k$ with $p_k(A)>0$, let $U_{k,m}^\alpha$ denote the conditional law of $1/(m\tilde p)$ when $\tilde p\sim\mathrm{Unif}([p_k(A),(1+\alpha/2)p_k(A)])$.  By \cref{lem:uniform-tv-changing-width} and the data processing inequality, for $m\ge1$, we have
\[
\mathrm{TV}\prn*{U_{k,m}^{\varepsilon},U_{k,m}^{\eta}}
=
O\prn*{\frac{\eta-\varepsilon}{\varepsilon}},
\]
uniformly in $k$ and $m$.  Hence \cref{lem:product-tv-general} bounds the count-$m$ transcript total variation by $O(m(\eta-\varepsilon)/\varepsilon)$, while the count-$0$ transcript is common to both laws.  Applying \cref{lem:poissonized-transcript-tv} gives
\[
\mathrm{TV}\prn*{\mathcal L_\varepsilon(A),\mathcal L_\eta(A)}
=
O\prn*{
|\mu_\varepsilon-\mu_\eta|
+\mu_\varepsilon\frac{\eta-\varepsilon}{\varepsilon}
}
=
O\prn*{\mu_\varepsilon\frac{\eta-\varepsilon}{\varepsilon}}.
\]
It remains to convert this same-input bound into \cref{assump:adaptive-pc-stability}.  For a dataset $X$ and $x\in X$, let $Y=X\setminus x$ and set $\varepsilon=\varepsilon_X$ and $\eta=\varepsilon_Y$.  Since losses are non-negative and bounded by one,
\[
0\le \OPT(X)-\OPT(Y)\le1,
\qquad
\varepsilon_X\le\varepsilon_Y.
\]
Using the above total variation bound on the fixed input $Y$ and the chosen constant $C_{\ell_1}$, which makes $\mu_\varepsilon/\varepsilon=O(\varphi(\varepsilon))$, gives
\[
\mathrm{TV}\prn*{\mathcal L_{\varepsilon_X}(Y),\mathcal L_{\varepsilon_Y}(Y)}
=
O\prn*{\varphi(\varepsilon_X)(\varepsilon_Y-\varepsilon_X)}.
\]
Thus, \cref{lem:adaptive-rule-calculus}, applied with $u=\OPT(X)+\lambda$ and $v=\OPT(Y)+\lambda$, gives
$\varphi(\varepsilon_X)(\varepsilon_Y-\varepsilon_X)=O(\varepsilon_X^2(\OPT(X)-\OPT(Y)))$.  Since we have $\varepsilon_X\le1$, this is at most $O(\varepsilon_X(\OPT(X)-\OPT(Y)))$.  Averaging over $x\in X$ proves \cref{assump:adaptive-pc-stability} with $C_{\rm pc}=O(1)$.
\end{proof}

\section{Details of other applications}\label[appendix]{sec:dong-yoshida-applications}
We present the details of the applications of \citet{Dong2023-yh} to online $(k,z)$-clustering and online low-rank matrix approximation.
Below, the assumption that losses lie in $[0,1]$ is imposed on the instance and the feasible decision class.

\textbf{A useful common calculus.\;}
We will use the following elementary calculus to control changes in the Poisson means.
For $a>0$, $B\ge0$, $C\ge1$ and $M(\alpha)=C\alpha^{-a}\log(\mathrm e+B/\alpha)$ on $(0,1]$, the function $M$ is non-increasing and satisfies
\[
  -M'(s)
  \le
  (a+1)\frac{M(s)}{s}
  \qquad (0<s\le1).
\]
Consequently, for $0<\varepsilon\le\eta\le1$, we have
\begin{equation}\label{eq:power-log-mean-lipschitz}
  |M(\varepsilon)-M(\eta)|
  \le
  (a+1)M(\varepsilon)\frac{\eta-\varepsilon}{\varepsilon}.
\end{equation}
Indeed, the derivative bound follows by differentiating $M$, using $\log(\mathrm e+B/s)\ge1$, and bounding the derivative of the logarithmic factor by $1/s$; integrating the derivative from $\varepsilon$ to $\eta$ gives \eqref{eq:power-log-mean-lipschitz}.

\subsection{\texorpdfstring{Online $(k,z)$-clustering}{Online (k,z)-clustering}}\label[appendix]{subsec:online-kz}
\citet{Dong2023-yh} instantiate the batch-to-online framework for the following \emph{online $(k,z)$-clustering} problem \citep{Cohen-Addad2021-fb}.
\begin{remark}
  For this clustering application, the selection rule \eqref{eq:rule} requires access to the exact prefix optimum $\OPT(Y)$, or to an oracle that provides a monotone surrogate satisfying the stability properties used in \cref{assump:adaptive-pc-stability}.
In this paper, the clustering result should therefore be interpreted as an oracle-based refinement of the fixed-$\varepsilon$ guarantee of \citet{Dong2023-yh};
obtaining a fully polynomial-time implementation with approximate prefix optima requires an additional analysis.
\end{remark}

\textbf{Problem setup.\;}
Fix $k\in\mathbb{Z}_{>0}$ and $z\ge1$.  Let $\mathcal K\subset\R^d$ be a known bounded center domain and set $\Theta=\set*{\{c_1,\ldots,c_k\}:c_j\in\mathcal K\ \text{for all }j\in[k]}$.  At each round $t$, the learner plays a set of $k$ centers $\theta_t=\{c_{t,1},\ldots,c_{t,k}\}\in\Theta$ and incurs the loss defined by
\[
  \ell\prn*{\theta_t,x_t}
  =
  \min_{j\in[k]}\norm{x_t-c_{t,j}}_2^z.
\]
Then, the learner observes a data point $x_t\in\R^d$ and updates the centers for the next round.
We assume that the adversarially chosen points and the domain $\mathcal K$ are scaled so that $\ell(\theta,x)\in[0,1]$ for every feasible $\theta\in\Theta$ and every admissible point $x$.

We use a Poissonized version of the two-stage importance-sampling coreset construction of \citet{Huang2020-cb}, in the perturbed form used by \citet{Dong2023-yh}.
The following description highlights the aspects relevant to our analysis; see \citet[Algorithm~7]{Dong2023-yh} for the full details.\looseness=-1

\textbf{Poissonized coreset procedure.\;}
Given a finite multiset $Y$ of data points and an accuracy parameter $\varepsilon\in(0,1]$, the offline algorithm proceeds as follows.
\begin{enumerate}[leftmargin=*,itemsep=2pt]
  \item Compute the preliminary objects used in the Huang--Vishnoi two-stage construction, following the organization of \citet[Algorithm~7, Lines~2--4]{Dong2023-yh}.  Namely, compute a preliminary center set $C_Y^\star$ by the $D^z$-sampling subroutine, assign each $y\in Y$ to its closest center $c_Y^\star(y)\in C_Y^\star$ using fixed tie-breaking, form the cells $Y^c=\set*{y\in Y:c_Y^\star(y)=c}$, and compute the first-stage scores $\sigma_{1,Y}(y)$.  The randomness used to construct $C_Y^\star$ is included in the transcript and is generated independently of the target accuracy parameter.  We use the normalized first-stage distribution $p_{1,Y}(y)=\sigma_{1,Y}(y)/\sum_{y\in Y}\sigma_{1,Y}(y)$.  Any accuracy-dependent scalar that is common to all first-stage scores cancels in this normalization and is absorbed into the sample-size constants.
  \item Let $m_1(\varepsilon)$ and $m_2(\varepsilon)$ denote the deterministic sample-size thresholds for the first and second importance-sampling stages in \citet[Algorithm~7, Lines~5 and~10]{Dong2023-yh}, with constants chosen for failure probability $O(1/T)$.  Choose Poisson means $\mu_j(\varepsilon)$ so that $\mu_j(\varepsilon)/2\ge m_j(\varepsilon)$ for $j=1,2$, and draw independent counts $N_j\sim\mathrm{Pois}(\mu_j(\varepsilon))$.  Concretely, after absorbing the $k$- and $z$-dependent and failure-probability factors into sufficiently large constants $c_{1,k,z},c_{2,k,z}\ge1$, we take
  \[
    \mu_1(\varepsilon)
    =
    c_{1,k,z}\varepsilon^{-(5z+15)}
    \log\prn*{\mathrm e+\frac{T}{\varepsilon}},
    \qquad
    \mu_2(\varepsilon)
    =
    c_{2,k,z}\varepsilon^{-(2z+2)}
    \log\prn*{\mathrm e+\frac{T}{\varepsilon}}.
  \]
  Since $z\ge1$ and $\varepsilon\le1$, their sum is bounded by
  \[
    \mu(\varepsilon)\coloneqq
    \mu_1(\varepsilon)+\mu_2(\varepsilon)
    =
    O_{k,z}\prn*{\varepsilon^{-(5z+15)}
    \log\prn*{\mathrm e+\frac{T}{\varepsilon}}}.
  \]
  \item In the first stage, sample $N_1$ points independently with replacement from $p_{1,Y}$.  For each sampled point, draw the perturbed first-stage weight coordinate $u_Y$ as in \citet[Algorithm~7, Lines~6--7]{Dong2023-yh}.  For the parameter-change comparison, whenever this first-stage weight is used in the center-weight correction of \citet[Algorithm~7, Line~13]{Dong2023-yh}, we record the scaled coordinate
  \[
    r_Y^\varepsilon=(1+10\varepsilon)u_Y
  \]
  instead of the raw coordinate $u_Y$, changing coordinates in the recorded transcript.  Multiplying all first-stage weights by the same positive factor does not change the normalized second-stage sampling distribution, so the second-stage law can be expressed as a deterministic function of the recorded first-stage transcript.
  \item In the second stage, after the first-stage transcript is fixed, sample $N_2$ points independently with replacement from the normalized second-stage distribution and draw the perturbed second-stage weight coordinates as in \citet[Algorithm~7, Lines~10--12]{Dong2023-yh}.  With the scaled first-stage coordinates from Step~3, the center-weight correction in \citet[Algorithm~7, Line~13]{Dong2023-yh} can be written as
  \[
    w_Y(c)
    =
    \sum_{x\in D_c} r_Y^\varepsilon(x)
    -
    \sum_{x\in D^2\cap D_c} w_Y(x).
  \]
Hence this correction is a deterministic affine function of the recorded first-stage coordinates and the perturbed second-stage weights.  All random weight coordinates appearing in this affine formula are included in the transcript.
  \item Return the complete weighted coreset transcript $(D^2\cup C_Y^\star,w)$, as in \citet[Algorithm~7, Lines~14--16]{Dong2023-yh}.  For the stability statement below, we take the downstream solver to be exact deterministic minimization on the coreset with fixed tie-breaking, so the coreset-to-solution map is parameter-independent deterministic post-processing.
\end{enumerate}
The regret guarantee below is stated under access to this deterministic exact solver for the coreset objective.  If it is replaced by an approximation scheme, the solver accuracy must be incorporated into the offline approximation guarantee, or chosen small enough to be absorbed into the stated approximation factor.

Let $\mathcal C_\varepsilon(Y)$ denote the law of the complete Poissonized two-stage coreset transcript on a fixed clustering dataset $Y$ and accuracy parameter $\varepsilon$, and let $\mathcal L_\varepsilon(Y)$ denote the output law of the above procedure.
Let $C_{k,z} > 0$ be a sufficiently large constant depending only on $k$ and $z$ and set
\[
  \varphi(\varepsilon)
  =
  C_{k,z}\varepsilon^{-5z-16}
  \log\prn*{\mathrm e+\frac{T}{\varepsilon}}
  \asymp
  \frac{\mu(\varepsilon)}{\varepsilon}.
\]
The additional $\varepsilon^{-1}$ factor is the cost of comparing the perturbed reweighting coordinates.\footnote{
  In \citet[Appendix~C]{Dong2023-yh}, the corresponding clustering average-sensitivity
  bound is stated with an $\varepsilon^{-5z-15}$ dependence. However, their calculation does not charge the contribution of the random reweighting variables to the total variation. Charging these variables gives the additional $\varepsilon^{-1}$ factor above. Therefore, the larger $\varepsilon^{-5z-16}$ dependence comes from the correct analysis of the sampling law including the rescaling variables.
}

\begin{proposition}[Fixed-accuracy guarantee for clustering]\label[proposition]{prop:clustering-fixed-sensitivity}
For every fixed $\varepsilon\in(0,1]$, the Poissonized two-stage procedure above returns a $(1+\varepsilon)$-approximate solution in expectation, up to an additive $O(t/T)$ term on every prefix of size $t$, and its average sensitivity is at most $\varphi(\varepsilon)/(2t)$.
\end{proposition}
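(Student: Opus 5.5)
The claim has two parts: an approximation guarantee and an average-sensitivity bound. I would handle them separately, following the templates already used for Lewis-weight sampling (\cref{thm:avg-sensitivity-l1}) and for the hypergraph sparsifier (\cref{prop:submod-avg-sens}).

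\textbf{Approximation.} I would use the fixed-size coreset guarantee of \citet{Huang2020-cb}, in the perturbed form of \citet{Dong2023-yh}. For sample sizes at least $m_1(\varepsilon)$ and $m_2(\varepsilon)$, it yields a $(1\pm\varepsilon/3)$-coreset with failure probability $O(1/T)$. Since $\mu_j(\varepsilon)/2\ge m_j(\varepsilon)$, \cref{lem:fixed-to-poissonized}, applied stage by stage, transfers this to the Poissonized counts. The extra failure probability is $\mathrm e^{-\Omega(m_j)}=O(1/T)$. On the failure event the loss on the prefix is at most $t$, which contributes the additive $O(t/T)$ term. The weight perturbation changes the cost only by a $(1\pm O(\varepsilon))$ factor, exactly as in \cref{subsec:weight-perturbation}. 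Exact minimization on the coreset plus \cref{rem:accuracy-translation} then gives the $(1+\varepsilon)$ guarantee in expectation.

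\textbf{Sensitivity.} By data processing, it suffices to bound the average over $y\in Y$ of $\mathrm{TV}(\mathcal C_\varepsilon(Y),\mathcal C_\varepsilon(Y\setminus y))$. I would decompose the transcript sequentially.
\begin{enumerate}
\item The $D^z$-sampling preliminary centers $C^\star_Y$. Its average sensitivity is $O_{k,z}(1/t)$ by the analysis in \citet[Appendix~C]{Dong2023-yh}.
\item Conditional on a common $C^\star$, the first-stage transcript. Since the Poisson means are equal, \cref{lem:poissonized-transcript-tv} reduces this to $\E[N_1]$ times the single-draw TV. By \cref{lem:tv-decomp,lem:uniform-tv}, the single-draw TV is $O(\varepsilon^{-1}\|p_{1,Y}-p_{1,Y\setminus y}\|_1)$.
\item Conditional on a common first-stage transcript, the second stage. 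The bound is the same, with $\mu_2$ in place of $\mu_1$.
\end{enumerate}
Chaining these with \cref{lem:tv-decomp} bounds the total by
\[
O\Bigl(\tfrac1t+\tfrac{\mu(\varepsilon)}{\varepsilon}\cdot\tfrac1t\sum_y\bigl(\|\Delta p_1\|_1+\|\Delta p_2\|_1\bigr)\Bigr).
\]

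\textbf{Main obstacle.} The key remaining step is an averaged $\ell_1$ stability bound $\frac1t\sum_{y}\|p_{1,Y}-p_{1,Y\setminus y}\|_1=O_{k,z}(1/t)$, and likewise for $p_2$. This plays the role that \cref{lem:avg-L1-lewis} plays for $\ell_1$ regression. My first attempt would be to reuse the normalized-score argument from \cref{lem:l1change}. Deleting $y$ removes the mass $p_{1,Y}(y)$. It also rescales the other scores in $y$'s own cell $Y^{c}$ and changes the global normalization. Both effects have $\ell_1$ size proportional to $p_{1,Y}(y)$ plus the cell weight $|Y^{c}|^{-1}$ times $O(1)$. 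Summing over $y$ would give $O(k)$, since each cell's $|Y^c|^{-1}$ terms sum to $1$. The second stage needs the same argument conditional on a common first-stage transcript. There, the scaled coordinates $r^\varepsilon_Y$ make the second-stage law a deterministic function of that transcript, so deleting $y$ affects it only through $y$'s own sampled occurrences. That contribution is already charged in step~2.

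\textbf{Conclusion.} Absorbing constants, the average sensitivity is $O_{k,z}(\mu(\varepsilon)/(\varepsilon t))\le\varphi(\varepsilon)/(2t)$ once $C_{k,z}$ is taken large enough.
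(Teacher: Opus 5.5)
Your proposal is correct. The approximation half matches the paper's argument: condition on $N_j\ge\mu_j(\varepsilon)/2$, apply the fixed-count guarantee with the realized sizes, and charge the lower-tail and coreset-failure events as $O(t/T)$. One remark is loose. You say the perturbation acts ``exactly as'' in \cref{subsec:weight-perturbation}, but that is not literally true, because the center weights $w_Y(c)$ are differences of perturbed quantities. Nothing hinges on it, since you invoke the perturbed fixed-count guarantee of \citet{Dong2023-yh}, as the paper does.

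For the sensitivity half the routes differ. The paper treats the fixed-count estimate as a black box. It quotes $O_{k,z}((m_1+m_2)/(\varepsilon t))$ for the full perturbed transcript from \citet[Theorem~C.1]{Dong2023-yh}, applies data processing, and averages over the Poisson counts.

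You instead rebuild that estimate from the paper's own toolkit, in parallel with the Lewis-weight analysis:
\begin{itemize}
\item Seeding costs $O(k/t)$.
\item Given a common $C^\star_Y$, the first stage costs $O(\mu_1(\varepsilon)/\varepsilon)$ times the averaged $\ell_1$ change of $p_{1,Y}$.
\item Your analogue of \cref{lem:avg-L1-lewis} combines the cost-normalization change with the $1/|Y^{c}|$ rescaling inside $y$'s cell, and these sum to $O(k)$ over $y$. This is sound, including when $y$ empties its cell.
\item Given the first-stage transcript, the second stage is the same Markov kernel under $Y$ and $Y\setminus y$, so it contributes nothing. Data processing suffices there, and only $\mu_1$ matters.
\end{itemize}

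Your route buys a self-contained derivation. In particular, it derives the extra $\varepsilon^{-1}$ from the reweighting perturbation via \cref{lem:uniform-tv}. By the paper's own footnote, this is exactly the factor that the stated bound of \citet{Dong2023-yh} does not charge, and the paper leaves that correction implicit in its citation. The paper's route buys brevity and avoids the edge cases of the $\ell_1$ computation and the seeding step.
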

\begin{proof}
First consider the approximation guarantee.  Conditional on $N_j=m_j\ge\mu_j(\varepsilon)/2$ for~$j=1,2$, the deterministic-count coreset guarantee underlying \citet[Algorithm~7 and Theorem~C.1]{Dong2023-yh} is applied with realized stage sizes $m_1,m_2$.  The Poisson means are specified so that these realized sizes exceed the deterministic thresholds required by the two-stage coreset construction of \citet{Huang2020-cb}, with failure probability $O(1/T)$.  Since $N_j\sim\mathrm{Pois}(\mu_j(\varepsilon))$, the probability of $N_j<\mu_j(\varepsilon)/2$ is exponentially small in $\mu_j(\varepsilon)$ and is absorbed into the $O(1/T)$ failure probability.  On the success event, the coreset approximation and deterministic downstream solver give a $(1+\varepsilon)$-approximate solution;
as in \cref{rem:accuracy-translation}, any constant-factor rescaling between the internal coreset accuracy and the target accuracy is absorbed into the constants.
On the failure event, the prefix loss is at most $t$, and thus its contribution to the expected prefix loss is $O(t/T)$.

For average sensitivity, we use the deterministic-count two-stage sensitivity estimate established by \citet[Theorem~C.1]{Dong2023-yh} for the same perturbed coreset construction.  Their proof bounds the two importance-sampling stages separately and includes the perturbed weights from \citet[Algorithm~7, Lines~6 and~11]{Dong2023-yh}.  Thus, for fixed stage sizes $m_1,m_2$ and fixed $\varepsilon$, the average-deletion total variation of the complete perturbed coreset transcript is bounded by
\[
  O_{k,z}\prn*{\frac{m_1+m_2}{\varepsilon t}},
\]
where $O_{k,z}(\cdot)$ hides constants depending only on $k$ and $z$.  The final center computation is deterministic post-processing and therefore does not increase the total variation.  Taking expectation over the independent Poisson counts yields
\[
  \frac1t\sum_{y\in Y}
  \mathrm{TV}\prn*{\mathcal L_\varepsilon(Y),\mathcal L_\varepsilon(Y\setminus y)}
  =
  O_{k,z}\prn*{\frac{\mu(\varepsilon)}{\varepsilon t}}
  \lesssim
  \frac{\varphi(\varepsilon)}{2t},
\]
where the last inequality follows by choosing the constant in $C_{k,z}$ sufficiently large.
\end{proof}
We then verify the parameter-change stability in \cref{assump:adaptive-pc-stability}.
The only parameter-dependent random coordinates in the transcript are the Poisson counts and the one-dimensional perturbation variables. The discrete sampling distributions, after conditioning on the previous-stage transcript, are parameter-independent under our transcript representation.
Formalizing this idea gives the following result.\looseness=-1
\begin{proposition}[Parameter-change stability for clustering]\label[proposition]{prop:clustering-same-input-pc}
  For $0<\varepsilon\le\eta\le1$, it holds that
\[
  \mathrm{TV}\prn*{\mathcal L_\varepsilon(Y),\mathcal L_\eta(Y)}
  \le
  \mathrm{TV}\prn*{\mathcal C_\varepsilon(Y),\mathcal C_\eta(Y)}
  =
  O_{k,z}\prn*{\varphi(\varepsilon)(\eta-\varepsilon)}.
\]
Consequently, the Poissonized clustering coreset family satisfies \cref{assump:adaptive-pc-stability} with $C_{\rm pc}=O_{k,z}(1)$.
\end{proposition}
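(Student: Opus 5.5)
The plan is the same two-step template used for \cref{prop:submod-same-hypergraph-pc} and \cref{prop:l1-same-input-pc}. First, bound the same-input total variation between the complete transcripts $\mathcal C_\varepsilon(Y)$ and $\mathcal C_\eta(Y)$. Second, convert that bound into \cref{assump:adaptive-pc-stability} using \cref{lem:adaptive-rule-calculus}.

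The first inequality, $\mathrm{TV}(\mathcal L_\varepsilon(Y),\mathcal L_\eta(Y))\le\mathrm{TV}(\mathcal C_\varepsilon(Y),\mathcal C_\eta(Y))$, is data processing. The exact deterministic solver with fixed tie-breaking is a parameter-independent map from the transcript to the output.

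For the transcript bound, decompose the transcript sequentially and couple stage by stage.
\begin{enumerate}
\item The preliminary center set $C_Y^\star$ and the first-stage distribution $p_{1,Y}$ do not depend on the accuracy parameter. Couple them identically, at zero cost.
\item The first-stage count $N_1$ contributes at most $|\mu_1(\varepsilon)-\mu_1(\eta)|$ by \cref{lem:poisson-tv}. By \eqref{eq:power-log-mean-lipschitz} with $a=5z+15$, this is $O(\mu_1(\varepsilon)(\eta-\varepsilon)/\varepsilon)$.
\item On a common count $m_1$, the sampled indices have identical laws. Each recorded first-stage coordinate $r_Y^\varepsilon=(1+10\varepsilon)u_Y$ is the image of a uniform perturbation whose interval endpoints and width scale with $\varepsilon$. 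I expect \cref{lem:uniform-tv-changing-width}, possibly after an affine reparametrization, to give $O((\eta-\varepsilon)/\varepsilon)$ per coordinate.
\item By \cref{lem:product-tv-general} and \cref{lem:poissonized-transcript-tv}, the first stage therefore costs $O(\mu_1(\varepsilon)(\eta-\varepsilon)/\varepsilon)$.
\item On the event that the first-stage transcripts coincide, the normalized second-stage distribution is a deterministic function of the recorded coordinates. Scaling by a common factor cancels in the normalization, so this distribution is identical under $\varepsilon$ and $\eta$.
\item The second stage then repeats the same Poisson-plus-perturbation argument with $\mu_2$, costing $O(\mu_2(\varepsilon)(\eta-\varepsilon)/\varepsilon)$.
\item The center-weight correction $w_Y(c)$ is an affine deterministic function of the recorded coordinates, so it adds nothing.
\end{enumerate}
Chaining the stages via \cref{lem:tv-decomp}, with the first-stage transcript as the conditioning coordinate, gives
\[
\mathrm{TV}\bigl(\mathcal C_\varepsilon(Y),\mathcal C_\eta(Y)\bigr)=O_{k,z}\Bigl(\mu(\varepsilon)\frac{\eta-\varepsilon}{\varepsilon}\Bigr)=O_{k,z}\bigl(\varphi(\varepsilon)(\eta-\varepsilon)\bigr),
\]
since $\varphi(\varepsilon)\asymp\mu(\varepsilon)/\varepsilon$.

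For the consequence, fix $X$, $x\in X$, and set $Y=X\setminus x$, $\varepsilon=\varepsilon_X$, $\eta=\varepsilon_Y$. Because losses are non-negative and bounded by one, $0\le\OPT(X)-\OPT(Y)\le1$, so $\varepsilon_X\le\varepsilon_Y$ by \cref{lem:concave-conjugate-prop}. The function $\varphi$ has the form in \cref{cor:simple-form} with $q=5z+16$ and $C_2=T$. Applying \cref{lem:adaptive-rule-calculus} with $u=\OPT(X)+\lambda$ and $v=\OPT(Y)+\lambda$ gives
\[
\varphi(\varepsilon_X)(\varepsilon_Y-\varepsilon_X)=O\bigl(\varepsilon_X^2(\OPT(X)-\OPT(Y))\bigr)\le O\bigl(\varepsilon_X(\OPT(X)-\OPT(Y))\bigr).
\]
Averaging over $x\in X$ yields \cref{assump:adaptive-pc-stability} with $C_{\rm pc}=O_{k,z}(1)$.

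The main obstacle is the perturbation coordinates. I must check that in \citet[Algorithm~7]{Dong2023-yh}, each $u_Y$ and each second-stage weight is a uniform variable on an interval of the form $[B,(1+c\varepsilon)B]$, with $B$ independent of $\varepsilon$ given the earlier transcript. Only then does \cref{lem:uniform-tv-changing-width} apply, and only then is the extra scaling $(1+10\varepsilon)$ a fixed $O((\eta-\varepsilon)/\varepsilon)$-Lipschitz reparametrization rather than a leak of $\varepsilon$ into the second-stage distribution. If the intervals instead depend on $\varepsilon$ through the base point, the fallback is \cref{lem:uniform-tv}, with the base-point ratio change bounded by $O(\eta-\varepsilon)$.
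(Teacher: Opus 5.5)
Your proposal is correct and follows essentially the same route as the paper. Both arguments use data processing for the deterministic solver, and couple identically the $\varepsilon$-independent preliminary transcript and the normalized sampling laws. Each stage is charged for its Poisson count via \eqref{eq:power-log-mean-lipschitz}, and each scalar perturbation coordinate is bounded with \cref{lem:uniform-tv-changing-width}, treating the $(1+10\alpha)$ rescaling as a relative $1+O(\eta-\varepsilon)$ change of the interval endpoints, which is exactly your fallback. The stages are then chained by conditional decomposition, and the result is converted to \cref{assump:adaptive-pc-stability} through \cref{lem:adaptive-rule-calculus} with $q=5z+16$ and $C_2=T$. The paper resolves the obstacle you flag with precisely the structural observation you intend to check: given the common previous recorded transcript, the sampling laws do not depend on the accuracy parameter, and only the Poisson means and the one-dimensional perturbation coordinates do.
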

\begin{proof}
The first inequality follows from the data processing inequality, since the downstream center solver is deterministic post-processing.
We compare the two complete transcripts on the same input~$Y$.  The Step~1 preliminary transcript is generated independently of the target accuracy parameter: it consists of the $D^z$-sampling randomness, the preliminary center set $C_Y^\star$, the nearest-center map, and the induced cell decomposition, as in \citet[Algorithm~7, Lines~2--3]{Dong2023-yh}.
The first-stage scores $\sigma_{1,Y}$ and the corresponding normalized sampling law
$
  p_{1,Y}(y)={\sigma_{1,Y}(y)}/{\sum_{y'\in Y}\sigma_{1,Y}(y')}
$
are formed as in \citet[Algorithm~7, Lines~4--5]{Dong2023-yh}.  Although the unnormalized score formula in \citet[Algorithm~7, Line~4]{Dong2023-yh} contains a global factor depending on the target accuracy, this factor is common to all $y\in Y$ and therefore cancels in the normalized law $p_{1,Y}$.  Thus the preliminary transcript and the first-stage discrete sampling law can be coupled identically under $\varepsilon$ and $\eta$; the target-accuracy-dependent sample sizes and perturbation intervals enter only after this law has been fixed, starting from \citet[Algorithm~7, Lines~5--7]{Dong2023-yh}.

For each stage $j\in\{1,2\}$, the mean function $\mu_j$ has the form covered by \eqref{eq:power-log-mean-lipschitz}: the first stage has exponent $5z+15$, while the second stage has exponent $2z+2$. Therefore, we have
\[
  \abs*{\mu_j(\varepsilon)-\mu_j(\eta)}
  =
  O_{k,z}\prn*{\mu_j(\varepsilon)\frac{\eta-\varepsilon}{\varepsilon}}.
\]
By \cref{lem:poissonized-transcript-tv}, the total variation distance between the two Poisson count laws in stage $j$ is bounded by the right-hand side.

After separating the contribution from the Poisson count laws in Step~2, we compare the remaining conditional laws by an explicit coupling.
Fix a stage $j\in\{1,2\}$.  By \cref{lem:tv-decomp}, the total variation distance between the joint transcript laws is bounded by the total variation distance between the previous-stage transcript laws plus the expected total variation distance between the current-stage conditional laws given a common previous-stage transcript.  Thus, when analyzing stage $j$, we condition on a common recorded transcript from the previous stages.  For $j=1$, this previous transcript is the preliminary transcript from Step~1.  For $j=2$, it also contains the first-stage sampled occurrences and their recorded reweighting coordinates from Step~3.  We then condition further on a common count $m$ in the current stage.  Under this conditioning, the current-stage discrete sampling distribution is the same under $\varepsilon$ and $\eta$, so we couple the $m$ sampled occurrences identically.  Consequently, the sampled occurrences themselves contribute no additional total variation distance, and it remains only to compare the conditional laws of the recorded scalar reweighting coordinates introduced in Steps~3--4.\looseness=-1

The remaining parameter-dependent randomness consists of reweighting coordinates attached to these coupled sampled occurrences.  A first-stage occurrence contributes its perturbed first-stage weight, which we record as the scaled coordinate $r_Y^\alpha=(1+10\alpha)u_Y^\alpha$ from Step~3, where $\alpha\in\{\varepsilon,\eta\}$; a second-stage occurrence contributes its perturbed second-stage weight from Step~4.
Note that the center-weight correction in Step~4 introduces no additional randomness: with the scaled first-stage coordinates recorded, it is the parameter-independent affine function
\[
  w_Y(c)
  =
  \sum_{x\in D_c} r_Y^\alpha(x)
  -
  \sum_{x\in D^2\cap D_c} w_Y(x).
\]
Thus, each sampled occurrence is associated with only $O_{k,z}(1)$ one-dimensional coordinates whose laws depend on the accuracy parameter.  Moreover, once the scaled first-stage coordinates are fixed, the corresponding raw first-stage weights under $\varepsilon$ and $\eta$ are obtained by multiplying the same recorded coordinates by $1/(1+10\varepsilon)$ and $1/(1+10\eta)$, respectively.  This is a common positive rescaling across all first-stage weights in the normalized second-stage scores;
therefore, the conditional second-stage sampling distribution is unchanged.

We now bound the total variation distance arising from these recorded coordinates.  Coordinates with zero base scale are deterministic under both parameters and may be ignored.  Consider a recorded coordinate with positive base scale.  Under accuracy parameter $\alpha\in\{\varepsilon,\eta\}$, its law is obtained from a one-dimensional uniform perturbation interval, followed by a deterministic coordinate map.  For the second-stage weights, the base scale is fixed under the conditioning above and only the perturbation width changes from order $\varepsilon$ to order $\eta$.  For the scaled first-stage coordinate $r_Y^\alpha$, the deterministic multiplier $1+10\alpha$ changes the interval endpoints by a relative $1+O_{k,z}(\eta-\varepsilon)$ factor, while the perturbation width remains of order $\alpha$.  Therefore, \cref{lem:uniform-tv-changing-width}, together with the data processing inequality through the deterministic coordinate maps, implies that the total variation distance between the two conditional laws of each recorded coordinate is at most
\[
  O_{k,z}\prn*{\frac{\eta-\varepsilon}{\varepsilon}}.
\]
Since each coupled sampled occurrence carries only $O_{k,z}(1)$ recorded coordinates, \cref{lem:product-tv-general} implies that the total variation distance between the conditional laws of all recorded coordinates associated with one sampled occurrence is also $O_{k,z}\prn*{\frac{\eta-\varepsilon}{\varepsilon}}$.
Applying \cref{lem:product-tv-general} again over the~$m$ coupled sampled occurrences, the total variation distance between the conditional laws of all current-stage recorded coordinates, given the common previous transcript, the common count, and the common sampled occurrences, is at most
\[
  O_{k,z}\prn*{m\frac{\eta-\varepsilon}{\varepsilon}}.
\]
Taking expectation with respect to $m\sim\mathrm{Pois}(\mu_j(\varepsilon))$ yields
\[
  O_{k,z}\prn*{\mu_j(\varepsilon)\frac{\eta-\varepsilon}{\varepsilon}}
\]
for the current-stage reweighting coordinates.  Adding the Poisson-count bound for the same stage, and summing the two stage-wise bounds using \cref{lem:tv-decomp}, we obtain
\[
  \mathrm{TV}\prn*{\mathcal C_\varepsilon(Y),\mathcal C_\eta(Y)}
  =
  O_{k,z}\prn*{\mu(\varepsilon)\frac{\eta-\varepsilon}{\varepsilon}}.
\]

It remains to verify \cref{assump:adaptive-pc-stability}.  Fix a dataset $X$ and $x\in X$, let $Y=X\setminus x$, and set $\varepsilon=\varepsilon_X$ and $\eta=\varepsilon_Y$.  Since the losses are non-negative and bounded by one, we have
\[
  0\le\OPT(X)-\OPT(Y)\le1,
  \qquad
  \varepsilon_X\le\varepsilon_Y.
\]
Using the above bound for fixed $Y$, the data processing inequality, and the chosen constant $C_{k,z}$, which makes $\mu(\varepsilon)/\varepsilon=O_{k,z}(\varphi(\varepsilon))$, we obtain
\[
  \mathrm{TV}\prn*{\mathcal L_{\varepsilon_X}(Y),\mathcal L_{\varepsilon_Y}(Y)}
  \le
  \mathrm{TV}\prn*{\mathcal C_{\varepsilon_X}(Y),\mathcal C_{\varepsilon_Y}(Y)}
  =
  O_{k,z}\prn*{\varphi(\varepsilon_X)(\varepsilon_Y-\varepsilon_X)}.
\]
By applying \cref{lem:adaptive-rule-calculus} with $u=\OPT(X)+\lambda$ and $v=\OPT(Y)+\lambda$, the right-hand side is $O_{k,z}(\varepsilon_X^2(\OPT(X)-\OPT(Y)))$, and this is at most $O_{k,z}(\varepsilon_X(\OPT(X)-\OPT(Y)))$ as $\varepsilon_X\le1$.  Averaging over $x\in X$ proves \cref{assump:adaptive-pc-stability}.
\end{proof}

\textbf{Regret guarantee.\;}
By \cref{prop:clustering-fixed-sensitivity,prop:clustering-same-input-pc}, the offline algorithm family satisfies \cref{assump:approx-avesen} with $\varphi=
  C_{k,z}\varepsilon^{-5z-16}
  \log\prn*{\mathrm e+\frac{T}{\varepsilon}}$ and \cref{assump:adaptive-pc-stability} with $C_{\rm pc}=O_{k,z}(1)$.  Applying \cref{cor:simple-form} with $\lambda=H_T^{-(q+1)/q}$, $q=5z+16$, $C_1=C_{k,z}$, $C_2=T$, and $C_{\rm pc}=O_{k,z}(1)$ yields
\[
  \Reg_T
  =
  \tilde O\prn*{
  C_{k,z}
  +
  C_{k,z}^{\tfrac{1}{5z+17}}
  \prn*{1+\OPT_T^{\tfrac{5z+16}{5z+17}}}
  }.
\]
The guarantee is under access to a deterministic exact solver for the coreset objective with fixed tie-breaking. If this solver is replaced by an approximation scheme, the regret statement should be interpreted with the corresponding approximation factor, or the solver accuracy must be chosen small enough and treated as part of the offline approximation guarantee.

\subsection{Online low-rank matrix approximation}\label[appendix]{sec:online-lrma}
We then discuss online low-rank matrix approximation, which is applied to, for example, recommendation systems and experimental design \citep{Warmuth2008-ol,Nie2016-bj}.

\textbf{Problem setup.\;}
At each round $t$, the learner plays an orthogonal matrix $Z_t\in\R^{d\times k}$, whose columns form an orthonormal basis of a $k$-dimensional subspace, and incurs the loss of
\[
  \ell\prn*{Z_t,a_t}
  =
  \norm{a_t-Z_tZ_t^\top a_t}_2^2,
\]
where $a_t\in\R^d$ is the $t$-th column vector.
Then, the learner receives $a_t$ as feedback.
We assume $\norm{a_t}_2^2\le1$ for every $t$, which ensures $\ell(Z,a_t)\in[0,1]$ for every feasible rank-$k$ projection subspace~$Z$.
For a prefix $Y=\{a_1,\ldots,a_m\}$, write $A_Y=[a_1,\ldots,a_m]\in\R^{d\times m}$, and define
\[
  \OPT(Y)
  =
  \min_{Z\in\R^{d\times k}:\,Z^\top Z=I_k}
  \sum_{a\in Y}\norm{a-ZZ^\top a}_2^2.
\]
We give the details of the low-rank approximation application based on ridge-leverage sampling, following the offline algorithm used by \citet{Dong2023-yh}.

\textbf{Poissonized ridge-leverage sampling procedure.\;}
Given a prefix matrix $A=[a_1,\ldots,a_m]\in\R^{d\times m}$ and an accuracy parameter $\varepsilon\in(0,1]$, the offline algorithm proceeds as follows.
\begin{enumerate}[leftmargin=*,itemsep=2pt]
  \item Compute the ridge leverage scores of the columns of $A$ with respect to the target rank $k$.
  Specifically, writing $A_k$ for a best rank-$k$ approximation to $A$ and
  $\lambda_k(A)=\norm{A-A_k}_F^2/k$, set
  \[
    \tau_i(A)
    =
    a_i^\top\prn*{AA^\top+\lambda_k(A)I_d}^{\dagger}a_i,
    \qquad i\in[m].
  \]
  If $\sum_i\tau_i(A)=0$, terminate and return a fixed deterministic fallback subspace.
  Otherwise define $p_i(A)=\tau_i(A)/\sum_j\tau_j(A)$.\footnote{
    The projection-cost-preserving sampling theorem of \citet{Cohen2017-th} uses probabilities
    proportional to these ridge leverage scores.  For the above ridge parameter, the total ridge
    leverage score satisfies $\sum_i \tau_i(A)=O(k)$; hence sampling from the normalized distribution
    $p_i(A)\propto \tau_i(A)$ only changes the required fixed-size sample count by constant factors.
  }
  This distribution depends on $A$ and $k$, but not on $\varepsilon$.
  \item Draw $N_\varepsilon\sim\mathrm{Pois}(\mu(\varepsilon))$, where
  \[
    \mu(\varepsilon)
    =
    c_{\rm lr}k\varepsilon^{-2}\log\prn*{\mathrm e+\frac{kT}{\varepsilon}}
  \]
  for a sufficiently large absolute constant $c_{\rm lr}\ge1$.
  Conditional on $N_\varepsilon=s$, sample $I_1,\ldots,I_s$ independently from $[m]$ according to $p(A)$.
  \item If $s=0$, return the fixed fallback subspace.  Otherwise, for each sampled index $I_h=i$, draw an independent perturbation
  \[
    \widetilde p_{h}
    \sim
    \mathrm{Unif}\prn*{[p_i(A),(1+\varepsilon/2)p_i(A)]}
  \]
  and insert the rescaled column $a_i/\sqrt{s\widetilde p_h}$ into matrix $B$.
  \item Return a minimizer of $Z\mapsto\norm{B-ZZ^\top B}_F^2$, equivalently the top-$k$ left singular subspace of $B$, using SVD and a fixed deterministic tie-breaking rule.
\end{enumerate}
The Poisson mean $\mu(\varepsilon)$ is so that $\mu(\varepsilon)/2$ dominates the fixed-size threshold in the projection-cost-preserving sampling theorem \citep{Cohen2017-th} with failure probability $O(1/T)$.  Conditional on $N_\varepsilon$ exceeding this threshold, the fixed-size theorem is applied with sample-size parameter $N_\varepsilon$ and gives, after absorbing the perturbation into constants, for all $Z\in\R^{d\times k}$ with $Z^\top Z=I_k$,
\[
  (1-\varepsilon)\norm{A-ZZ^\top A}_F^2
  \le
  \norm{B-ZZ^\top B}_F^2
  \le
  (1+\varepsilon)\norm{A-ZZ^\top A}_F^2.
\]
Thus, the deterministic SVD post-processing returns a $(1+\varepsilon)$-approximate $k$-dimensional subspace after adjusting constants;
as in \cref{rem:accuracy-translation}, the constant-factor translation from projection-cost-preservation accuracy to target offline accuracy is absorbed into the constants.
Also, the perturbation in Step~3 only changes each sampled column norm by a factor in $[1/\sqrt{1+\varepsilon/2},1]$ relative to the unperturbed rescaling, equivalently each squared projection-cost contribution by a factor in $[1/(1+\varepsilon/2),1]$; hence the two-sided projection-cost-preserving inequalities are preserved after the same constant rescaling of the target accuracy.

For a prefix $Y$, let $\mathcal R_\varepsilon(Y)$ be the law of the complete Poissonized ridge-leverage transcript constructed from $A_Y$, and let $\mathcal L_\varepsilon(Y)$ denote the output law after the deterministic SVD post-processing.  When the input matrix is denoted by $A$, we use the equivalent notation $\mathcal R_\varepsilon(A)$ and $\mathcal L_\varepsilon(A)$.
We also set
\[
  \varphi(\varepsilon)
  =
  C_{\rm lr}k\varepsilon^{-3}\log\prn*{\mathrm e+\frac{kT}{\varepsilon}}
  \asymp
  \frac{\mu(\varepsilon)}{\varepsilon},
\]
for a sufficiently large absolute constant $C_{\rm lr}\ge c_{\rm lr}$, where the additional $\varepsilon^{-1}$ factor comes from comparing the perturbed reweighting coordinates.\footnote{
  In \citet[Appendix~D]{Dong2023-yh}, the corresponding low-rank average-sensitivity
  bound is stated with an $\varepsilon^{-2}$ dependence. However, their calculation does not charge the contribution of the random reweighting variables to the total variation. Charging these variables gives the additional $\varepsilon^{-1}$ factor above. Therefore, the larger $\varepsilon^{-3}$ dependence comes from the correct analysis of the sampling law including the rescaling variables.
}

For later use, we present an average-sensitivity bound for fixed-size ridge-leverage sampling.
\begin{lemma}[Fixed-size ridge-leverage sensitivity]\label[lemma]{lem:fixed-size-ridge-leverage-sensitivity}
  Fix a prefix $Y$ with $\abs{Y}=t$ and a deterministic sample size $s\ge0$.  Let $P_{Y,s}^{\rm sel}$ be the law of the sequence of $s$ selected occurrences drawn from the normalized ridge-leverage distribution of $A_Y$.  For the input $Y\setminus a$, view the law $P_{Y\setminus a,s}^{\rm sel}$ on the same occurrence labels as $Y$ by assigning zero probability to the removed occurrence.  Then, we have
  \[
    \frac1t\sum_{a\in Y}
    \mathrm{TV}\prn*{
      P_{Y,s}^{\rm sel},
      P_{Y\setminus a,s}^{\rm sel}
    }
    =
    O\prn*{\frac{s}{t}}.
  \]
\end{lemma}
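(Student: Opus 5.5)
Since $P_{Y,s}^{\rm sel}$ is the $s$-fold product of the single-draw law $p(A_Y)$ on occurrence labels, \cref{lem:product-tv-general} gives
\[
\mathrm{TV}\prn*{P_{Y,s}^{\rm sel},P_{Y\setminus a,s}^{\rm sel}}\le s\,\mathrm{TV}\prn*{p(A_Y),p(A_{Y\setminus a})}=\frac{s}{2}\norm*{p(A_Y)-p(A_{Y\setminus a})}_1 .
\]
The claim therefore reduces to an averaged $\ell_1$ bound on the normalized ridge-leverage distributions, namely
\[
\frac1t\sum_{a\in Y}\norm*{p(A_Y)-p(A_{Y\setminus a})}_1=O\prn*{\frac1t}.
\]
This is the analogue of \cref{lem:avg-L1-lewis}. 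I will prove it by mimicking that proof. The degenerate case in which one of the two distributions is the fallback is handled as in \cref{lem:avg-L1-lewis}: in that case the $\ell_1$ distance is at most $2$, and the case must be charged to $\tau_a/\sum_j\tau_j$.

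\textbf{Plan for the $\ell_1$ bound.} Write $\tau=\tau(A_Y)$, $\tau^{(-a)}=\tau(A_{Y\setminus a})$ (extended by zero at $a$), $S=\sum_j\tau_j$ and $S^{(-a)}=\sum_j\tau^{(-a)}_j$. The proof of \cref{lem:avg-L1-lewis} used two facts.

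First, monotonicity: $\tau_j\le\tau^{(-a)}_j$ for $j\neq a$. If the ridge parameter $\lambda_k$ were fixed, this would follow from $A_YA_Y^\top\succeq A_{Y\setminus a}A_{Y\setminus a}^\top$. Here, however, $\lambda_k(A_{Y\setminus a})\le\lambda_k(A_Y)$, because deleting a column cannot increase the tail energy, and this moves in the opposite direction. I therefore expect monotonicity to fail exactly. This is the main obstacle.

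\textbf{Plan for the main obstacle.} I will not use exact monotonicity. Instead I bound
\[
\norm*{\frac{\tau}{S}-\frac{\tau^{(-a)}}{S^{(-a)}}}_1\le \frac{\tau_a}{S}+\frac{\sum_{j\ne a}\abs{\tau_j-\tau^{(-a)}_j}+\abs{S-\tau_a-S^{(-a)}}}{S},
\]
using the triangle inequality and $\abs{1/S-1/S^{(-a)}}\,S^{(-a)}=\abs{S-S^{(-a)}}/S$. The task then becomes showing $\sum_{j\ne a}\abs{\tau_j-\tau^{(-a)}_j}=O(\tau_a+\text{something averaging to }O(k/t))$, together with $S=\Theta(k)$ when nondegenerate. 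The latter holds because $\sum_i\tau_i=\mathrm{tr}(A A^\top(AA^\top+\lambda_k I)^\dagger)\in[k/2,2k]$ for $\lambda_k=\norm{A-A_k}_F^2/k$, which is standard.

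The change splits into two parts.

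(i) The rank-one update with the ridge parameter held fixed. By the Sherman--Morrison formula, this changes the scores of the other columns by a total amount $\le\tau_a/(1-\tau_a)$, where $\tau_a<1$ when $\lambda_k>0$. Since $\sum_a\tau_a=O(k)$ and $S=\Theta(k)$, the average over $a$ of $\tau_a/S$ is $O(1/t)$.

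(ii) The change of ridge parameter from $\lambda_k(A_Y)$ to $\lambda_k(A_{Y\setminus a})$. The ridge scores are monotone in $\lambda$, and $\sum_j\tau_j(\lambda)$ has derivative of magnitude at most $S/\lambda$, so this part contributes $O(k\,\delta_a/\lambda_k)$ with $\delta_a=\lambda_k(A_Y)-\lambda_k(A_{Y\setminus a})\ge0$. Averaging, $\sum_a\delta_a\le\lambda_k(A_Y)\cdot O(1)$: each column's removal reduces the tail energy by at most its own residual energy against a fixed optimal rank-$k$ subspace, and these residuals sum to $k\lambda_k$, which gives $\sum_a k\delta_a\le k\lambda_k$. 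Dividing by $S\lambda_k=\Theta(k\lambda_k)$ and by $t$ yields $O(1/t)$.

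The bound on $\sum_a\delta_a$ needs care and is the step I would check first. If it turned out delicate, a fallback is to use the one-sided monotonicity that does hold: since $\lambda$ decreases, both effects raise the scores of the other columns. In that case the proof of \cref{lem:avg-L1-lewis} goes through verbatim with $\tau$ in place of $w$, giving $\norm{\cdot}_1\le 2\tau_a/S+\abs{S^{(-a)}-(S-\tau_a)}/S$, and only the total-sum increase needs averaging, which is exactly part (ii).

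Combining the product bound with the averaged $\ell_1$ bound gives $O(s/t)$.
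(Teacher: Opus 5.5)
Your first step is the same as the paper's: \cref{lem:product-tv-general} reduces the claim to the averaged one-draw bound $\frac1t\sum_{a\in Y}\norm{p(A_Y)-p(A_{Y\setminus a})}_1=O(1/t)$. The paper does not prove this bound; it imports it from \citet[Lemma~5.2]{Dong2023-yh}. Your attempt to prove it breaks down, for the following reasons.

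\textbf{Monotonicity is not the obstacle.} Exact monotonicity does hold: deleting $a$ and lowering the ridge both only increase the other scores, as your fallback says. The real difference from \cref{lem:avg-L1-lewis} is that $S^{(-a)}$ can exceed $S$ (the Lewis total is a rank, which cannot grow). So what you need is $\sum_a(S^{(-a)}-S)_+=O(S)$.

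\textbf{Step (i).} The bound $\tau_a/(1-\tau_a)$ cannot be averaged, since $\tau_a\to1$ for a column alone in its direction when the tail is small. Hold the ridge at $\lambda$ and set $M=AA^\top+\lambda I$. You need the sharper identity $\sum_{j\ne a}(\tau^{(-a)}_j-\tau_j)=a^\top M^{-1}(AA^\top-aa^\top)M^{-1}a/(1-\tau_a)$. This is at most $\tau_a$ because $AA^\top\preceq M$.

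\textbf{Step (ii).} The tail-energy inequality is reversed. Let $P^\star$ be the optimal projection of $A_Y$ and use it as a feasible subspace for $A_{Y\setminus a}$. This shows that deleting $a$ lowers the tail energy by \emph{at least} $\norm{(I-P^\star)a}_2^2$. So these residuals give $\sum_a k\delta_a\ge k\lambda_k$, the opposite of what you need. The valid upper bound uses the optimal subspace of $A_{Y\setminus a}$, and those leave-one-out residuals need not sum to $O(k\lambda_k)$. Two further problems: $\abs{f'}\le f/\lambda$ at $\lambda=\lambda_k(A_Y)$ does not control the integral down to $\lambda_k(A_{Y\setminus a})$, which can be $0$; and $S\ge k/2$ fails when $\mathrm{rank}(A_Y)<k/2$.

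\textbf{The target itself is false.} No repair of (ii) can work, because the $k$-free one-draw bound does not hold. Take $k\ge2$ and $F\ge k^2$. Let $Y$ consist of $e_1,\dots,e_k$, one column $e_{k+1}/\sqrt2$, and $F$ columns $e_{k+1+j}/\sqrt{20F}$ for $j\in[F]$; all have norm at most $1$.
\begin{itemize}
\item For $Y$ itself, $\lambda_k(A_Y)=\frac{11}{20k}$, and the $F$ small columns carry total probability at most $1/8$.
\item Deleting any $e_i$ moves $e_{k+1}$ into the top-$k$ subspace. Then $\lambda_k$ drops to $\frac1{20k}$, and the small columns' total probability rises to at least $k/(2k+1)\ge2/5$.
\item In particular, $\sum_a\delta_a>0.9\,k\lambda_k(A_Y)$, so the averaging claim in (ii) fails here.
\end{itemize}
Each of these $k$ deletions therefore changes even a single draw by total variation more than $1/4$. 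Hence $\frac1t\sum_a\mathrm{TV}(P^{\rm sel}_{Y,s},P^{\rm sel}_{Y\setminus a,s})\ge k/(4t)$ for every $s\ge1$.

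\textbf{Consequences.} Your route can give at best $O(ks/t)$. The displayed bound with an absolute constant fails for $s\ll k$, and so does the one-draw $O(1/t)$ claim the paper relies on. The application only uses $s\sim\mathrm{Pois}(\mu(\varepsilon))$ with $\mu(\varepsilon)\gtrsim k$. So what is really needed is a bound that becomes $O(s/t)$ only for $s\gtrsim k$, for example of the form $O((k+s)/t)$. Proving such a bound requires handling separately the deletions that shrink $\lambda_k$ by a constant factor, which your argument does not do.
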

\begin{proof}
  This follows from the same one-draw total-variation calculation as in \citet[Lemma~5.2]{Dong2023-yh}: deleting a uniformly random column changes the normalized ridge-leverage distribution by $O(1/t)$ on average.  Applying \cref{lem:product-tv-general} to the product of the $s$ independent draws gives the stated $O(s/t)$ bound.
\end{proof}
We are ready to establish the approximation and average-sensitivity guarantees.
\begin{proposition}[Fixed-accuracy guarantee for low-rank approximation]\label[proposition]{prop:lra-fixed-sensitivity}
For every fixed $\varepsilon\in(0,1]$, the Poissonized perturbed ridge-leverage procedure returns a $(1+\varepsilon)$-approximate $k$-dimensional subspace in expectation, up to an additive $O(t/T)$ term on every prefix of size $t$, and its average sensitivity is at most $\varphi(\varepsilon)/(2t)$.
\end{proposition}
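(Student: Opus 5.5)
The proof splits into the approximation part and the average-sensitivity part, and follows the same template as \cref{prop:clustering-fixed-sensitivity}.

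\textbf{Approximation.} Let $m(\varepsilon)$ be the fixed-size threshold from the projection-cost-preserving theorem of \citet{Cohen2017-th} with failure probability $O(1/T)$. Since $\mu(\varepsilon)\ge 2m(\varepsilon)$, \cref{lem:fixed-to-poissonized} transfers the fixed-size guarantee to the Poissonized count. On the success event, the two-sided $(1\pm\varepsilon)$ projection-cost inequality holds uniformly over rank-$k$ projections; the perturbation in Step~3 only rescales each squared contribution by a factor in $[1/(1+\varepsilon/2),1]$, so the inequality survives after a constant rescaling of the accuracy. Exact SVD then gives a $(1+\varepsilon)$-approximate subspace by \cref{rem:accuracy-translation}. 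On the failure event, which has probability $O(1/T)$, the prefix loss is at most $t$ because $\norm{a}_2^2\le 1$. This contributes the additive $O(t/T)$ term.

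\textbf{Average sensitivity.} By data processing, it suffices to bound the total variation of the transcript law $\mathcal R_\varepsilon$. Fix a deleted column $a$. The Poisson mean is the same under $Y$ and $Y\setminus a$, so \cref{lem:poissonized-transcript-tv} reduces the bound to the expected, over $s\sim\mathrm{Pois}(\mu(\varepsilon))$, total variation between the count-$s$ transcripts; the $s=0$ transcripts are identical.
\begin{itemize}
\item For a fixed $s$, the transcript is a product of $s$ i.i.d.\ pairs (index, perturbation). For a single pair, apply \cref{lem:tv-decomp} with the index as the discrete coordinate. The index marginals differ by $\frac12\norm{p(A_Y)-p(A_{Y\setminus a})}_1$. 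The conditional weights are images of uniform intervals under $\widetilde p\mapsto a_i/\sqrt{s\widetilde p}$, so \cref{lem:uniform-tv} bounds their contribution by $O(\varepsilon^{-1})\abs{p_i(A_Y)-p_i(A_{Y\setminus a})}$. Exactly as in \eqref{eq:l1reg-tv-one-draw-final}, a single draw therefore costs $O(\varepsilon^{-1}\norm{p(A_Y)-p(A_{Y\setminus a})}_1)$.
\item \cref{lem:product-tv-general} multiplies this by $s$. Averaging over $a$, the one-draw estimate behind \cref{lem:fixed-size-ridge-leverage-sensitivity} (from \citet[Lemma~5.2]{Dong2023-yh}) gives an average $\ell_1$ change of $O(1/t)$.
\item Taking the expectation over $s$ yields $O(\mu(\varepsilon)/(\varepsilon t))$, which is at most $\varphi(\varepsilon)/(2t)$ once $C_{\rm lr}$ is chosen large.
\end{itemize}

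\textbf{Main obstacle.} The delicate step is the claim that the ridge-leverage distribution changes by $O(1/t)$ on average in $\ell_1$ under a random deletion. \cref{lem:fixed-size-ridge-leverage-sensitivity} only states the index-law conclusion, but its proof supplies exactly this $\ell_1$ bound. The difficulty is that, unlike Lewis weights, ridge scores are not monotone under deletion, because $\lambda_k$ changes. I would cite \citet[Lemma~5.2]{Dong2023-yh} for this bound, and handle the degenerate case where $\sum_i\tau_i(A_{Y\setminus a})=0$ via the fallback subspace, paying total variation at most $1\le\norm{p}_1$, exactly as in the $\ell_1$-regression case.
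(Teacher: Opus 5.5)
Your proposal is correct and follows essentially the same route as the paper. The paper likewise transfers the projection-cost-preserving guarantee to the Poissonized count and charges the failure event $O(t/T)$. For sensitivity it also pays $O(\varepsilon^{-1})$ per draw times the $\ell_1$ change of the ridge-leverage distribution (cited as $O(1/t)$ on average from \citet[Lemma~5.2]{Dong2023-yh}), then tensorizes over $s$ draws and averages over $s\sim\mathrm{Pois}(\mu(\varepsilon))$.

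One side remark in your obstacle paragraph is inaccurate, though nothing in the argument depends on it. Ridge scores are in fact monotone under deletion: removing a column gives $A'A'^\top\preceq AA^\top$ and can only decrease $\lambda_k$, so the remaining scores weakly increase. What differs from Lewis weights is only that $\sum_i\tau_i$ is not invariant under deletion.
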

\begin{proof}
The approximation guarantee follows from
the projection-cost-preserving theorem of ridge-leverage sampling \citep{Cohen2017-th}.
By the Poisson lower-tail bound, the event that the realized count is below half its mean has probability exponentially small in $\mu(\varepsilon)$ and is absorbed into the $O(1/T)$ failure probability; on the complementary event we apply the fixed-size theorem with the realized sample size.  The failure contribution to expected loss is $O(t/T)$ because the losses are bounded by one.

For average sensitivity, fix a prefix $Y$ with $\abs{Y}=t$ and fix a deterministic
sample size $s$.
Let $P_{Y,s}^{\rm sel}$ be the law of the sequence of selected
occurrences $(I_1,\ldots,I_s)$ generated in Step~2, as in \cref{lem:fixed-size-ridge-leverage-sensitivity}.
Let $P_{Y,s}^{\rm wt}$ denote the law of the random variables $(I_h,\widetilde p_h)_{h=1}^s$ generated in Steps~2--3.  The matrix $B$ and the final SVD output are deterministic functions of these variables.
By \cref{lem:fixed-size-ridge-leverage-sensitivity}, we have
\[
  \frac1t\sum_{a\in Y}
  \mathrm{TV}\prn*{
    P_{Y,s}^{\rm sel},
    P_{Y\setminus a,s}^{\rm sel}
  }
  =
  O\prn*{\frac{s}{t}}.
\]
Passing from $P^{\rm sel}$ to $P^{\rm wt}$ adds the random variables
$\widetilde p_h$, which determine the rescaling $a_{I_h}/\sqrt{s\widetilde p_h}$.
For each draw, this is controlled by the same conditional-decomposition argument
as in \cref{subsec:weight-perturbation}: after conditioning on the selected
occurrence, the comparison of the two one-dimensional uniform perturbations costs
an additional factor $O(1/\varepsilon)$, at the scale of the same change in sampling
probabilities. Thus, it holds that
\[
  \frac1t\sum_{a\in Y}
  \mathrm{TV}\prn*{
    P_{Y,s}^{\rm wt},
    P_{Y\setminus a,s}^{\rm wt}
  }
  =
  O\prn*{\frac{s}{\varepsilon t}}.
\]
Since the deterministic SVD post-processing step cannot increase the total variation distance, averaging over $s=N_\varepsilon\sim\mathrm{Pois}(\mu(\varepsilon))$ gives
\[
  \frac1t\sum_{a\in Y}
  \mathrm{TV}\prn*{\mathcal L_\varepsilon(Y),\mathcal L_\varepsilon(Y\setminus a)}
  =
  O\prn*{\frac{\mu(\varepsilon)}{\varepsilon t}}
  \lesssim
  \frac{\varphi(\varepsilon)}{2t},
\]
obtaining the desired average sensitivity bound.
\end{proof}
We then establish the parameter-change stability guarantee.
\begin{proposition}[Parameter-change stability for low-rank approximation]\label[proposition]{prop:lr-same-input-pc}
For $0<\varepsilon\le\eta\le1$, it holds that
\[
  \mathrm{TV}\prn*{\mathcal L_\varepsilon(A),\mathcal L_\eta(A)}
  \le
  \mathrm{TV}\prn*{\mathcal R_\varepsilon(A),\mathcal R_\eta(A)}
  =
  O\prn*{\mu(\varepsilon)\frac{\eta-\varepsilon}{\varepsilon}}.
\]
Consequently, the Poissonized ridge-leverage procedure satisfies \cref{assump:adaptive-pc-stability} with $C_{\rm pc}=O(1)$.
\end{proposition}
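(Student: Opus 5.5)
The plan mirrors the proof of \cref{prop:l1-same-input-pc}, because the low-rank transcript has the same structure as the Lewis-weight transcript. The first inequality is the data processing inequality: the SVD step with fixed tie-breaking is a deterministic, parameter-independent map from the transcript to the output subspace. For the transcript bound, note that the ridge leverage distribution $p(A)$ in Step~1 depends only on $A$ and $k$, not on the accuracy parameter. If $\sum_i\tau_i(A)=0$, both laws equal the same deterministic fallback and there is nothing to prove. Otherwise, changing $\varepsilon$ to $\eta$ affects the transcript $(N,(I_h,\widetilde p_h)_{h\le N})$ only through two things: the Poisson mean $\mu(\cdot)$, and the width of the perturbation interval for each $\widetilde p_h$.

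First, I bound the count marginals. The function $\mu(\alpha)=c_{\rm lr}k\alpha^{-2}\log(\mathrm e+kT/\alpha)$ has the form in \eqref{eq:power-log-mean-lipschitz} with $a=2$. Hence
\[
|\mu(\varepsilon)-\mu(\eta)|\le 3\mu(\varepsilon)(\eta-\varepsilon)/\varepsilon .
\]
Second, I condition on a common count $s\ge1$ and couple the index draws $I_1,\dots,I_s\sim p(A)$ identically. For each $h$ with $I_h=i$, the conditional laws of $\widetilde p_h$ are $\mathrm{Unif}([p_i(A),(1+\varepsilon/2)p_i(A)])$ and $\mathrm{Unif}([p_i(A),(1+\eta/2)p_i(A)])$. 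These share a left endpoint, so \cref{lem:uniform-tv-changing-width} with $B=B'$ bounds their distance by $O((\eta-\varepsilon)/\varepsilon)$. The deterministic rescaling $a_i/\sqrt{s\widetilde p_h}$ does not increase this. By \cref{lem:tv-decomp} with the indices as the discrete coordinate, followed by \cref{lem:product-tv-general} over the $s$ draws, the count-$s$ transcripts differ by $O(s(\eta-\varepsilon)/\varepsilon)$. The count-$0$ transcript is common to both. \Cref{lem:poissonized-transcript-tv} then gives
\[
\mathrm{TV}(\mathcal R_\varepsilon(A),\mathcal R_\eta(A))\le|\mu(\varepsilon)-\mu(\eta)|+\E_{N\sim\mathrm{Pois}(\mu(\varepsilon))}[O(N(\eta-\varepsilon)/\varepsilon)]=O(\mu(\varepsilon)(\eta-\varepsilon)/\varepsilon).
\]

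To verify \cref{assump:adaptive-pc-stability}, fix $X$ and $x\in X$, set $Y=X\setminus x$, $\varepsilon=\varepsilon_X$, $\eta=\varepsilon_Y$. The losses lie in $[0,1]$, so $0\le\OPT(X)-\OPT(Y)\le1$, and monotonicity of $\varepsilon_{\min}$ (\cref{lem:concave-conjugate-prop}) gives $\varepsilon_X\le\varepsilon_Y$. Since $\mu(\varepsilon)/\varepsilon\asymp\varphi(\varepsilon)$ by the choice of $C_{\rm lr}$, the bound above on input $Y$ becomes
\[
\mathrm{TV}(\mathcal L_{\varepsilon_X}(Y),\mathcal L_{\varepsilon_Y}(Y))=O(\varphi(\varepsilon_X)(\varepsilon_Y-\varepsilon_X)).
\]
Now $\varphi$ has the form in \cref{cor:simple-form} with $q=3$, $C_1=C_{\rm lr}k$, $C_2=kT$. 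So \cref{lem:adaptive-rule-calculus}, applied with $u=\OPT(X)+\lambda$, $v=\OPT(Y)+\lambda$ (and $u-v\le1$), bounds this by $O(\varepsilon_X^2(\OPT(X)-\OPT(Y)))\le O(\varepsilon_X(\OPT(X)-\OPT(Y)))$. Averaging over $x\in X$ gives the assumption with $C_{\rm pc}=O(1)$.

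The only step needing care is the coupling in the second step. One must check that no $\varepsilon$-dependence enters through the $1/\sqrt{s\widetilde p_h}$ scaling beyond $\widetilde p_h$ itself. Conditioning on the common count $s$ and on the common $I_h$ makes this map identical under both parameters, so data processing applies cleanly. The remaining steps are routine.
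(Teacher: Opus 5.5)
Your proposal is correct and matches the paper's proof essentially step for step. The shared steps are: data processing for the SVD step, and \eqref{eq:power-log-mean-lipschitz} with $a=2$ for the Poisson means. Then, given a common count, you couple the indices identically, apply \cref{lem:uniform-tv-changing-width} to the perturbation widths, apply \cref{lem:product-tv-general} over the $s$ draws, and combine with \cref{lem:poissonized-transcript-tv}. Finally, $\mu(\varepsilon)/\varepsilon=O(\varphi(\varepsilon))$ together with \cref{lem:adaptive-rule-calculus} yields \cref{assump:adaptive-pc-stability}. Your explicit treatment of the degenerate case $\sum_i\tau_i(A)=0$ is a small detail the paper leaves implicit.
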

\begin{proof}
For a fixed input matrix $A$, the ridge-leverage distribution $p(A)$ is independent of the accuracy parameter; only the Poisson mean and the perturbation intervals depend on the parameter.
The mean function $\mu$ has the form covered by \eqref{eq:power-log-mean-lipschitz} with $a=2$ and $B=kT$, hence
\[
  \abs*{\mu(\varepsilon)-\mu(\eta)}
  =
  O\prn*{\mu(\varepsilon)\frac{\eta-\varepsilon}{\varepsilon}}.
\]
By \cref{lem:poissonized-transcript-tv}, the change in the Poisson count contributes this amount.

Conditional on a common count $s$ and common sampled indices, the remaining accuracy parameter dependence is in the independent perturbations $\widetilde p_h$.
For each sampled index $i_h$, applying \cref{lem:uniform-tv-changing-width} to the two laws of
$\widetilde p_h\sim\mathrm{Unif}([p_{i_h}(A),(1+\alpha/2)p_{i_h}(A)])$, with
$\alpha=\varepsilon,\eta$, yields an
$O((\eta-\varepsilon)/\varepsilon)$ bound on their total variation distance.
Since the rescaled column is a deterministic measurable function of $\widetilde p_h$, the same bound holds after rescaling by the data processing inequality.
Applying \cref{lem:product-tv-general} over the $s$ sampled indices, the total variation distance between the conditional laws of the remaining reweighting part of the transcript, given the common count and sampled indices, is
\[
  O\prn*{s\frac{\eta-\varepsilon}{\varepsilon}}.
\]
Taking expectation over $s\sim\mathrm{Pois}(\mu(\varepsilon))$, adding the Poisson-count contribution above, and using the data processing inequality for the deterministic SVD post-processing yield
\[
  \mathrm{TV}\prn*{\mathcal L_\varepsilon(A),\mathcal L_\eta(A)}
  \le
  \mathrm{TV}\prn*{\mathcal R_\varepsilon(A),\mathcal R_\eta(A)}
  =
  O\prn*{\mu(\varepsilon)\frac{\eta-\varepsilon}{\varepsilon}}.
\]

For \cref{assump:adaptive-pc-stability}, fix a dataset $X$ and $x\in X$, let $Y=X\setminus x$, and set $\varepsilon=\varepsilon_X$ and $\eta=\varepsilon_Y$.
Since the losses are bounded and non-negative, we have $0\le\OPT(X)-\OPT(Y)\le1$ and $\varepsilon_X\le\varepsilon_Y$.
By using the above total variation bound for fixed $Y$ and the chosen constant $C_{\rm lr}$, which makes $\mu(\varepsilon)/\varepsilon=O(\varphi(\varepsilon))$, we obtain
\[
  \mathrm{TV}\prn*{\mathcal L_{\varepsilon_X}(Y),\mathcal L_{\varepsilon_Y}(Y)}
  =
  O\prn*{\varphi(\varepsilon_X)(\varepsilon_Y-\varepsilon_X)}.
\]
By \cref{lem:adaptive-rule-calculus}, the right-hand side is $O(\varepsilon_X^2(\OPT(X)-\OPT(Y)))$;
since $\varepsilon_X\le1$, this is at most $O(\varepsilon_X(\OPT(X)-\OPT(Y)))$.  Averaging over $x\in X$ yields the condition with $C_{\rm pc}=O(1)$.
\end{proof}

\textbf{Regret guarantee.\;}
By \cref{prop:lra-fixed-sensitivity,prop:lr-same-input-pc}, the low-rank approximation offline algorithm family satisfies \cref{assump:approx-avesen} with $\varphi(\varepsilon)=C_{\rm lr}k\varepsilon^{-3}\log(\mathrm e+kT/\varepsilon)$ and \cref{assump:adaptive-pc-stability} with $C_{\rm pc}=O(1)$.
Thus, applying \cref{cor:simple-form} with $\lambda=H_T^{-(q+1)/q}$, $q=3$, $C_1=C_{\rm lr}k$, $C_2=kT$, and $C_{\rm pc}=O(1)$ gives
\[
  \Reg_T
  =
  \tilde O\prn*{
  k+
  k^{1/4}\prn*{1+\OPT_T^{3/4}}
  }.
\]

\section{\texorpdfstring{$\Omega(\sqrt{n\OPT_T})$ lower bound for online submodular function minimization}{Ω(√nOPT\_T) lower bound for online submodular function minimization}}\label[appendix]{asec:proof-sqrt-lower-bound}
We present the formal statement and proof of the $\Omega(\sqrt{n\OPT_T})$ lower bound for online submodular function minimization mentioned in \cref{sec:conclusion}.
Below, for a fixed multiset $M$ of $T$ submodular loss functions, write
\[
  \Reg_T(M)
  \coloneqq
  \E_{\pi,\mathsf{Alg}}\!\left[\sum_{t=1}^T \ell_{\pi(t)}(S_t)\right]
  -
  \OPT(M),
  \qquad
  \text{where}
  \qquad
  \OPT(M)\coloneqq \min_{S\subseteq V}\sum_{\ell_t\in M}\ell_t(S).
\]
\begin{restatable}{theorem}{sqrtlowerboundtheorem}\label{thm:sqrt-lower-bound}
  There exists a universal constant $c>0$ such that, for every randomized online learner for online submodular function minimization on a ground set of size $n\le T$, there is a fixed multiset~$M$ of $T$ submodular losses such that, when $M$ is revealed in a uniformly random order, it holds that\looseness=-1
\[
  \Reg_T(M)\ge c\sqrt{n\OPT(M)}.
\]
\end{restatable}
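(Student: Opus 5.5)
We reduce to a product of $n$ independent one-coordinate learning problems and use a probabilistic-method lower bound. Write $V=[n]$ and take modular losses: each loss has the form $\ell(S)=\ones\{i\in S\}$ or $\ell(S)=\ones\{i\notin S\}$ for a single coordinate $i$. These are submodular (indeed modular) with values in $[0,1]$. Split the horizon into $n$ blocks of $m=\lfloor T/n\rfloor\ge1$ losses, one block per coordinate; pad any leftover rounds with the zero loss. In block $i$, take $k_i$ copies of $\ones\{i\in S\}$ and $m-k_i$ copies of $\ones\{i\notin S\}$. The total loss then decomposes as $\sum_i$ (loss of coordinate $i$). Hence $\OPT(M)=\sum_i\min\{k_i,m-k_i\}\le nm/2\le T/2$, and the learner's loss also decomposes coordinatewise.

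The argument is randomized over multisets. We draw each $k_i\sim\mathrm{Bin}(m,1/2)$ independently and show that the expected regret over this prior is at least $c\sqrt{nT}$. Some fixed $M$ then attains at least this value, and since $\OPT(M)\le T$, the bound $c\sqrt{nT}\ge c\sqrt{n\OPT(M)}$ follows.

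The key observation is that, under a uniformly random order, drawing $k_i\sim\mathrm{Bin}(m,1/2)$ and then randomly ordering the multiset is the same as revealing an i.i.d.\ sequence. Concretely, the combined sequence has the same law as the following process. Draw the block assignment of each position as a uniformly random arrangement with $m$ positions per coordinate. Then draw each loss's orientation (penalize-in or penalize-out) as an independent fair coin.

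Conditional on the past, the next loss's orientation is a fresh fair coin, independent of the learner's action. So the learner's expected loss is exactly $1/2$ per non-padding round, giving $\E[\text{learner loss}]=nm/2$. Meanwhile,
\[
\E[\OPT]=\sum_i \E\min\{k_i,m-k_i\}=n\bigl(m/2-\E|k_i-m/2|\bigr)\le n\bigl(m/2-c'\sqrt m\bigr),
\]
using the standard anticoncentration $\E|\mathrm{Bin}(m,1/2)-m/2|\ge c'\sqrt m$ for $m\ge1$. Therefore the expected regret is at least $c'n\sqrt m\ge c''\sqrt{nT}$, which uses $n\le T$ so that $m\ge T/(2n)$.

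The main obstacle is the conditioning step. The regret is defined for a fixed $M$ with the expectation over $\pi$ and $\mathsf{Alg}$. Averaging over the binomial prior turns the random-order model into an i.i.d.\ fair-coin model, and I must verify that the exchangeability is exact: a binomial mixture of uniformly permuted multisets is an i.i.d.\ sequence. This holds by de Finetti/direct counting for each coordinate's sign sequence given its positions. Also the block-position arrangement must carry no information about orientations, which holds because positions are assigned independently of the coins.

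A secondary point is degeneracy. If $\OPT(M)$ is very small for the chosen $M$, the inequality still holds because we used $\OPT(M)\le T$. The averaging argument only requires some realization with regret $\ge c\sqrt{nT}$, and that realization automatically satisfies $\Reg_T(M)\ge c\sqrt{n\OPT(M)}$.
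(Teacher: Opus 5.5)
Your proposal is correct and follows essentially the same route as the paper. Both use the Hazan--Kale instance with $m\approx T/n$ single-coordinate modular losses per coordinate and independent fair orientations; your $k_i\sim\mathrm{Bin}(m,1/2)$ is just the count of the paper's signs $\sigma_{i,j}=+1$. Both then argue that the next orientation is a fresh fair coin, so the learner pays exactly $1/2$ per round, bound $\E[\OPT]$ by Khinchine/anticoncentration, and average over the signs with the pointwise bound $\OPT(M)\le T/2$ to extract a fixed multiset. The differences are cosmetic: you pad with zero losses where the paper mixes $\lfloor T/n\rfloor$ and $\lceil T/n\rceil$ copies, and you take a realization at least the mean where the paper argues by contradiction. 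Your explicit counting check of the exchangeability step is a useful addition, since the paper only asserts it.
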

\wraprestatablewithlabelrestore{sqrtlowerboundtheorem}
\begin{proof}
We use the following instance construction of \citet[Theorem~14]{Hazan2012-hc} for establishing the $\Omega(\sqrt{nT})$ lower bound for online submodular function minimization.

Assume first that $n$ divides $T$, and put $m=T/n$; the general case follows by using $\lfloor T/n\rfloor$ and $\lceil T/n\rceil$ occurrences for each coordinate, which changes only constants.
Let $V$ be a ground set of size~$n \le T$.
For each coordinate $i\in[n]$ and each copy $j\in[m]$, draw an independent Rademacher sign $\sigma_{i,j}\in\set*{-1,+1}$ and define the submodular loss
\[
  \ell_{i,j}(S)
  =
  \begin{cases}
    \tfrac12(1+\sigma_{i,j}), & i\in S,\\
    \tfrac12(1-\sigma_{i,j}), & i\notin S.
  \end{cases}
\]
Let $M_\sigma$ be the balanced multiset containing these $m$ losses for every coordinate, and reveal $M_\sigma$ in a uniformly random order.
For any online learner, the sign of the next revealed loss is independent of the learner's current choice, and therefore
\[
  \E_{\sigma,\pi,\mathsf{Alg}}\brc*{\sum_{t=1}^T\ell_t(S_t)}
  =
  \frac{T}{2},
\]
where $\pi$ denotes the random order and $\mathsf{Alg}$ denotes the learner's internal randomness.
On the other hand, for each fixed sign realization $\sigma$, by including $i$ in $S$ if and only if $\sum_{j=1}^m\sigma_{i,j}\le0$, we can achieve
\[
  \OPT(M_\sigma)
  =
  \frac{1}{2}\prn*{
    T - \sum_{i=1}^n \abs*{\sum_{j=1}^m \sigma_{i,j}}
  },
\]
and thus $\OPT(M_\sigma) \le T/2$ always holds.
By Khinchine's inequality, we have
\[
  \E_{\sigma}\brc*{\Reg_T(M_\sigma)}
  =
  \E_{\sigma,\pi,\mathsf{Alg}}\brc*{\sum_{t=1}^T \ell_t(S_t)}
  -
  \E_{\sigma}\brc*{\OPT(M_\sigma)}
  =
  \frac{1}{2} \sum_{i=1}^n \E_{\sigma}\brc*{\abs*{\sum_{j=1}^m \sigma_{i,j}}}
  =
  \Omega\prn*{
    \sqrt{nT}
  }.
\]
Since $\sqrt{T} \ge \sqrt{2\OPT(M_\sigma)}$ holds pointwise, we have $\sqrt{T} \ge \E_\sigma[\sqrt{2\OPT(M_\sigma)}]$, hence
\[
  \E_{\sigma}\brc*{\Reg_T(M_\sigma)}
  =
  \Omega\prn*{\E_\sigma\brc*{\sqrt{n\OPT(M_\sigma)}}}.
\]
Let $c_0>0$ be the universal constant implicit in the preceding lower bound, and choose $c<c_0$. If every realization of the balanced multiset satisfied
\[
  \Reg_T(M_\sigma)<c\sqrt{n\OPT(M_\sigma)},
\]
then averaging over $\sigma$ would contradict the preceding lower bound.
Hence, for every learner, some fixed multiset $M_\sigma$ satisfies
\[
  \Reg_T(M_\sigma)\ge c\sqrt{n\OPT(M_\sigma)}.
\]
Therefore, for every learner, some fixed realization $M_\sigma$ satisfies the claimed lower bound.
\end{proof}
\end{document}